\DeclareMathAlphabet{\mathpzc}{OT1}{pzc}{m}{it}
\newtheorem{theorem}{Theorem}
\newtheorem{definition}{Definition} 
\newtheorem{lemma}{Lemma}
\newtheorem{remark}{Remark}
\newtheorem{proposition}{Proposition}
\newtheorem{assumption}{Assumption}
\newtheorem*{theorem*}{Theorem}
\newtheorem*{example*}{Example} 
\newtheorem*{definition*}{Definition}
\newtheorem*{lemma*}{Lemma}
\newtheorem*{remark*}{Remark}
\newtheorem*{corollary*}{Corollary}
\newtheorem*{proposition*}{Proposition}
\newtheorem*{assumption*}{Assumption}
\newtheorem*{claim*}{Claim}
\newtheoremstyle{TheoremNum}
        {\topsep}{\topsep}              
        {\itshape}                      
        {}                              
        {\bfseries}                     
        {.}                             
        { }                             
        {\thmname{#1}\thmnote{ \bfseries #3}}
\theoremstyle{TheoremNum}
\newtheorem{theoremn}{Theorem}
\newtheoremstyle{LemmaNum}
        {\topsep}{\topsep}              
        {\itshape}                      
        {}                              
        {\bfseries}                     
        {.}                             
        { }                             
        {\thmname{#1}\thmnote{ \bfseries #3}}
\theoremstyle{LemmaNum}
\newtheorem{lemman}{Lemma}
\newtheoremstyle{PropositionNum}
        {\topsep}{\topsep}              
        {\itshape}                      
        {}                              
        {\bfseries}                     
        {.}                             
        { }                             
        {\thmname{#1}\thmnote{ \bfseries #3}}
\theoremstyle{PropositionNum}
\newtheorem{propositionn}{Proposition}
\newtheoremstyle{RemarkNum}
        {\topsep}{\topsep}              
        {\itshape}                      
        {}                              
        {\bfseries}                     
        {.}                             
        { }                             
        {\thmname{#1}\thmnote{ \bfseries #3}}
\theoremstyle{RemarkNum}
\newtheorem{remarkn}{Remark}
\newcommand{\PsiBound}{  
      \max \left\{ \log (  T^2 / \epsilon) , 2 \log_2 ( 1/ \eta )  \right\}
}
\newcommand{\BoundHoeffding}[2]{   \frac{ \left( {\Psi}  \hspace{-2pt} + \hspace{-2pt} {D_{\mathpzc{E}} } \right) \hspace{-2pt} \sqrt{ \HoeffTSq }  }{ \sqrt{ \widetilde{n}_{#1} ( #2 )  } }  }
\newcommand{\BoundMeasure}[1]{  \log  \left( {#1} \right)  }
\newcommand{ \maxHitNumber }[1]{ \frac{2 (\Psi \hspace{-.8pt} + \hspace{-.8pt} D_{\mathpzc{E}} )^2 \log (2 T^2 / \epsilon) }{ \alpha^2 \left[ \log \left(  {#1} \right) \right]^2 } } 
\newcommand{\BoundMeasurecube}[1]{   \log ( \mu ( #1 ) / \eta ) }
\newcommand{\AverageValue}[1]{ \left< f \right>_{#1} }
\newcommand{\Q}{ \mathcal{Q} }
\newcommand{\crazysum}{\sum_{ i = 0 }^ {I-1} }
\newcommand{\HoeffTSq}{2 \log (2 T^2/ \epsilon)}
\newcommand{\h}{\hspace*{-1pt}}
\newcommand{\F}{\mathcal{F}}
\newcommand{\pgftextcircled}[1]{
    \setbox0=\hbox{#1}%
    \dimen0\wd0%
    \divide\dimen0 by 2%
    \begin{tikzpicture}[baseline=(a.base)]%
        \useasboundingbox (-\the\dimen0,0pt) rectangle (\the\dimen0,1pt);
        \node[circle,draw,outer sep=0pt,inner sep=0.1ex] (a) {#1};
    \end{tikzpicture}
}
\icmltitlerunning{Bandits for BMO Functions}
\begin{document}

\twocolumn[
\icmltitle{Bandits for BMO Functions}

\begin{icmlauthorlist}
\icmlauthor{Tianyu Wang}{tianyu}
\icmlauthor{Cynthia Rudin}{tianyu}
\end{icmlauthorlist}

\icmlcorrespondingauthor{Tianyu Wang}{tianyu@cs.duke.edu}

\icmlaffiliation{tianyu}{Department of Computer Science, Duke University, Durham, NC, USA}

\icmlkeywords{Machine Learning, ICML}

\vskip 0.3in
]

\printAffiliationsAndNotice{}

\begin{abstract} 
We study the bandit problem where the underlying expected reward is a Bounded Mean Oscillation (BMO) function. BMO functions are allowed to be discontinuous and unbounded, and are useful in modeling signals with infinities in the domain. 
We develop a toolset for BMO bandits, and provide an algorithm that can achieve poly-log $\delta$-regret -- a regret measured against an arm that is optimal after removing a $\delta$-sized portion of the arm space. 
\end{abstract}

\section{Introduction} \label{introduction}

Multi-Armed Bandit (MAB) problems model sequential decision making under uncertainty. Algorithms for this problem have important real-world applications including medical trials \citep{robbins1952some} and web recommender systems \citep{li2010contextual}. While bandit methods have been developed for various settings, one problem setting that has not been studied, to the best of our knowledge, is when the expected reward function is a Bounded Mean Oscillation (BMO) function in a metric measure space. 
Intuitively, a BMO function does not deviate too much from its mean over any ball, and can be discontinuous or unbounded. 

Such unbounded functions can model many real-world quantities. Consider the situation in which we are optimizing the parameters of a process (e.g., a physical or biological system) whose behavior can be simulated. The simulator is computationally expensive to run, which is why we could not exhaustively search the (continuous) parameter space for the optimal parameters. The ``reward'' of the system is sensitive to parameter values and can increase very quickly as the parameters change. In this case, by failing to model the infinities, even state-of-the-art continuum-armed bandit methods fail to compute valid confidence bounds, potentially leading to underexploration of the important part of the parameter space, and they may completely miss the optima. 

As another example, when we try to determine failure modes of a system or simulation, we might try to locate singularities in the variance of its outputs. These are cases where the variance of outputs becomes extremely large. In this case, we can use a bandit algorithm for BMO functions to efficiently find where the system is most unstable. 

There are several difficulties in handling BMO rewards. First and foremost, due to unboundedness in the \textit{expected} reward functions, traditional regret metrics are doomed to fail. To handle this, we define a new performance measure, called $\delta$-regret. 
The $\delta$-regret measures regret against an arm that is optimal after removing a $\delta$-sized portion of the arm space. 
Under this performance measure, and because the reward is a BMO function, our attention is restricted to a subspace on which the expected reward is finite. 
Subsequently, strategies that conform to the $\delta$-regret are needed. 

To develop a strategy that handles $\delta$-regret, we leverage the John-Nirenberg inequality, which plays a crucial role in harmonic analysis. We construct our arm index using the John-Nirenberg inequality, in addition to a traditional UCB index. In each round, we play an arm with highest index. As we play more and more arms, we focus our attention on regions that contain good arms. To do this, we discretize the arm space adaptively, and carefully control how the index evolves with the discretization. We provide two algorithms -- \textit{Bandit-BMO-P} and \textit{Bandit-BMO-Z}. They discretize the arm space in different ways. In \textit{Bandit-BMO-P}, we keep a strict partitioning of the arm space. In \textit{Bandit-BMO-Z}, we keep a collection of cubes where a subset of cubes form a discretization. \textit{Bandit-BMO-Z} 
achieves poly-log $\delta$-regret with high probability.

\section{Related Works} 
Bandit problems in different settings have been actively studied since as far back as \citet{thompson1933likelihood}. 
\textit{Upper confidence bound} (UCB) algorithms remain popular \citep{robbins1952some, lai1985asymptotically, auer2002using} among the many approaches for (stochastic) bandit problems \citep[e.g., see][]{srinivas2009gaussian, abbasi2011improved, agrawal2012analysis, bubeck2012best, seldin2014one}. Various extensions of upper confidence bound algorithms have been studied. Some works use KL-divergence to construct the confidence bound \citep{lai1985asymptotically, garivier2011kl, maillard2011finite}, and some works include variance estimates within the confidence bound \citep{audibert2009exploration, auer2010ucb}. UCB is also used in the contextual setting \citep[e.g.,][]{li2010contextual, krause2011contextual, slivkins2014contextual}.

Perhaps Lipschitz bandits are closest to BMO bandits. The Lipschitz bandit problem was termed ``continuum-armed bandits'' in early stages \citep{agrawal1995continuum}. In ``continuum-armed bandits,'' arm space is continuous -- e.g., $[0,1]$. Along this line, bandits that are Lipschitz continuous (or H\"older continuous) have been studied. In particular, \citet{kleinberg2005nearly} proves a $\Omega (T^{2/3})$ lower bound and proposes a $\widetilde{ \mathcal{O}} \left( T^{2/3} \right)$ algorithm. Under other extra conditions on top of Lipschitzness, regret rate of $\widetilde{\mathcal{O}} (T^{1/2})$ was achieved \citep{cope2009regret, auer2007improved}. 
For general (doubling) metric spaces, the Zooming bandit algorithm \citep{kleinberg2008multi} and Hierarchical Optimistic Optimization algorithm \citep{bubeck2011x} were developed. 
In more recent years, some attention has been given to Lipschitz bandit problems with certain extra conditions. To name a few, \citet{bubeck2011lipschitz} study Lipschitz bandits for differentiable rewards, which enables algorithms to run without explicitly knowing the Lipschitz constants. The idea of robust mean estimators \citep{bubeck2013bandits, bickel1965some, alon1999space} was applied to the Lipschitz bandit problem to cope with heavy-tail rewards, leading to the development of a near-optimal algorithm \citep{lu2019optimal}. Lipschitz bandits with an unknown metric, where a clustering is used to infer the underlying unknown metric, has been studied by \citet{christina2019nonparametric}. Lipschitz bandits with discontinuous but bounded rewards were studied by \citet{pmlr-v99-krishnamurthy19a}. 
 
An important setting that is beyond the scope of the aforementioned works is when the expected reward is allowed to be unbounded. This setting breaks the previous Lipschitzness assumption or ``almost Lipschitzness'' assumption \citep{pmlr-v99-krishnamurthy19a}, which may allow discontinuities but require boundedness. To the best of our knowledge, this paper is the first work that studies the bandit learning problem for BMO functions.

\section{Preliminaries} \label{sec:preliminaries}
We review the concept of (rectangular) Bounded Mean Oscillation (BMO) in Euclidean space \citep[e.g.,][]{fefferman1979bounded, stein1993harmonic}.

\begin{definition} (BMO Functions)
\label{def:rbmo}
Let $(\mathbb{R}^d, \mu)$ be the Euclidean space with the Lebesgue measure. Let $L^1_{loc} (\mathbb{R}^d, \mu)$ denote the space of measurable functions (on $\mathbb{R}^d$) that are locally integrable with respect to $\mu$. A function $f \in L^1_{loc} (\mathbb{R}^d, \mu)$ is said to be a Bounded Mean Oscillation function, $ f \in BMO( \mathbb{R}^d, \mu)$, if there exists a constant $C_f$, such that for any hyper-rectangles $Q \subset \mathbb{R}^d$, 
\begin{align}
    &\frac{1}{\mu (Q)} \int_Q |f - \left< f \right>_Q| d\mu \le C_f, 
\quad \left< f \right>_Q := \frac{ \int_Q f d\mu }{\mu(Q)} . \label{eq:def<>}
\end{align}
For a given such function $f$, the infimum of the admissible constant $C_f$ over all hyper-rectangles $Q$ is denoted by $\|f\|_{BMO}$, or simply $\|f\|$. We use $\|f\|_{BMO}$ and $\|f\|$ interchangeably in  this paper. 

\end{definition}

A BMO function can be discontinuous and unbounded. 
The function in Figure \ref{fig:delta-regret} illustrates the singularities a BMO function can have over its domain. Our problem is most interesting when multiple singularities of this kind occur. 
%

To properly handle the singularities, we will need the John-Nirenberg inequality (Theorem \ref{thm:john-nirenberg}), which plays a central role in our paper. 


%
\begin{theorem}[John-Nirenberg inequality] \label{thm:john-nirenberg}
    Let $\mu$ be the Lebesgue measure.
    Let $f\in BMO \left(\mathbb{R}^{d}, \mu \right)$. Then there exists constants $C_1$ and $C_2$, such that, for any hypercube $q \subset \mathbb{R}^{d}$ and any $\lambda > 0$, 
\begin{align}
     \mu \hspace{-2pt} \left( \hspace{-1pt} \left\{ \hspace{-1pt} x \hspace{-1pt} \in \hspace{-1pt} q: \hspace{-1pt} \left|f \hspace{-1pt} (x) \hspace{-1pt} - \hspace{-1pt} \left<f \right>_{q}\right| \hspace{-1pt} > \hspace{-1pt} \lambda \hspace{-1pt} \right\} \hspace{-1pt} \right) 
    \hspace{-1pt} \leq \hspace{-1pt} C_1 { \mu ( q ) } \exp \left\{ \hspace{-1pt} \frac{ - \lambda }{ C_2 \|f\| } \hspace{-1pt} \right\} \hspace{-2pt}. \label{eq:john-nirenberg} 
\end{align}
\end{theorem}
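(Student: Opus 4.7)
The plan is to prove this via the classical Calder\'on--Zygmund stopping-time argument. Write $g := f - \left< f \right>_q$ on the reference hypercube $q$, so that $\left< g \right>_q = 0$ and $\frac{1}{\mu(q)} \int_q |g|\, d\mu \le \|f\|$. Define the extremal quantity
\begin{equation*}
    F(\lambda) := \sup_{q \text{ hypercube}} \frac{1}{\mu(q)} \mu\bigl( \{ x \in q : |f(x) - \left<f\right>_q| > \lambda \} \bigr),
\end{equation*}
and the goal becomes showing that $F(\lambda) \le C_1 \exp\{-\lambda / (C_2 \|f\|)\}$ for dimensional constants $C_1, C_2$; the inequality in the statement then follows by multiplying through by $\mu(q)$.

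Next, I would perform a Calder\'on--Zygmund decomposition of $|g|$ inside $q$ at height $\alpha := 2^d \|f\|$. Starting from $q$ (where the average of $|g|$ is at most $\|f\| < \alpha$), bisect dyadically and place a child $q'$ in the stopping collection precisely when $\frac{1}{\mu(q')}\int_{q'}|g|\, d\mu > \alpha$ for the first time along its dyadic ancestry. Standard properties of this construction yield a pairwise-disjoint family $\{q_j\}$ of sub-hypercubes such that (i) $\alpha < \frac{1}{\mu(q_j)}\int_{q_j} |g|\, d\mu \le 2^d \alpha$, which combined with the triangle inequality gives $|\left<f\right>_{q_j} - \left<f\right>_q| \le 2^d \alpha$; (ii) $|g(x)| \le \alpha$ for Lebesgue-a.e.\ $x \in q \setminus \bigcup_j q_j$ by the Lebesgue differentiation theorem; and (iii) $\sum_j \mu(q_j) \le \frac{1}{\alpha} \int_q |g|\, d\mu \le \frac{\|f\|}{\alpha}\mu(q) = 2^{-d} \mu(q)$.

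Then I would iterate. For any $\lambda > 2^d \alpha$, the set $\{x \in q : |g(x)| > \lambda\}$ lies (up to a null set) inside $\bigcup_j q_j$, and within each $q_j$ the triangle inequality gives $|f(x) - \left<f\right>_{q_j}| > \lambda - 2^d \alpha$. Therefore
\begin{equation*}
    \mu\bigl(\{x\in q : |g(x)| > \lambda\}\bigr) \le \sum_j F(\lambda - 2^d \alpha)\, \mu(q_j) \le 2^{-d} F(\lambda - 2^d \alpha)\, \mu(q),
\end{equation*}
so $F(\lambda) \le 2^{-d} F(\lambda - 2^d \alpha)$. Unrolling this $k$ times with $k = \lfloor \lambda / (2^d \alpha) \rfloor$ and using the trivial bound $F \le 1$ for $\lambda \le 2^d\alpha$ produces the desired exponential decay, with $C_2$ proportional to $2^{2d}$ and $C_1$ absorbing the leading constant from the base case.

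The main obstacle is ensuring that the stopping-time subcubes $q_j$ are genuine hypercubes so that the supremum defining $F$ applies at the inductive step: this is why I bisect dyadically in all $d$ coordinates simultaneously rather than decomposing against arbitrary rectangles. A secondary subtlety is that the BMO norm in Definition~\ref{def:rbmo} is taken over hyper-rectangles, which is at least as strong as the corresponding cube-based norm, so property (iii) of the decomposition (the packing bound via the averaged $L^1$ control of $g$ on $q$) is automatic. Everything else -- the triangle-inequality step controlling $\left<f\right>_{q_j} - \left<f\right>_q$, the a.e.\ bound off the stopping cubes, and the recursive exponential decay -- is routine once the decomposition is in place.
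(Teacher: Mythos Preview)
Your proposal is correct and follows essentially the same Calder\'on--Zygmund stopping-time argument as the paper: both define the extremal superlevel function $F(\lambda)$ (the paper's $\varphi$), perform a dyadic decomposition of $q$ relative to $|f - \langle f\rangle_q|$, obtain a recursion of the form $F(\lambda) \le c\, F(\lambda - 2^d\alpha)$, and iterate. The only cosmetic difference is the stopping height---you fix $\alpha = 2^d\|f\|$ while the paper leaves the height as a parameter and sets it to $e$ at the end---which yields $C_2 \propto 2^{2d}$ for you versus $C_2 = e\,2^d$ in the paper, but the structure is identical.
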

The John-Nirenberg inequality dates back to at least \citet{john1961rotation}, and a proof is provided in Appendix \ref{app:jn}. 

As shown in Appendix \ref{app:jn}, $C_1 = e$ and $C_2 = e2^d$ provide a pair of legitimate $C_1, C_2$ values. However, this pair of $C_1$ and $C_2$ values may be overly conservative. Tight values of $C_1$ and $C_2$ are not known in general cases \citep{lerner2013john, slavin2017john}, and it is also conjectured that $C_2$ and $C_1$ might be independent of dimension \citep{cwikel2012new}. For the rest of the paper, we use $\| f \| = 1$, $C_1 = 1$, and $C_2 = 1$, which permits cleaner proofs. Our results generalize to cases where $C_1$, $C_2$ and $\| f \|$ are other constant values. 

In this paper, we will work in Euclidean space with the Lebesgue measure. For our purpose, Euclidean space is as general as doubling spaces, since we can always embed a doubling space into a Euclidean space with some distortion of metric. This fact is formally stated in Theorem \ref{thm:embedding}. 

\begin{theorem}\citep{assouad1983plongements}. 
    \label{thm:embedding}
    Let $(X, d)$ be a doubling metric space and $\varsigma \in (0,1)$. Then $(X, d^\varsigma)$ admits a bi-Lipschitz embedding into $\mathbb{R}^n$ for some $n \in \mathbb{N}$. 
\end{theorem}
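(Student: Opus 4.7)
The plan is to give a constructive proof using the Assouad multi-scale net/coloring technique. Write $\lambda$ for the doubling constant of $(X,d)$. For each integer $k \in \mathbb{Z}$, I would first fix a maximal $2^{-k}$-separated subset $N_k \subset X$ (so the balls of radius $2^{-k-1}$ around points of $N_k$ are pairwise disjoint, and every point of $X$ lies within $2^{-k}$ of some point of $N_k$). The doubling property then implies that any ball of radius $R\cdot 2^{-k}$ meets at most $M = M(\lambda, R)$ points of $N_k$.

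Fix a large radius parameter $R$ (to be chosen in the lower-bound step). Using the bound $M$, I would color each $N_k$ with at most $M$ colors so that any two distinct points of $N_k$ within distance $R\cdot 2^{-k}$ receive different colors; this is a finite-degree graph coloring. Next, to keep the target dimension finite, I would reduce scales modulo a large integer $T$, setting the coordinate index set to $\{0,1,\dots,T-1\}\times\{1,\dots,M\}$, and define, for each residue class $r \in \{0,\dots,T-1\}$ and color $c$,
\begin{equation*}
\Phi_{r,c}(x) \;=\; \sum_{k \equiv r \;(\mathrm{mod}\; T)} (-1)^{k} \, 2^{-k\varsigma}\, \psi\!\left(\tfrac{d(x, N_k^{c})}{2^{-k}}\right),
\end{equation*}
where $N_k^c$ denotes the points of $N_k$ of color $c$ and $\psi:[0,\infty)\to[0,1]$ is a fixed $1$-Lipschitz cutoff with $\psi(0)=0$ and $\psi=1$ on $[1,\infty)$. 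The alternating sign $(-1)^k$ is the classical trick that prevents cancellations across consecutive active scales within one residue class. Setting $\Phi(x)=(\Phi_{r,c}(x))_{r,c}$ gives a map into $\mathbb{R}^{n}$ with $n = TM$, where $T$ and $M$ depend only on $\lambda$ and $\varsigma$.

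For the upper Lipschitz bound, given $x,y$ with $d(x,y) = \rho$, each coordinate is a sum of $1$-Lipschitz functions with weights $2^{-k\varsigma}$, and for scales $2^{-k} \leq \rho$ the trivial bound $|\psi(\cdot)|\le 1$ together with the geometric series controls the tail, while for scales $2^{-k} > \rho$ the Lipschitz property gives a contribution bounded by $2^{-k\varsigma} \cdot \rho/2^{-k}$; summing yields $\|\Phi(x)-\Phi(y)\|\lesssim \rho^{\varsigma}$, which is where $\varsigma<1$ is essential (both tails converge geometrically). The main obstacle, and the delicate part, is the lower bound. Given $x,y$, choose $k_0$ with $2^{-k_0}\asymp d(x,y)$ and pick $p\in N_{k_0}$ with $d(x,p)\le 2^{-k_0}$; then $\psi(d(x,N_{k_0}^{c(p)})/2^{-k_0}) = 0$ while, if $R$ was chosen large enough so that the next-nearest same-colored point of $N_{k_0}$ is at distance $>R\cdot 2^{-k_0}$ from $p$, we have $\psi(d(y,N_{k_0}^{c(p)})/2^{-k_0})\gtrsim 1$. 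This supplies a gap of order $2^{-k_0\varsigma}\asymp d(x,y)^\varsigma$ at scale $k_0$ in coordinate $(k_0 \bmod T, c(p))$, and one must show the other scales in the same residue class cannot cancel it; the alternating signs plus the decay $\sum_{j\ge 1} 2^{-jT\varsigma}<1/2$ for $T$ large (together with the truncation $\psi\le 1$ at coarser scales) makes the surviving contribution at least a constant fraction of $d(x,y)^\varsigma$. Collecting both bounds gives the desired bi-Lipschitz embedding of $(X,d^\varsigma)$ into $\mathbb{R}^n$.
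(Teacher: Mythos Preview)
The paper does not supply a proof of this theorem: it is cited as a classical result of Assouad and used as a black box, with no argument given in the main text or appendices. So there is no paper proof to compare your proposal against.

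Your plan is the standard Assouad multi-scale construction, and the upper-bound half is fine. The lower bound, however, has a gap as written. From $d(x,p)\le 2^{-k_0}$ you only get $\psi\bigl(d(x,N_{k_0}^{c(p)})/2^{-k_0}\bigr)\in[0,1]$, not $=0$; with your cutoff convention ($\psi(0)=0$, $\psi\equiv 1$ on $[1,\infty)$) the $x$-value and the $y$-value at scale $k_0$ can both equal $1$ (this happens whenever $d(x,p)$ is close to $2^{-k_0}$), so no definite gap is forced at that scale. The usual remedy is to flip the convention and use a bump $\phi$ with $\phi(0)=1$ and $\phi\equiv 0$ on $[a,\infty)$ for some fixed $a>1$ (e.g.\ $\phi(t)=\max(0,1-t/2)$), building the coordinate from $\sum_{p\in N_k^{c}}\phi\bigl(d(\cdot,p)/2^{-k}\bigr)$; the coloring guarantees that at most one summand is nonzero near any given point. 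Then, choosing $k_0$ with $2^{-k_0}$ a fixed small multiple of $d(x,y)$ gives $\phi\bigl(d(x,p)/2^{-k_0}\bigr)\ge 1/2$ while $\phi\bigl(d(y,p)/2^{-k_0}\bigr)=0$, producing a gap of order $2^{-k_0\varsigma}\asymp d(x,y)^{\varsigma}$. With that adjustment, the remainder of your outline (bounded coloring from doubling, residue-class grouping modulo $T$, alternating signs, and choosing $T$ large so the geometric tail $\sum_{j\ge 1}2^{-jT\varsigma}<1/2$ cannot cancel the main term) goes through.
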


In a doubling space, any ball of radius $\rho$ can be covered by $M_d$ balls of radius $\frac{\rho}{2}$, where $M_d$ is the \textbf{\textit{doubling constant}}. In the space $(\mathbb{R}^d$,$ \| \cdot \|_{\infty})$, the doubling constant $M_d$ is $2^d$. In domains of other geometries, the doubling constant can be much smaller than exponential. Throughout the rest of the paper, we use $M_d$ to denote the doubling constant. 


\section{Problem Setting: BMO Bandits} \label{sec:setting}
The goal of a stochastic bandit algorithm is to exploit the current information, and explore the space efficiently. 
In this paper, we focus on the following setting: a payoff function is defined over the arm space $( [0,1)^d, \|\cdot\|_{\max}, \mu)$, where $\mu$ is the Lebesgue measure (note that $[0,1)^d$ is a Lipschitz domain). The payoff function is: 
\begin{align}
    f: [0,1)^d \rightarrow \mathbb{R} \quad \text{where} \quad f \in BMO ([0,1)^d, \mu). \label{eq:baby-def}
\end{align}
The actual observations are given by $y(a) = f(a) + \mathpzc{E}_a$, where $ \mathpzc{E}_a $ is a zero-mean noise random variable whose distribution can change with $a$. We assume that for all $a$, $ | \mathpzc{E}_a | \le D_\mathpzc{E}$ almost surely for some constant $D_\mathpzc{E}$ (\textbf{N1}). Our results generalize to the setting with sub-Gaussian noise \citep{shamir2011variant}. We also assume that the expected reward function does not depend on noise. 


In our setting, an agent is interacting with this environment in the following fashion. At each round $t$, based on past observations $(a_1, y_1, \cdots, a_{t-1}, y_{t-1} )$, the agent makes a query at point $a_t$ and observes the (noisy) payoff $y_t$, where $y_t$ is revealed only after the agent has made a decision $a_t$.
For a payoff function $f$ and an arm sequence $a_1, a_2, \cdots, a_T$, we use $\delta$-regret incurred up to time $T$ as the performance measure (Definition \ref{def:regret}). 


\begin{definition} ($\delta$-regret)  \label{def:regret}
    Let $f \in BMO ( [0,1)^d, \mu )$. 
    A number $\delta \ge 0$ is called $f$-\texttt{admissible} if there exists a real number $z_0$ that satisfies 
    \begin{align}
        \mu ( \{a \in [0,1)^d: f(a) > z_0 \} ) = \delta. 
    \end{align} 
    For an $f$-admissible $\delta$, define the set $ F^\delta $ to be 
    \begin{align} 
        F^\delta :=  \left\{ z \in \mathbb{R} :  \mu ( \{a \in [0,1)^d: f(a) > z \} ) = \delta \right\}. \label{eq:def-F-epsilon} 
    \end{align} 
    Define
	$
        f^\delta :=  \inf F^\delta. 
    $
    For a sequence of arms $A_1, A_2, \cdots$, and $\sigma$-algebras $\mathcal{F}_1, \mathcal{F}_2, \cdots$ where $\mathcal{F}_t$ describes all randomness before arm $A_t$, define the $\delta $-regret at time $t$ as 
    \begin{align}
        r_t^\delta := \max \{ 0, f^\delta - \mathbb{E}_t [  f (A_t) ] \}, \label{eq:def-epsilon-regret}
    \end{align}
    where $\mathbb{E}_t$ is the expectation conditioned on $\mathcal{F}_t$.
    The total $\delta $-regret up to time $T$ is then $R_T^\delta := \sum_{t=1}^T r_t^\delta. $ 
\end{definition} 

Intuitively, the $\delta$-regret is measured against an amended reward function that is created by chopping off a small portion of the arm space where the reward may become unbounded. As an example, Figure \ref{fig:delta-regret} plots a BMO function and its $f^\delta$ value. 
\begin{figure}[ht!]
	\centering
	\includegraphics[width = 0.45\textwidth]{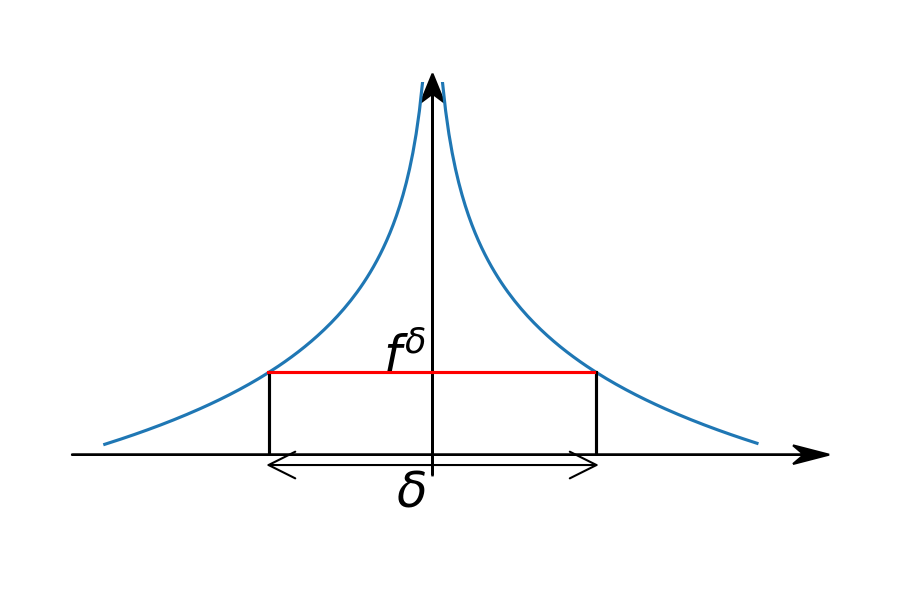}
	\caption{Graph of $ f(x) = - \log (|x|) $, with $\delta$ and $f^\delta$  annotated. This function is an unbounded BMO function. \label{fig:delta-regret}} 
\end{figure}
A problem defined as above with performance measured by $\delta$-regret is called a \textit{\textbf{BMO bandit problem}}. 

\begin{remark}
   	The definition of $\delta$-regret, or a definition of this kind, is needed for a reward function $ f 
    \in BMO([0,1)^d, \mu) $. For an unbounded BMO function $f$, the max value is infinity, while $f^\delta$ is a finite number as long as $\delta$ is $f$-admissible. 
\end{remark} 

\begin{remark}[Connection to bandits with heavy-tails]
    In the definition of bandits with heavy tails \citep{bubeck2013bandits, medina2016no, shao2018almost, lu2019optimal}, the reward distribution at a fixed arm is heavy-tail -- having a bounded expectation and bounded $(1 + \beta)$-moment ($\beta \in (0,1]$). In the case of BMO rewards, the expected reward itself can be unbounded. 
    Figure \ref{fig:delta-regret} gives an instance of unbounded BMO reward, which means the BMO bandit problem is not covered by settings of bandits with heavy tails. 
\end{remark} 

A quick consequence of the definition of $\delta$-regret is the following lemma. This lemma is used in the regret analysis when handling the concentration around good arms. 

\begin{lemma} \label{lem:delta-regret-set-measure}
    Let $f$ be the reward function. For any $f$-admissible $\delta \ge 0$, let $ S^\delta := \left\{ a \in [0,1)^d : f(a) > f^\delta \right\}$. Then we have $S^\delta$ measurable and $\mu(S^\delta) = \delta$. 
\end{lemma}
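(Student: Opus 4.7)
The plan is to reduce everything to a single monotone-convergence argument for the distribution function $g(z) := \mu(\{a \in [0,1)^d : f(a) > z\})$.

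First, measurability of $S^\delta$ is immediate: since $f \in L^1_{loc}([0,1)^d,\mu)$, $f$ is Borel measurable, so the sublevel/superlevel set $\{f > f^\delta\}$ is a measurable subset of $[0,1)^d$. So the only substantive part is the measure equality $\mu(S^\delta) = \delta$.

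The strategy is to show that $g$ is non-increasing and right-continuous, from which $g(f^\delta) = \delta$ follows. Monotonicity is obvious: if $z \le z'$, then $\{f > z'\} \subset \{f > z\}$. For right-continuity, I take $z_n \downarrow z$ and observe that $\{f > z_n\} \uparrow \{f > z\}$ as an increasing union, so by continuity of measure from below (which applies because $\mu$ is finite on $[0,1)^d$),
\begin{equation*}
    g(z) \;=\; \mu\!\left(\bigcup_n \{f > z_n\}\right) \;=\; \lim_{n\to\infty} \mu(\{f > z_n\}) \;=\; \lim_{n\to\infty} g(z_n).
\end{equation*}

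With right-continuity in hand, the lemma follows quickly. By definition of the infimum, there is a sequence $z_n \in F^\delta$ with $z_n \downarrow f^\delta$, and each $g(z_n) = \delta$. Right-continuity of $g$ at $f^\delta$ then gives $g(f^\delta) = \lim_n g(z_n) = \delta$, i.e.\ $\mu(S^\delta) = \delta$. (If the sequence is eventually constant, equal to $f^\delta$, the conclusion is even more direct.)

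There is no real obstacle here; the only thing to watch is that $f^\delta$ itself lies in $F^\delta$ — equivalently, that the infimum is attained. This is exactly what right-continuity of $g$ buys us, and it is the reason the one-sided limit along $z_n \downarrow f^\delta$ is legitimate. The finiteness of the ambient measure $\mu([0,1)^d) = 1$ is what makes continuity from below applicable without any integrability side conditions on $f$, so no BMO structure beyond measurability of $f$ is actually used in the proof.
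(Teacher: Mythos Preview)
Your proof is correct and follows essentially the same route as the paper: both take a sequence $z_n \in F^\delta$ with $z_n \downarrow f^\delta$ and apply continuity of measure from below to the increasing union $\bigcup_n \{f > z_n\} = \{f > f^\delta\}$; your framing via right-continuity of the distribution function $g$ simply absorbs the paper's explicit case split on whether $f^\delta \in F^\delta$. One small quibble: continuity of measure from below holds for any measure, so the parenthetical ``which applies because $\mu$ is finite'' is unnecessary (finiteness is only needed for continuity from \emph{above}).
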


Before moving on to the algorithms, we put forward the following assumption. 


\begin{assumption} \label{assumption:mean-zero}
	We assume that the expected reward function $f \in BMO ([0,1)^d, \mu)$ satisfies $\left< f \right>_{[0,1 )^d} = 0$. 
\end{assumption}


Assumption \ref{assumption:mean-zero} does not sacrifice generality. 
Since $f$ is a BMO function, it is locally-integrable. Thus $\left< f \right>_{[0,1)^d}$ is finite, and we can translate the reward function up or down such that $\left< f \right>_{[0,1)^d} = 0$. 

\section{Solve BMO Bandits via Partitioning} \label{sec:algo} 
BMO bandit problems can be solved by partitioning the arm space and treating the problem as a finite-arm problem among partitions. For our purpose, we maintain a sequence of partitions using \textit{dyadic cubes}. By \textit{dyadic cubes} of $\mathbb{R}^d$, we refer to the collection of all cubes of the following form: 
\begin{align}
\mathpzc{Q}_{\mathbb{R}^d} :=
     \left\{   \Pi_{i=1}^d \left[ m_i 2^{-k} , m_i 2^{-k} + 2^{-k} \right) \right\} \label{eq:dyadic-cubes}
\end{align}
where $\Pi$ is the Cartesian product, and $m_1, \cdots, m_d, k \in \mathbb{Z}$. Dyadic cubes of $ [0,1)^d $ is $\mathpzc{Q}_{\hspace{1pt}[0,1)^d} \hspace{-2pt} := \hspace{-2pt} \left\{ q \hspace{-2pt} \in \hspace{-2pt} \mathpzc{Q}_{\hspace{1pt}\mathbb{R}^d} : q \hspace{-2pt} \subset [0,1)^d \right\} $. Dyadic cubes of $[0,1)^2$ are 
$
\{ [0,1)^2, [0,0.5)^2, [0.5,1)^2, [0.5,1) \times [0,0.5), \cdots \}. 
$
%

We say a dyadic cube $Q$ is a \textit{\textbf{direct sub-cube}} of a dyadic cube $Q'$ if $Q \subseteq Q'$ and the edge length of $Q'$ is twice the edge length of $Q$. By definition of doubling constant, for any cube $Q$, it has $M_d$ direct sub-cubes, and these direct sub-cubes form a partition of $Q$. If $ Q $ is a direct sub-cube of $Q'$, then $Q'$ is a \textit{\textbf{direct super cube}} of $Q$. 

At each step $t$, Bandit-BMO-P treats the problem as a finite-arm bandit problem with respect to the cubes in the dyadic partition at $t$; each cube possesses a confidence bound. The algorithm then chooses a best cube according to UCB, and chooses an arm uniformly at random within the chosen cube. 
Before formulating our strategy, we put forward several functions that summarize cube statistics. 






Let $\mathcal{Q}_{t}$ be the collection of dyadic cubes of $[0,1)^d$ at time $t$ ($t \ge 1$). Let $(a_1, y_1, a_2, y_2 , \cdots, a_{t }, y_{t } )$ be the observations received up to time $t$. We define
\begin{itemize}[leftmargin=15pt]
    \vspace{-0.3cm} 
    \item the \textit{cube count} $n_{t}  : \mathcal{Q}_t  \rightarrow    \mathbb{R}$, such that for $q \in \mathcal{Q}_t$
   	\vspace{-7pt}
    \begin{align}
    	n_{t} (q) &:= \sum_{i=1}^{t - 1 } \mathbb{I}_{[a_i \in q ]};  \;\;
        \widetilde{n}_{t} (q) := \max( 1, n_{t} (q) ). \label{eq:count}
        \vspace{-0.5cm}
    \end{align}
    \vspace{-0.5cm} 
    \item the \textit{cube average} $m_{t}: \mathcal{Q}_t \rightarrow \mathbb{R}$, such that for $q \in \mathcal{Q}_t$
  	\vspace{-5pt}
    \begin{align}
    m_{t}(q) := 
    \begin{cases}
    \frac{\sum_{i = 1 }^{ t - 1 } y_i \mathbb{I}_{ [a_i \in q ]} }{ n_{t} (q) } , 
    \text{ if }  n_{t}(q) > 0;  \label{eq:average}
    \\
    0, \quad \text{otherwise}. 
    \end{cases} 
    \end{align} 
    \vspace{-20pt}
\end{itemize}




At time $t$, based on the partition $\mathcal{Q}_{t-1}$ and observations $ (a_1, y_1, a_2, y_2, \cdots, a_{t-1}, y_{t-1} ) $, our bandit algorithm picks a cube (and plays an arm within the cube uniformly at random). More specifically, the algorithm picks
\begin{align}
Q_t &\in \arg \max_{ q \in \mathcal{Q}_t } U_t(q), \quad \label{eq:ucb-u} \text{where} \\
U_t(q) :=& m_{t} (q) + H_t (q) + J(q) \label{eq:ucb-index}, \\ 
H_t (q) :=& \BoundHoeffding{t}{q} , \nonumber \\
J(q) :=& \left\lfloor \BoundMeasure{  \mu (q) / \eta } \right\rfloor_+ ,\nonumber \\
\Psi :=& \PsiBound, \label{eq:def-Psi} 
\end{align} 
where $T$ is the time horizon, $D_\mathpzc{E}$ is the a.s$.$ bound on the noise, $\epsilon$ and $\eta$ are algorithm parameters (to be discussed in more detail later),
and $\left\lfloor z \right\rfloor_+ = \max\{ 0 , z \}.$
Here $ \Psi $ is the ``Effective Bound'' of the expected reward, and $\eta$ controls minimal cube size in the partition $\mathcal{Q}_t$ (Proposition \ref{prop:cube-number} in Appendix \ref{app:pf-cube-number}). 
All these quantities will be discussed in more detail as we develop our algorithm. 



After playing an arm and observing reward,
we update the partition into a finer one if needed.  
Next, we discuss our partition refinement rules and the tie-breaking mechanism. 

\textbf{Partition Refinement: }
We start with $\mathcal{Q}_0 = \{ [0,1)^d \}$. 
At time $t$, we split cubes in $ \mathcal{Q}_{t-1}$ to construct $\mathcal{Q}_{t}$ so that 
the following is satisfied for any $q \in \mathcal{Q}_t$
\begin{align}
    &H_t ( q) \ge J ( q),\quad \text{or equivalently} \nonumber \\
    &\BoundHoeffding{t}{q} \ge \left\lfloor \BoundMeasure{ \mu(q) / \eta } \right\rfloor_+ .   \label{eq:partition-rule} 
\end{align}
In (\ref{eq:partition-rule}), the left-hand-side does not decrease as we make splits (the numerator remains constant while the denominator can only decrease), while the right-hand-side decreases until it hits zero as we make more splits. Thus  (\ref{eq:partition-rule}) can always be satisfied with additional splits.

\textbf{Tie-breaking: } We break down our tie-breaking mechanism into two steps. In the first step, we choose a cube $Q_t \in \mathcal{Q}_{t-1}$ such that: 
\begin{align} 
    Q_t \in \arg \max_{q \in \mathcal{Q}_{t-1}}  \label{eq:select-best-cube}
    U_t (q) . 
\end{align}
After deciding from which cube to choose an arm, we uniformly randomly play an arm $A_t$ within the cube $Q_t$. If measure $ \mu $ is non-uniform, we play arm $A_t$, so that for any subset $S \subset Q_t$, 
    $\mathbb{P} (A_t \in S) = \frac{\mu (S)}{ \mu (Q_t) }. $

The random variables $ \{ (Q_{t^\prime}, A_{t^\prime}, Y_{t^\prime}) \}_{t^\prime} $ (\textit{cube selection}, \textit{arm selection}, \textit{reward}) describe all randomness in the learning process up to time $t$. We summarize this strategy in Algorithm \ref{alg}. Analysis of Algorithm \ref{alg} is found in Section \ref{sec:theory}, which also provides some tools for handling $\delta$-regret. Then in Section \ref{sec:zooming}, we provide an improved algorithm that exhibits a stronger performance guarantee. 

\begin{algorithm}[h!]
    \caption{Bandit-BMO-Partition (Bandit-BMO-P)}  
    \label{alg}
    \begin{algorithmic}[1] 
        \STATE Problem intrinsics: $\mu (\cdot)$, $ 
        D_{\mathpzc{E}} $, $d$, $M_d$. 
        \STATE \COMMENT{$\mu (\cdot)$ is the Lebesgue measure. $D_{\mathpzc{E}}$  bounds the noise.} 
        \STATE \COMMENT{$d$ is the dimension of the arm space.} 
        \STATE \COMMENT{$M_d$ is the doubling constant of the arm space.} 
        \STATE Algorithm parameters: $\eta > 0$, $\epsilon > 0$, $T$. 
        \STATE \COMMENT{$T$ is the time horizon. $\epsilon$ and $\eta$ are parameters.}
        \FOR{$t = 1, 2, \dots, T$}
            \STATE Let $m_{t}$ and $n_{t}$ be defined as in (\ref{eq:average}) and (\ref{eq:count}). 
            \STATE Select a cube $Q_t \in \mathcal{Q}_{t}$ such that:
            \begin{align*} 
                Q_t \in \arg \max_{q \in \mathcal{Q}_{t-1}} U_t (l),
            \end{align*}
            where $U_t$ is defined in (\ref{eq:ucb-index}).
            \STATE Play arm $A_t \in Q_t$ uniformly at random. Observe $Y_t$. 
            \STATE Update the partition $\mathcal{Q}_{t}$ to $\mathcal{Q}_{t+1}$ according to (\ref{eq:partition-rule}).
        \ENDFOR 
    \end{algorithmic}
\end{algorithm}

\subsection{Regret Analysis of Bandit-BMO-P} \label{sec:theory}
In this section we provide a theoretical guarantee on the algorithm. 
We will use capital letters (e.g., $Q_t, A_t, Y_t$) to denote random variables, and use lower-case letters (e.g. $a, q$) to denote non-random quantities, unless otherwise stated. 


\begin{theorem} \label{thm:regret-bound}
    Fix any $T$. With probability at least $ 1 - {2 \epsilon}  $, for any $\delta > |\mathcal{Q}_T| \eta$ such that $\delta$ is $f$-admissible, the total $\delta$-regret for Algorithm \ref{alg} up to time $T$ satisfies 
    \begin{align}
        \sum_{t=1}^T r_{t}^\delta  
        \lesssim_d  \widetilde{\mathcal{O}} \left( \sqrt{ T | \mathcal{Q}_T | } \right)  
         , 
    \end{align}
    where the $ \lesssim_d $ sign omits constants that depends on $d$, and $|\mathcal{Q}_T|$ is the cardinality of $\mathcal{Q}_T$. 
\end{theorem}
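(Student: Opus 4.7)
The plan is to follow the standard UCB template, with the John--Nirenberg inequality playing the role that Lipschitz continuity plays in a continuum-armed bandit analysis. I would first define two high-probability events: event (H), the uniform Hoeffding event $\{|m_t(q) - \langle f \rangle_q| \le H_t(q)\}$ holding for every round $t \le T$ and every cube $q$ ever appearing in some partition; and event (J), the John--Nirenberg event ruling out ``heavy-tail'' draws $|f(A_i) - \langle f \rangle_{q}| > \Psi$ for any sampled arm and any cube containing it. For (J), John--Nirenberg with $\lambda = \Psi \ge \log(T^2/\epsilon)$ gives $\mu(\{a \in q : |f(a) - \langle f \rangle_q| > \Psi\}) \le \mu(q)\epsilon/T^2$; combined with the uniform-in-cube sampling rule and the fact that there are at most $O_d(|\mathcal{Q}_T|) \le T$ distinct cubes and $T$ rounds, a union bound yields (J) with probability $\ge 1 - \epsilon$. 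On (J) every sample $Y_i$ in a cube $q$ is $(\Psi + D_{\mathpzc{E}})$-bounded around $\langle f \rangle_q$, so the usual Hoeffding bound applied cube-by-cube and union-bounded over distinct cubes and rounds gives (H) with probability $\ge 1 - \epsilon$, totaling $1 - 2\epsilon$.

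Next I would show that on (H), at every round $t$ there is a ``witness'' cube $q_t^\star \in \mathcal{Q}_t$ whose index already upper-bounds $f^\delta$. For each $q \in \mathcal{Q}_t$, a further application of John--Nirenberg with $\lambda = J(q)$ yields $\mu(B_q) \le \eta$ for $B_q := \{a \in q : f(a) > \langle f \rangle_q + J(q)\}$. By hypothesis, $\sum_{q \in \mathcal{Q}_t} \mu(B_q) \le |\mathcal{Q}_T|\eta < \delta = \mu(S^\delta)$ (using Lemma~\ref{lem:delta-regret-set-measure}), so $S^\delta \setminus \bigcup_q B_q$ is nonempty; any $a^\star$ in it satisfies $f^\delta < f(a^\star) \le \langle f \rangle_{q_t^\star} + J(q_t^\star)$, where $q_t^\star$ is the cube in $\mathcal{Q}_t$ containing $a^\star$. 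Combined with (H), $U_t(q_t^\star) \ge \langle f \rangle_{q_t^\star} + J(q_t^\star) > f^\delta$, and the cube-selection rule (\ref{eq:select-best-cube}) forces $U_t(Q_t) \ge U_t(q_t^\star) > f^\delta$. Using (H) in the other direction on $Q_t$ and the refinement rule (\ref{eq:partition-rule}) which enforces $J(Q_t) \le H_t(Q_t)$, plus the identity $\mathbb{E}_t[f(A_t)] = \langle f \rangle_{Q_t}$ from uniform-in-cube sampling, gives a per-round bound $r_t^\delta \le 3 H_t(Q_t)$.

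Finally I would sum the per-round bounds. Enumerating the rounds at which a fixed cube $q$ is chosen, the $k$-th such round has $\tilde{n}_t(q) \ge k - 1$ (all previous selections of $q$ placed arms inside $q$), so $\sum_{t : Q_t = q} 1/\sqrt{\tilde{n}_t(q)} = O(\sqrt{N(q)})$ where $N(q)$ counts such rounds. Because each split produces $M_d$ children from one parent, the total number of distinct cubes that ever appear is $O_d(|\mathcal{Q}_T|)$, and Cauchy--Schwarz gives $\sum_{t=1}^{T} 1/\sqrt{\tilde{n}_t(Q_t)} \le O_d(\sqrt{T|\mathcal{Q}_T|})$; absorbing the $(\Psi + D_{\mathpzc{E}})\sqrt{\log(T^2/\epsilon)}$ prefactor in $H_t$ into polylog factors yields the claimed $\widetilde{O}(\sqrt{T|\mathcal{Q}_T|})$ bound.

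The main technical obstacle is making the uniform Hoeffding event (H) rigorous, because cubes split over time: a sample $Y_i$ in the current cube $q$ may have been drawn when $q$ did not yet exist, from a uniform distribution on an ancestor $q' \supsetneq q$. The clean way around this is the observation that the uniform-on-$q'$ distribution restricted to $q$ is the uniform-on-$q$ distribution; conditioning on $\{A_i \in q\}$ then produces an i.i.d.\ uniform sample from $q$, to which the John--Nirenberg-controlled effective bound $\Psi + D_{\mathpzc{E}}$ and standard Hoeffding apply. A single union bound over the $O_d(|\mathcal{Q}_T|)$ distinct cubes and $T$ rounds then delivers (H).
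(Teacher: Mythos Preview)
Your proposal is correct and follows the same high-level route as the paper: a John--Nirenberg/Hoeffding concentration event, the existence of a witness cube with $f^\delta \le \langle f\rangle_{q_t^\star}+J(q_t^\star)$ (this is exactly Lemma~\ref{lem:exist-good-cube-arm}, proved by the contradiction you sketch), the per-round bound $r_t^\delta\le 3H_t(Q_t)$, and a Cauchy--Schwarz summation. Two technical points differ. First, the paper does not separate your events (J) and (H) but merges them via an extended Azuma inequality for martingales whose increments are bounded only with high probability (Lemma~\ref{lem:extended-azuma}), applied to the skipped martingale $\sum_i(Y_i-\langle f\rangle_q)\,\mathbb{I}[A_i\in q]$; this is the clean way to handle the ancestor-cube issue you flag, since your ``condition on $A_i\in q$ then apply i.i.d.\ Hoeffding'' move needs care (the sample count is random and conditioning on (J) can disturb the martingale property --- the paper instead truncates). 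Second, instead of your direct counting $\sum_{t:Q_t=q}1/\sqrt{\tilde n_t(q)}=O(\sqrt{N(q)})$ and Cauchy--Schwarz over the $O_d(|\mathcal{Q}_T|)$ cubes ever created, the paper proves a point-scattering bound $\sum_t 1/\tilde n_t(Q_t)\le e|\mathcal{Q}_T|\log\bigl(1+(e{-}1)T/|\mathcal{Q}_T|\bigr)$ (Lemma~\ref{lem:point-scattering}) via an auxiliary Gaussian-process mutual-information argument and then applies Cauchy--Schwarz in the form $\sum_t 1/\sqrt{\tilde n_t}\le\sqrt{T\sum_t 1/\tilde n_t}$. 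Your elementary argument reaches the same $\widetilde{\mathcal O}(\sqrt{T|\mathcal{Q}_T|})$ and is in fact slightly sharper and simpler.
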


From uniform tie-breaking, we have 
\begin{align}
    &\mathbb{E} [f(A_t) | \mathcal{F}_t] = \frac{1}{ \mu ( Q_t ) } \int_{ a \in  Q_t} f (a ) { \, d  a } = \left< f \right>_{ Q_t },  \label{eq:arm-exp-to-cube-mean} \\
    &\mathcal{F}_t = \sigma (Q_1, A_1, Y_1, \cdots, Q_{t-1}, A_{t-1}, Y_{t-1}, Q_t),  \label{eq:def-filtration}
\end{align} 
where $ \mathcal{F}_t $ is the $\sigma$-algebra generated by random variables $Q_1, A_1, Y_1, \cdots, Q_{t-1}, A_{t-1}, Y_{t-1}, Q_t$ -- all randomness right after selecting cube $Q_t$. 
At time $t$, the expected reward is the mean function value of the selected cube. 

The proof of the theorem is divided into two parts. In \textbf{Part I}, we show that some ``good event'' holds with high probability. In \textbf{Part II}, we bound the $\delta$-regret under the ``good event.'' 

\textbf{Part I: } 
For $t \le T$, and $q \in \mathcal{Q}_t$, we define 
\begin{align}
	\mathcal{E}_t (q) &:= 
	\begin{Bmatrix*}[l] 
	\left| \left< f \right >_{ q } - m_{t} ( q ) \right| 
	\le H_t ( q ) 
	\end{Bmatrix*} \label{eq:good-event}, \\
	 H_t (q) &= \BoundHoeffding{t}{q}. \label{eq:Ht-for-prob}
\end{align} 
In the above, ${\mathcal{E}}_t (q) $ is essentially saying that the empirical mean within a cube $q$ concentrates to $\left< f \right>_{q}$. 
Lemma \ref{lem:bound-on-event} shows that ${\mathcal{E}}_t (q )$ happens with high probability for any $t$ and $q$. 

\begin{lemma} \label{lem:bound-on-event}
	With probability at least $ 1 -  \frac{\epsilon}{T}  $, the event $ \mathcal{E}_t (q) $ holds for any $q \in \mathcal{Q}_t$ at any time $t$. 
\end{lemma}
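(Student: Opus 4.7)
The plan is to derive the bound $|\langle f\rangle_q - m_t(q)| \le H_t(q)$ by first using the John--Nirenberg inequality (Theorem \ref{thm:john-nirenberg}) to argue that each observed reward is effectively bounded by $\Psi + D_{\mathpzc{E}}$ around $\langle f\rangle_q$, and then applying Hoeffding's inequality to conclude concentration of the empirical cube average.

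First I would fix a dyadic cube $q$ and a count $n \ge 1$. Because Bandit-BMO-P plays arms uniformly at random inside the selected cube, and since the uniform law on any ancestor cube, restricted to $q$, is still uniform on $q$, the samples $A_i$ with $A_i \in q$ are, conditional on this membership, i.i.d.\ uniform on $q$. Applying John--Nirenberg with $\lambda = \Psi$ and the normalization $\|f\| = C_1 = C_2 = 1$ gives
\[
   \mathbb{P}\bigl(|f(A_i) - \langle f\rangle_q| > \Psi \,\big|\, A_i \in q\bigr) \le e^{-\Psi} \le \epsilon/T^2,
\]
using $\Psi \ge \log(T^2/\epsilon)$. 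A single union bound over the at most $T$ arms ever played then gives a ``good event'' $\mathcal{G}$ of probability at least $1 - \epsilon/(2T)$ on which every sampled $A_i$ satisfies this pointwise bound relative to the dyadic cube that currently contains it.

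On $\mathcal{G}$, the noisy observations $Y_i = f(A_i) + \mathpzc{E}_i$ with $A_i \in q$ lie in $[\langle f\rangle_q - (\Psi + D_{\mathpzc{E}}),\, \langle f\rangle_q + (\Psi + D_{\mathpzc{E}})]$, so I would apply Hoeffding's inequality to these $\widetilde{n}_t(q)$ bounded i.i.d.\ variables. With deviation $H_t(q) = (\Psi + D_{\mathpzc{E}})\sqrt{2\log(2T^2/\epsilon)/\widetilde{n}_t(q)}$, Hoeffding produces a per-$(t,q)$ failure probability of at most $\epsilon/T^2$, and a final union bound over the at most $T$ distinct empirical means the algorithm ever forms (each new sample changes at most one empirical average) pushes the total failure probability below $\epsilon/T$.

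The main obstacle I expect is controlling the bias introduced in the conditional mean of each sample by the John--Nirenberg truncation: the truncated tail contributes a term of order $e^{-\Psi}$, and the floor $\Psi \ge 2 \log_2(1/\eta)$ in definition \eqref{eq:def-Psi} is what keeps this bias negligible compared to $H_t(q)$ and also gives enough slack in the union bound to absorb the $O(1/\eta)$ admissible dyadic cubes. A secondary subtlety is that the samples contributing to $m_t(q)$ may physically have been drawn from an ancestor of $q$ in an earlier partition; as noted above, the uniform-restriction property preserves the conditional i.i.d.\ structure that Hoeffding requires.
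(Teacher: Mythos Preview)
Your proposal has a genuine gap in the independence claim. You assert that the samples $A_i$ with $A_i\in q$ are, ``conditional on this membership, i.i.d.\ uniform on $q$,'' and then invoke Hoeffding. This is not correct. The event $\{A_{i+1}\in q\}$ is determined by the cube choice $Q_{i+1}$, which the algorithm selects adaptively based on the past rewards $Y_1,\ldots,Y_i$. Hence conditioning on the index set $\{i:A_i\in q\}$ leaks information about earlier $Y_i$'s (and therefore about earlier $A_i$'s through $Y_i=f(A_i)+\mathpzc{E}_i$), so the retained samples are neither independent nor uniformly distributed on $q$ after this conditioning. A simple two-step example where $Q_2$ is chosen based on the sign of $Y_1$ already breaks both the uniformity of $A_1$ on $q$ and the independence of $Y_1,Y_2$ once you condition on $\{A_1\in q,\,A_2\in q\}$. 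The paper handles exactly this adaptivity by treating $\{(Y_i-\langle f\rangle_q)\mathbb{I}_{[A_i\in q]}\}_i$ as a \emph{skipped martingale difference sequence} with respect to $\mathcal{F}_i'=\sigma(Q_1,A_1,Y_1,\ldots,Q_i,A_i)$ and applying an extended Azuma inequality (Lemma~\ref{lem:extended-azuma}) that tolerates the John--Nirenberg tail event $\{|f(A_i)-\langle f\rangle_q|>\Psi\}$ without truncation. This also sidesteps the truncation-bias issue you flag: the extended Azuma bound simply adds the tail probabilities $\sum_i\mathbb{P}(|X_i-X_{i-1}|>c_i)$ rather than shifting the mean.

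Two smaller points. First, you do not treat the case $n_t(q)=0$; there Hoeffding is vacuous, and the paper instead uses the BMO telescoping bound (Proposition~\ref{prop:bound-on-consecutive-mean}) together with Assumption~\ref{assumption:mean-zero} to show $|\langle f\rangle_q|=|\langle f\rangle_q-\langle f\rangle_{[0,1)^d}|\le 2\log_2(1/\eta)\le\Psi$, which is precisely why $\Psi$ carries the floor $2\log_2(1/\eta)$ in \eqref{eq:def-Psi}. Second, the lemma as the paper uses it is a per-$(t,q)$ statement with failure probability $\epsilon/T$ each; the union bound to $1-2\epsilon$ is taken later over the $2T$ specific cubes $Q_t$ and $\widetilde{q}_t$, so your attempt to union-bound inside the lemma over ``all empirical means ever formed'' is not what is needed.
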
 

To prove Lemma \ref{lem:bound-on-event}, we apply a variation of Azuma's inequality \citep{vu2002concentration, tao2015random}. We also need some additional effort to handle the case when a cube $q$ contains no observations. 
The details are in Appendix \ref{app:pf-bound-on-event}. 

\textbf{Part II: } 
Next, we link the $\delta$-regret to the $J (q)$ term. 


\begin{lemma} 
    \label{lem:exist-good-cube-arm}
    Recall $J(q) = \BoundMeasurecube{q}$. For any partition $\mathcal{Q}$ of $[0,1)^d$, there exists $q \in \mathcal{Q}$, such that 
    \begin{align} 
        f^\delta -
        \AverageValue{q} \le 
         J(q), \label{eq:con-measure-max}
    \end{align}
    for any $f$-admissible $\delta > \eta |\mathcal{Q}|$, where $|\mathcal{Q}|$ is the cardinality of $\mathcal{Q}$.  
\end{lemma}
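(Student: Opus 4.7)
The plan is to argue by contradiction: assume no cube in $\mathcal{Q}$ satisfies the claimed inequality, then use the John-Nirenberg inequality to bound the measure of the ``super-level set'' $S^\delta = \{a \in [0,1)^d : f(a) > f^\delta\}$ intersected with each cube, and finally sum over the partition to reach a contradiction with $\mu(S^\delta) = \delta > \eta |\mathcal{Q}|$ (using Lemma \ref{lem:delta-regret-set-measure}).

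Concretely, I would suppose that $f^\delta - \langle f \rangle_q > J(q) = \lfloor \log(\mu(q)/\eta) \rfloor_+$ for every $q \in \mathcal{Q}$. Pick any $q \in \mathcal{Q}$. For any $x \in q \cap S^\delta$ we have
\begin{align*}
f(x) - \langle f \rangle_q \;>\; f^\delta - \langle f \rangle_q \;>\; J(q) \;\ge\; 0,
\end{align*}
so $|f(x) - \langle f \rangle_q| > J(q)$. Therefore $q \cap S^\delta \subseteq \{x \in q : |f(x) - \langle f \rangle_q| > J(q)\}$. Since every element of the partition is a (dyadic) hypercube, Theorem \ref{thm:john-nirenberg} (with the normalization $C_1 = C_2 = \|f\| = 1$) applies and gives
\begin{align*}
\mu(q \cap S^\delta) \;\le\; \mu(q)\, e^{-J(q)}.
\end{align*}
I would then split into the two cases dictated by the floor-positive-part. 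If $\mu(q) > \eta$, then $J(q) = \log(\mu(q)/\eta)$ and the right-hand side equals $\eta$. If $\mu(q) \le \eta$, then $J(q) = 0$ but we already have $\mu(q \cap S^\delta) \le \mu(q) \le \eta$ trivially. Either way, $\mu(q \cap S^\delta) \le \eta$.

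Summing over the partition and using disjointness plus Lemma \ref{lem:delta-regret-set-measure} yields
\begin{align*}
\delta \;=\; \mu(S^\delta) \;=\; \sum_{q \in \mathcal{Q}} \mu(q \cap S^\delta) \;\le\; |\mathcal{Q}|\, \eta,
\end{align*}
contradicting the hypothesis $\delta > \eta |\mathcal{Q}|$. Hence some $q \in \mathcal{Q}$ must satisfy $f^\delta - \langle f \rangle_q \le J(q)$, as desired. The only real subtlety -- and the step I would treat most carefully -- is correctly handling the $\lfloor \cdot \rfloor_+$ truncation in $J(q)$ so that the John-Nirenberg bound still yields the uniform $\eta$-bound on small cubes; the rest is a clean containment argument and a telescoping sum over the partition.
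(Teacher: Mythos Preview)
Your proof is correct and follows essentially the same contradiction argument as the paper: assume the inequality fails for every cube, use John--Nirenberg to bound $\mu(q\cap S^\delta)\le\eta$ for each $q$, and sum over the partition to contradict $\mu(S^\delta)=\delta>\eta|\mathcal{Q}|$. Your explicit case split for the $\lfloor\cdot\rfloor_+$ truncation (handling $\mu(q)\le\eta$ trivially rather than via John--Nirenberg, which requires $\lambda>0$) is in fact slightly more careful than the paper's appendix proof, which silently works with $\log(\mu(q)/\eta)$ without the positive part.
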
 


In the proof of Lemma \ref{lem:exist-good-cube-arm}, we suppose, in order to get a contradiction, that there is no such cube. Under this assumption, there will be contradiction to the definition of $f^\delta$. 


By Lemma \ref{lem:exist-good-cube-arm}, there exists a ``good'' cube $\widetilde{q}_{t}$ (at any time $t \le T$), such that (\ref{eq:con-measure-max}) is true for $\widetilde{q}_{t}$. 
Let $\delta$ be an arbitrary number satisfying (1) $\delta > | \mathcal{Q}_T | \eta$ and (2) $\delta$ is $f$-admissible. 
Then under event $ {\mathcal{E}} (\widetilde{q}_{t})$, 
\begin{align}
    f^\delta  
    =& \left( f^\delta - \left<f \right>_{\widetilde{q}_{t}} \right) + \left( \left<f \right>_{\widetilde{q}_{t}}  - m_{t} (\widetilde{q}_{t}) \right) + m_{t} (\widetilde{q}_t) \nonumber \\
    \overset{\text{\pgftextcircled{1}}}{\le}& 
    J(\widetilde{q}_t)  +  H_t (\widetilde{q}_t) +  m_t (\widetilde{q}_t), \label{eq:bound-f-delta-JH}
\end{align} 
where \text{\pgftextcircled{1}} uses Lemma \ref{lem:exist-good-cube-arm} for the first brackets and Lemma \ref{lem:bound-on-event} (with event ${\mathcal{E}}_{t} (\widetilde{q}_{t})$) for the second brackets. 




The event where all ``good'' cubes and all cubes we select (for $t \le T$) have nice estimates, namely $ \left( \bigcap_{t = 1}^T {\mathcal{E}}_{t} (\widetilde{q}_{t}) \right) \bigcap \left( \bigcap_{t = 1}^T {\mathcal{E}}_{t} (Q_{t} ) \right),$  occurs with probability at least $1 - 2 \epsilon$. This result comes from Lemma \ref{lem:bound-on-event} and a union bound, and we note that $\mathcal{E}_{t} (q)$ depends on $\epsilon$ (and $T$), as in (\ref{eq:Ht-for-prob}).  
Under this event, from (\ref{eq:good-event}) we have
    $ \left| \left< f \right>_{Q_t} - m_t (Q_t) \right| \le H_t (Q_t) . $
This and (\ref{eq:arm-exp-to-cube-mean}) give us 
\begin{align}
    \mathbb{E} \left[ f(A_t) | \mathcal{F}_t \right] \h = \h \left< f \right>_{Q_t} \h \ge \h m_t (Q_t) \h - \h H_t (Q_t) \h .
    \label{eq:for-E-Ft} 
\end{align} 
We can then use the above to get, under the ``good event'',
\begin{align}
    &f^\delta - \mathbb{E}[ f(A_{t }) | \mathcal{F}_t ] \nonumber\\
    \overset{\pgftextcircled{1}}{\le}& m_{t } (\widetilde{q}_{t}) + H_t(  \widetilde{q}_{t} ) + J (  \widetilde{q}_{t} ) 
    - m_{t } (Q_{t}) + H_t(  Q_{t} ) 
    \nonumber \\ 
    \overset{\pgftextcircled{2}}{\le}&  m_{t } (Q_{t} ) \h + \h H_t(  Q_{t} ) \h 
    + \h J (  Q_{t} ) 
    - m_{t } (Q_{t}) \h + \h H_t(  Q_{t} ) \h 
    \nonumber \\
    =& 2 H_t(  Q_{t} ) + J(  Q_{t} ) 
    \le 3 H_t(  Q_{t} ), \label{eq:get-single-regret}
\end{align}
where \pgftextcircled{1} uses (\ref{eq:bound-f-delta-JH}) for the first three terms and (\ref{eq:for-E-Ft}) for the last three terms, \pgftextcircled{2} uses that $ U_t(Q_t) \ge U_t(\widetilde{q}_t) $ since $Q_t$ maximizes the index $U_t (\cdot)$ according to (\ref{eq:select-best-cube}), and the last inequality uses the rule (\ref{eq:partition-rule}). 

Next, we use Lemma \ref{lem:point-scattering} 
to link the number of cubes up to a time $t$ to the Hoeffding-type tail bound in (\ref{eq:get-single-regret}). Intuitively, this bound (Lemma \ref{lem:point-scattering}) states that the numbers of points within the cubes grows fast enough to be bounded by a function of the number of cubes. 

\begin{lemma} 
\label{lem:point-scattering}
We say a partition $\mathcal{Q}$ is finer than a partition $\mathcal{Q}'$ if for any $q \in \mathcal{Q}$, there exists $q' \in \mathcal{Q}'$ such that $q \subset q' $.
Consider an arbitrary sequence of points $x_1, x_2, \cdots, x_t,  \cdots$ in a space $\mathcal{X}$, and a sequence of partitions $\mathcal{Q}_1, \mathcal{Q}_2, \cdots $ of $\mathcal{X}$ such that $\mathcal{Q}_{t+1}$ is finer than $\mathcal{Q}_{t}$ for all $t = 1,2,\cdots, T-1$. Then for any $T$, and $\{ q_t \in \mathcal{Q}_t \}_{t=1}^{T}$,
\vspace{-6pt} 
\begin{align} 
    \sum_{t =  1 }^{T}  \frac{1}{ \widetilde{n}_{t } ( q_{t } ) } &\le e | \mathcal{Q}_T |  \log \left( 1 + (e - 1) \frac{T }{  | \mathcal{Q}_T |  } \right) , \label{eq:point-scattering-gp} 
\end{align}
where $\widetilde{n}_{t}$ is defined in (\ref{eq:count}) (using points $x_1, x_2, \cdots$), and $|\mathcal{Q}_t|$ is the cardinality of partition $\mathcal{Q}_t$. 
\end{lemma}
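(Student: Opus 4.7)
The plan is to upper bound the sum by a harmonic-type expression per final-partition cube, then apply Jensen's inequality via the concavity of $\log(1 + (e-1)x)$. Throughout I use the standing property that each $x_t$ lies in the selected cube $q_t$, as is the case in the bandit algorithm (where the arm is drawn from the selected cube).

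First, for each round $t$, let $c(x_t) \in \mathcal{Q}_T$ denote the unique cube of the final partition containing $x_t$. Because $\mathcal{Q}_T$ refines $\mathcal{Q}_t$, there is a unique $q \in \mathcal{Q}_t$ with $c(x_t) \subset q$; since $x_t$ lies in both $q$ and $q_t$, the partition property forces $q = q_t$, so $c(x_t) \subset q_t$. This gives $n_t(c(x_t)) \le n_t(q_t)$, hence
\[ \sum_{t=1}^T \frac{1}{\widetilde{n}_t(q_t)} \;\le\; \sum_{c \in \mathcal{Q}_T} \sum_{t\,:\, x_t \in c} \frac{1}{\widetilde{n}_t(c)}. \]

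Next, fix $c \in \mathcal{Q}_T$ and enumerate the rounds with $x_t \in c$ as $t_1 < t_2 < \cdots < t_{T_c}$, where $T_c := |\{t \le T : x_t \in c\}|$. A direct count gives $n_{t_j}(c) = j-1$, so $\widetilde{n}_{t_j}(c) = \max(1, j-1)$ and $\sum_{j=1}^{T_c} 1/\widetilde{n}_{t_j}(c) = 1 + H_{T_c - 1}$, where $H_m$ denotes the $m$-th harmonic number. I would then verify the elementary inequality
\[ 1 + H_{m-1} \;\le\; e \log\!\bigl(1 + (e-1) m\bigr) \qquad \text{for all } m \ge 1, \]
by checking the base case $m=1$ (where both sides are well-defined with $1 \le e$) and comparing derivatives in $m$ (the derivative inequality reduces to $(e-1)^2 m \ge 1$, which holds for all $m \ge 1$).

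Finally, since $\sum_{c \in \mathcal{Q}_T} T_c = T$ and $x \mapsto \log(1 + (e-1) x)$ is concave, Jensen's inequality gives
\[ \sum_{c \in \mathcal{Q}_T} e \log\!\bigl(1 + (e-1) T_c\bigr) \;\le\; e\, |\mathcal{Q}_T| \log\!\left( 1 + (e-1) \frac{T}{|\mathcal{Q}_T|} \right), \]
yielding the claimed bound. The only subtle step is the reduction $c(x_t) \subset q_t$, which crucially uses both the algorithmic property $x_t \in q_t$ and the refinement structure of the partitions; without this, the bound is vacuous and in fact fails in general (one can make $\widetilde{n}_t(q_t) = 1$ for all $t$ by selecting cubes disjoint from all $x_i$).
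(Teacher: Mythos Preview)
Your proof is correct, and it takes a genuinely different route from the paper's. The paper proves the lemma by constructing a hypothetical degenerate Gaussian process whose kernel is the indicator that two points lie in the same cell of the final partition $\mathcal{Q}_T$; it then computes the posterior variance in closed form, invokes the information-gain identity $\sum_\tau \log(1+s^{-2}\sigma^2_{T,\tau-1}(x_\tau)) = \log\det(s^{-2}K+I)$ from the GP-UCB literature, bounds the determinant via the block-diagonal structure and AM--GM, and finally optimizes the noise parameter to $s^{-2}=e-1$. Your argument replaces all of this machinery with a direct combinatorial reduction: pass from $q_t$ to the final-partition cell $c(x_t)$ via the refinement inclusion, recognize the per-cell sum as $1+H_{T_c-1}$, verify the scalar inequality $1+H_{m-1}\le e\log(1+(e-1)m)$, and finish with Jensen. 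Both proofs hinge on the same monotonicity step (counts in the finer partition are no larger), and both implicitly require $x_t\in q_t$; you are explicit about this, whereas the paper silently writes $\widetilde{n}_t(x_t)$ in the proof even though the lemma as stated quantifies over arbitrary $q_t\in\mathcal{Q}_t$. Your approach is shorter and entirely elementary; the paper's approach has the virtue of tying the bound to the GP information-gain framework, which may be useful if one wants to generalize the kernel structure.
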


A proof of Lemma \ref{lem:point-scattering} is in Appendix \ref{app:point-scattering-gp}. We can apply Lemma \ref{lem:point-scattering} and the Cauchy-Schwarz inequality to  (\ref{eq:get-single-regret}) to prove Theorem \ref{thm:regret-bound}. The details can be found in Appendix \ref{app:pf-thm-p}. 

\section{Achieve Poly-log Regret via Zooming} \label{sec:zooming}
In this section we study an improved version of the previous section that uses the Zooming machinery \citep{kleinberg2008multi,slivkins2014contextual} and inspirations from \citet{bubeck2011x}. Similar to Algorithm \ref{alg}, this algorithm runs by maintaining a set of dyadic cubes $\Q_t$. 

In this setting, we divide the time horizon into episodes. In each episode $t$, we are allowed to play multiple arms, and all arms played can incur regret. 
This is also a UCB strategy, and the index of $q \in \Q_t$ is defined the same way as (\ref{eq:ucb-index}):
\begin{equation}
	U_t(q):= m_t(q) + H_t(q) + J(q) \label{eq:parent-index} 
\end{equation}
Before we discuss in more detail how to select cubes and arms based on the above index $U_t (\cdot)$, we first describe how we maintain the collection of cubes. 
Let $\mathcal{Q}_t$ be the collection of dyadic cubes at episode $t$. We first define \textit{\textbf{terminal cubes}}, which are cubes that do not have sub-cubes in $\Q_t$. More formally,
a cube $Q \in \mathcal{Q}_t$ is a terminal cube if there is no other cube $Q' \in \mathcal{Q}_t$ such that $Q' \subset Q$. A \textit{\textbf{pre-parent cube}} is a cube in $\Q_t$ that ``directly'' contains a terminal cube: For a cube $Q \in \mathcal{Q}_t$, if $Q$ is a direct super cube of any terminal cube, we say $Q$ is a pre-parent cube. Finally, for a cube $Q \in \Q_t$, if $Q$ is a pre-parent cube and no super cube of $Q$ is a pre-parent cube, we call $Q$ a \textit{\textbf{parent cube}}. Intuitively, no ``sibling'' cube of a parent cube is a terminal cube. As a consequence of this definition, a parent cube cannot contain another parent cube. 
Note that some cubes are none of the these three types of cubes. 
Figure \ref{fig:cube-example} gives examples of terminal cubes, pre-parent cubes and parent cubes. 

\textbf{Algorithm Description }

Pick \textbf{\textit{zooming rate}} $\alpha \in \left(0 , \frac{ (\Psi + D_{\mathpzc{E}}) \sqrt{2 \log (2T^2 / \epsilon)} }{\log ( M_d / \eta)} \right]$. The collection of cubes grows following the rules below: 
	\textbf{(1)}
	Initialize $\mathcal{Q}_0 = \{ [0,1)^d \}$ and $[0,1)^d$. Warm-up: play $n_{warm}$ arms uniformly at random from $[0,1)^d$ so that 
	\begin{align} 
	\begin{cases} 
	    \frac{ \left( {\Psi}  + {D_{\mathpzc{E}} } \right) \sqrt{ \HoeffTSq }  }{ \sqrt{ n_{warm}  } } \ge \alpha \BoundMeasure{ \frac{ M_d }{\eta} } \\
	    \frac{ \left( {\Psi}  + {D_{\mathpzc{E}} } \right) \sqrt{ \HoeffTSq }  }{ \sqrt{ n_{warm} +1 } } < \alpha \BoundMeasure{ \frac{ M_d }{\eta} }
	\end{cases}
	 \label{eq:warm-up}. 
	\end{align}  
	
	\textbf{(2)}
	After episode $t$ ($t = 1,2,\cdots, T$), 
	ensure
	\begin{align} 
	\BoundHoeffding{t}{Q^{ter}} \ge \alpha \BoundMeasure{ \frac{ M_d \mu ( Q^{ter} ) }{\eta} } \label{eq:split-rule-zooming} 
	\end{align}  
	for any terminal cube $Q^{ter}$. 
	If (\ref{eq:split-rule-zooming}) is violated for a terminal cube $Q^{ter}$, we include the $M_d$ direct sub-cubes of $Q^{ter}$ into $\Q_t$. Then $Q^{ter}$ will no longer be a terminal cube and the direct sub-cubes of $Q^{ter}$ will be terminal cubes. 
	We repeatedly include direct sub-cubes of (what were) terminal cubes into $\mathcal{Q}_t$, until all terminal cubes satisfy  (\ref{eq:split-rule-zooming}). We choose $\alpha$ to be smaller than $\frac{ (\Psi + D_{\mathpzc{E}}) \sqrt{2 \log (2T^2 / \epsilon)} }{\log ( M_d / \eta)}$ so that (\ref{eq:split-rule-zooming}) can be satisfied with $\widetilde{n}_t (Q^{ter}) = 1$ and $ \mu (Q^{ter}) = 1 $. 
	

As a consequence, any non-terminal cube $Q^{par}$ (regardless of whether it is a pre-parent or parent cube) satisfies: 
\begin{align}
	\BoundHoeffding{t}{Q^{par}} < \alpha \BoundMeasure{ \frac{ M_d \mu (  Q^{par} ) }{\eta} } .  \label{eq:split-rule-zooming-parent}
\end{align} 


\begin{figure}[ht!]
	\centering 
	\includegraphics[scale = 0.35]{./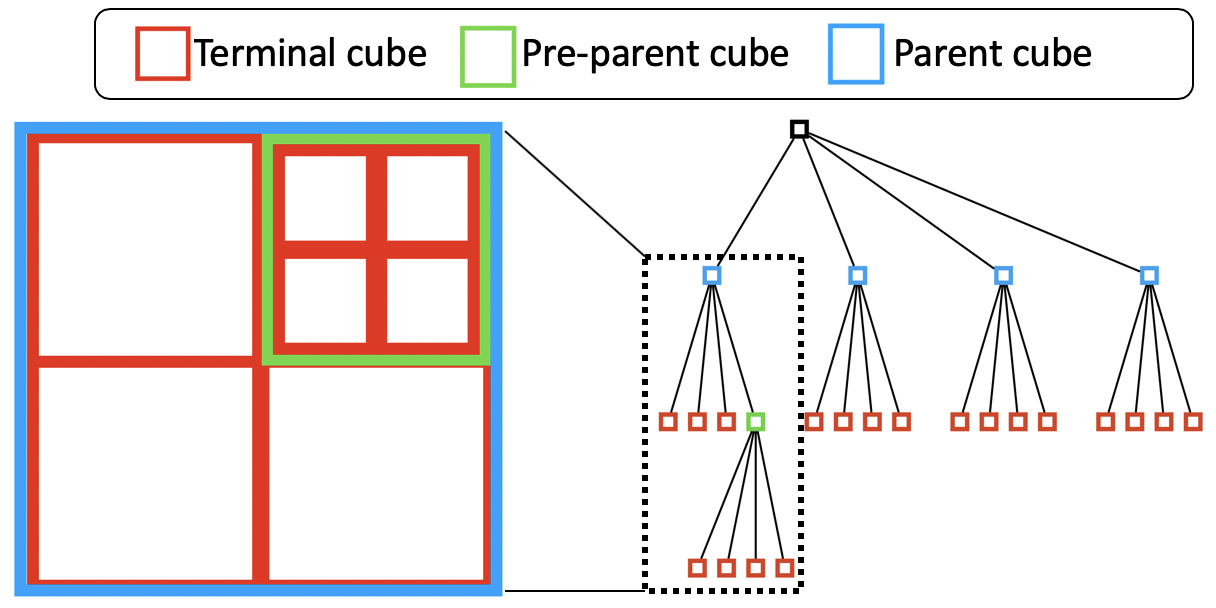} 
	\caption{Example of terminal cubes, pre-parent and parent cubes. 
		\label{fig:cube-example}} 
\end{figure} 

After the splitting rule is achieved, we select a parent cube. Specifically $Q_t$ is chosen to maximize the following index: 
\begin{align}
	Q_t \in \arg\max_{q \in \mathcal{Q}_t, q \text{ is a parent cube}} U_t (q). \nonumber
\end{align}

Within each \textit{direct sub-cube} of $Q_t$ (either pre-parent or terminal cubes), we uniformly randomly play one arm. In each episode $t$, $M_d$ arms are played. This algorithm is summarized in Algorithm \ref{alg:z}. 

\begin{algorithm}[h!]
    \caption{Bandit-BMO-Zooming (Bandit-BMO-Z)}  
    \label{alg:z}
    \begin{algorithmic}[1] 
        \STATE Problem intrinsics: $\mu (\cdot)$,  $ 
        D_{\mathpzc{E}} $,  $d$, $M_d$. 
        \STATE \COMMENT{$\mu (\cdot)$,  $ 
        D_{\mathpzc{E}} $,  $d$, $M_d$ are same as those in Algorithm \ref{alg}.}
        \STATE Algorithm parameters: $\eta, \epsilon, T > 0$, and $\alpha \in \left(0,  \frac{ (\Psi + D_{\mathpzc{E}}) \sqrt{2 \log (2T^2 / \epsilon)} }{\log ( M_d / \eta)} \right]$. 
        \STATE \COMMENT{$\eta $, $\epsilon$, $T$ are same as those in Algorithm \ref{alg}. $\alpha$ is the zooming rate.}
        \STATE Initialize: let $\Q_0 \hspace{-2pt} = \hspace{-2pt} [0,1)^d$. Play warm-up phase (\ref{eq:warm-up}). 
        \FOR{episode $t = 1, 2, \dots, T$}
            \STATE Let $m_{t}$, $n_{t}$, $U_t $ be defined as in (\ref{eq:average}), (\ref{eq:count}) and (\ref{eq:parent-index}). 
            \STATE Select parent cube $Q_t \in \mathcal{Q}_{t}$ such that: 
            \begin{align*} 
                Q_t \in \arg \max_{q \in \mathcal{Q}_{t}, \; q \text{ is a parent cube. }} U_t (q). 
            \end{align*}
			\FOR{$j = 1, 2, \dots, M_d$}
				\STATE Locate the $j$-th direct sub-cube of $Q_t$: $Q_j^{sub}$. 	
				\STATE Play $A_{t,j } \in Q_j^{sub}$ uniformly at random, and observe $Y_{t,j}$. 
			\ENDFOR 
            \STATE Update the collection of dyadic cubes $\mathcal{Q}_{t}$ to $\mathcal{Q}_{t+1}$ according to  (\ref{eq:split-rule-zooming}). 
        \ENDFOR 
    \end{algorithmic}
\end{algorithm}

\textbf{Regret Analysis: }
For the rest of the paper, we define $$\mathcal{F}_t := \sigma \Bigg( \bigg\{ Q_{t'}, \{ A_{t',j} \}_{j=1}^{M_d}, \{ Y_{t',j} \}_{j=1}^{M_d} \bigg\}_{t'=1}^{t-1} , Q_t \Bigg) , $$ which is the $\sigma$-algebra describing all randomness right after selecting the parent cube for episode $t$. We use $\mathbb{E}_t$ to denote the expectation conditioning on $\mathcal{F}_t$.  
We will show Algorithm \ref{alg:z} achieves $\widetilde{\mathcal{O}} \left( poly\text{-}log ( T ) \right)$ $\delta$-regret with high probability (formally stated in Theorem \ref{thm:z}).

Let $A_{t,i}$ be the $i$-th arm played in episode $t$. Let us denote $\Delta_{t,i}^\delta := f^\delta - \mathbb{E}_t [ f(A_{t,i}) ]$. Since each $A_{t,i}$ is selected uniformly randomly within a direct sub-cube of $Q_t$, we have 
\begin{align}
	\sum_{i=1}^{M_d} \mathbb{E}_t [f(A_{t,i})] = M_d \left< f \right>_{Q_t},
\end{align}
where $\mathbb{E}_t$ is the expectation conditioning on all randomness before episode $t$. Using the above equation, for any $t$, 
\begin{align}
	\sum_{i=1}^{M_d} \Delta_{t,i }^\delta = M_d ( f^\delta - \left< f \right>_{Q_t} ). \label{eq:Delta-gap}
\end{align} 
\vspace{-12pt}

The quantity $\sum_{i=1}^{M_d}\Delta_{t,i}^\delta$ is the $\delta$-regret incurred during episode $t$. We will bound (\ref{eq:Delta-gap}) using tools in Section \ref{sec:algo}. In order to apply Lemma \ref{lem:exist-good-cube-arm}, we need to show that the parent cubes form of partition of the arm space (Proposition \ref{prop:parent-partition}). 



\begin{proposition} \label{prop:parent-partition}
	At any episode $t$, the collection of parent cubes forms a partition of the arm space. 
\end{proposition}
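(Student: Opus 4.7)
The plan is to verify the two defining properties of a partition — pairwise disjointness and full coverage of $[0,1)^d$ — using the dyadic structure of $\Q_t$. I will rely on two structural facts: (i) any two dyadic cubes are either nested or disjoint, and (ii) by the way $\Q_t$ is grown (sub-cubes are only added by splitting a cube that itself remains in $\Q_t$), every non-root cube of $\Q_t$ has its full chain of dyadic ancestors up to $[0,1)^d$ inside $\Q_t$; in particular the direct super cube of any non-root cube of $\Q_t$ lies in $\Q_t$.

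For disjointness, suppose two distinct parent cubes $P_1, P_2 \in \Q_t$ intersect. By (i), one contains the other; say $P_1 \subsetneq P_2$. Then $P_2$ is a dyadic super cube of $P_1$ inside $\Q_t$, and since $P_2$ is in particular a pre-parent cube, this contradicts the defining property of $P_1$ as a parent cube (no super cube of a parent may be a pre-parent). Hence distinct parent cubes are disjoint.

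For covering, fix $x \in [0,1)^d$. Since $\Q_t$ is finite and $[0,1)^d \in \Q_t$, there is a smallest cube $T^\star \in \Q_t$ containing $x$, and by minimality $T^\star$ is terminal. Let $C_0 = T^\star \subsetneq C_1 \subsetneq \cdots \subsetneq C_k = [0,1)^d$ be the chain of dyadic ancestors of $T^\star$, which by (ii) all lie in $\Q_t$. The cube $C_1$ is a pre-parent, being the direct super cube of the terminal cube $C_0$. Let $j \in \{1,\dots,k\}$ be the largest index for which $C_j$ is a pre-parent. The only super cubes of $C_j$ inside $\Q_t$ are $C_{j+1},\dots,C_k$, and by the choice of $j$ none of these is a pre-parent, so $C_j$ is itself a parent cube, and it contains $x$.

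The argument is purely combinatorial and I do not foresee a real obstacle; the only step that needs any care is using fact (ii) to guarantee that the chain of ancestors above a terminal cube really does live in $\Q_t$, which is what promotes the smallest $\Q_t$-cube above $x$ to a pre-parent and then, by maximality along the chain, to a parent. The degenerate situation $\Q_t = \{[0,1)^d\}$ (no terminal cube has a super cube in $\Q_t$, hence no parent exists) does not arise when the proposition is invoked, since Algorithm~\ref{alg:z} selects a parent cube at every episode.
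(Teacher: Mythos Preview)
Your argument is correct and follows the same route as the paper: disjointness via the nesting property of dyadic cubes together with the observation that no parent cube can contain another, and coverage by climbing from a terminal cube through its chain of ancestors to the highest pre-parent, which is then a parent. The one point of divergence is the degenerate case $\Q_t=\{[0,1)^d\}$: the paper disposes of it by convention (declaring $[0,1)^d$ to be a parent cube whenever it is terminal), whereas your dismissal---that the case cannot arise because Algorithm~\ref{alg:z} selects a parent cube each episode---is circular, since the existence of a parent cube to select is exactly what the proposition is meant to guarantee.
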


Since the parent cubes in $\Q_t $ form a partition of the arm space, we can apply Lemma \ref{lem:exist-good-cube-arm} to get the following. 
For any episode $t$, there exists a parent cube $q_t^{\max}$, such that
\begin{align} 
	f^\delta \le& \left< f \right>_{q_t^{\max}} + \BoundMeasure{  \mu( q_t^{\max} ) / \eta }. \label{eq:f-best-parent-bound}
\end{align} 
Let us define $\widetilde{\mathcal{E}}_T := \left( \bigcap_{t = 1}^T {\mathcal{E}}_{t} ({q}_{t}^{\max} ) \right) \bigcap \left( \bigcap_{t = 1}^T {\mathcal{E}}_{t} (Q_{t} ) \right)$, where $ \mathcal{E}_{t} (q_t^{\max})$ and $ \mathcal{E}_{t} (Q_t)$ are defined in (\ref{eq:good-event}). By Lemma \ref{lem:bound-on-event} and another union bound, we know the event $\widetilde{\mathcal{E}}_T$ happens with probability at least $1 - {2 \epsilon}$. 

 
%
%




Since each episode creates at most a constant number of new cubes, we have $|\mathcal{Q}_t| = \mathcal{O} (t)$. Using the argument we used for (\ref{eq:get-single-regret}), we have that at any $t \le T$, for any $\delta > \eta |\mathcal{Q}_t| $ that is $f$-admissible, under event $\widetilde{\mathcal{E}}_T$, 
\begin{align}
	& \sum_{i=1}^{M_d} \Delta_{t,i}^\delta 
	= {M_d} \left( f^\delta - \left<f\right>_{Q_t} \right) \label{eq:use-Delta-gap} \\
	\le & M_d \left( 2 \BoundHoeffding{t}{Q_t} + \log \left( \frac{ M_d \mu(Q_t) }{\eta} \right)  \right) \nonumber \\
	\le & M_d (1 + 2 \alpha) \log \left( \frac{ M_d \mu (Q_t) }{ \eta } \right),  \label{eq:bound-Delta} 
\end{align} 
where (\ref{eq:use-Delta-gap}) uses (\ref{eq:Delta-gap}) and the last inequality uses (\ref{eq:split-rule-zooming-parent}). 

Next, we extend some definitions from \citet{kleinberg2008multi}, to handle the $\delta$-regret setting. 
Firstly, we define the set of $(\lambda,\delta)$-optimal arms as
\begin{align}
	\mathcal{X}_\delta ( \lambda ) := \left(  \bigcup  \{ Q \subset [ 0,1 ]^d  : f^\delta - \left< f \right>_Q \le \lambda \} \right) .
	\vspace{-0.1cm}
\end{align}

We also need to extend the definition of zooming number \citep{kleinberg2008multi} to our setting. We denote by $N_\delta (\lambda, \xi)$ the number of cubes of edge-length $\xi$ needed to cover the set $ \mathcal{X}_\delta (\lambda) $. Then we define the $ ( \delta, \eta )$-Zooming Number with zooming rate $\alpha$ as 
\begin{align}
	\hspace{-0.25cm} \widetilde{ N }_{\delta, \eta, \alpha} 
	\hspace{-2pt}:= 
	\hspace{-2pt} 
	\sup_{ \lambda \in \left(  \eta^{\frac{1}{d}}, 1 \right] }  \hspace{-2pt} 
	N_\delta \hspace{-2pt}\left( (1 + 2 \alpha) \log  \left( M_d \lambda^d / \eta  \right) , \lambda \right),  \label{eq:def-N-tilde} 
\end{align} 
where 
$ N_\delta \hspace{-2pt}\left( (1 + 2 \alpha ) \log  \left( M_d \lambda^d / \eta  \right) , \lambda \right)  $ is the number of cubes of edge-length $\lambda$ needed to cover $\mathcal{X}_\delta ( (1 + 2 \alpha) \log ( M_d \lambda^d / \eta ) )$. 
The number $ \widetilde{ N }_{\delta, \eta, \alpha} $ is well-defined. This is because the $ \mathcal{X}_\delta ( (1 + 2 \alpha ) \log ( M_d \lambda^d / \eta ) ) $ is a subspace of $(0,1]^d$, and number of cubes of edge-length $> \eta^{\frac{1}{d}}$ needed to cover $(0,1]^d$ is finite. 
Intuitively, the idea of zooming is to use smaller cubes to cover more optimal arms, and vice versa. BMO properties convert between units of reward function and units in arm space. 

We will regroup the $\Delta_{t,i}$ terms to bound the regret. To do this, we need the following facts, whose proofs are in Appendix \ref{app:pf-prop-regroup}. 

\begin{proposition} \label{prop:regroup}
	Following the Zooming Rule (\ref{eq:split-rule-zooming}), we have  
\textbf{1.} Each parent cube of measure $\mu$ is played at most $ \maxHitNumber{ \mu / \eta  } $ episodes. \\
\textbf{2.} Under event $\widetilde{\mathcal{E}}_T$, each parent cube $Q_t$ selected at episode $t$ is a subset of $\mathcal{X}_\delta \left( (1 + 2 \alpha)  \log \left( M_d \mu (Q_t) / \eta \right)  \right)$. 
\end{proposition}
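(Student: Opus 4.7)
The plan is to establish the two parts separately, invoking the splitting rule (\ref{eq:split-rule-zooming}) for Part 1 and the per-episode regret bound (\ref{eq:bound-Delta}) already derived above for Part 2.

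For Part 1, I would fix a parent cube $Q$ with $\mu(Q) = \mu$. Each of its $M_d$ direct sub-cubes $Q'$ has measure $\mu/M_d$, so as long as $Q'$ is terminal, the splitting rule gives $\BoundHoeffding{t}{Q'} \geq \alpha \log(M_d \mu(Q')/\eta) = \alpha \log(\mu/\eta)$, which rearranges to $\widetilde{n}_t(Q') \leq \maxHitNumber{\mu/\eta}$. Next I would use the persistence observation that from the moment $Q$ itself was split, all $M_d$ direct sub-cubes of $Q$ remain in $\Q_t$ forever (only their descendants get added), so each episode in which $Q$ is selected as parent deposits exactly one arm in each direct sub-cube. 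Since $Q$ is a parent at episode $t$, at least one direct sub-cube $Q^{*}$ is still terminal at that time, so if $Q$ has been selected $K$ times through episode $t$, the chain $K - 1 \leq \widetilde{n}_t(Q^{*}) \leq \maxHitNumber{\mu/\eta}$ caps the number of selections as claimed (up to an additive constant absorbed in the bookkeeping).

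For Part 2, I would read the conclusion directly off (\ref{eq:bound-Delta}): on the event $\widetilde{\mathcal{E}}_T$, $M_d(f^\delta - \langle f \rangle_{Q_t}) \leq M_d(1 + 2\alpha)\log(M_d \mu(Q_t)/\eta)$. Dividing by $M_d$ shows that $Q_t$ itself satisfies the defining condition $f^\delta - \langle f \rangle_{Q_t} \leq (1 + 2\alpha)\log(M_d \mu(Q_t)/\eta)$, so $Q_t$ participates in the union defining $\mathcal{X}_\delta\!\left( (1 + 2\alpha)\log(M_d \mu(Q_t)/\eta) \right)$, and the stated inclusion is immediate.

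The hard part is the bookkeeping behind Part 1: one has to verify carefully that the direct sub-cubes of a parent cube are never removed from $\Q_t$ (only further subdivided), which is what makes ``each selection of $Q$ contributes exactly one arm to each direct sub-cube'' a valid invariant. Once that persistence is in hand, combining the terminal invariant from (\ref{eq:split-rule-zooming}) with the fact that at least one direct sub-cube stays terminal whenever $Q$ remains a parent yields the count bound cleanly. Part 2 is essentially a one-line corollary of the earlier inequality, so no new difficulty arises there.
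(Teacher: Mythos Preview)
Your proposal is correct and follows essentially the same route as the paper: for Part~1 the paper argues, as you do, that each selection of a parent $Q$ deposits one arm into every direct sub-cube, and the terminal invariant (\ref{eq:split-rule-zooming}) then caps the number of selections at $\maxHitNumber{\mu/\eta}$; for Part~2 the paper likewise just reads off the inequality $f^\delta - \langle f\rangle_{Q_t} \le (1+2\alpha)\log(M_d\mu(Q_t)/\eta)$ already established in (\ref{eq:bound-Delta}). One small remark: the ``persistence'' you flag as the hard part is automatic here, since the algorithm only \emph{adds} cubes to $\mathcal{Q}_t$ and never removes any, so direct sub-cubes trivially stay in the collection once created.
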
 

%
For cleaner writing, we set $\eta = 2^{-dI}$ for some positive integer $I$, and assume the event $\widetilde{\mathcal{E}}_T$ holds. By Proposition \ref{prop:regroup}, we can regroup the regret in a similar way to that of \citet{kleinberg2008multi}. Let $\mathcal{K}_{i}$ be the collection of selected parent cubes such that for any $ Q \in  \mathcal{K}_{i} $, $ \mu (Q) = 2^{-di} $ (dyadic cubes are always of these sizes). 
The sets $\mathcal{K}_{i}$ regroup the selected parent cubes by their size. 
By Proposition \ref{prop:regroup} (item 2), we know each parent cube in $\mathcal{K}_i$ is a subset of $\mathcal{X}_\delta \left( (1 + 2 \alpha) \log \left( M_d 2^{-di} / \eta \right)  \right) $. 
Since cubes in $\mathcal{K}_{i}$ are subsets of $\mathcal{X}_\delta \left( (1 + 2 \alpha) \log \left( M_d 2^{-di} / \eta \right) \right)$ and cubes in $\mathcal{K}_i$ are of measure $2^{-di}$, we have 
\begin{align}
	|\mathcal{K}_{i} | \le N_\delta \left( (1 + 2 \alpha)  \BoundMeasure{ M_d 2^{-di} /\eta } , 2^{-i} \right) , \label{eq:bound-Kv}
\end{align}
where $|\mathcal{K}_{i} |$ is the number of cubes in $\mathcal{K}_{i} $. 
For a cube $Q$, let $S_Q$ be the episodes where $Q$ is played. 
With probability at least $1 - 2 \epsilon $, we can regroup the regret as
\begin{align} 
	&\sum_{t=1}^T \sum_{i=1}^{M_d} \Delta_{t,i}^\delta 
	\le \sum_{t=1}^T (1 + 2 \alpha) M_d  \log \left( M_d \mu(Q_t) / \eta \right) \label{eq:use-Delta-bound} \\
	&\le \crazysum \sum_{Q \in \mathcal{K}_{i} } \sum_{t \in S_Q} (1 + 2\alpha ) M_d  \BoundMeasure{ M_d 2^{-di} / \eta }  \label{eq:regroup-sum} , 
\end{align} 
where (\ref{eq:use-Delta-bound}) uses (\ref{eq:bound-Delta}), (\ref{eq:regroup-sum}) regroups the sum as argued above. 
Using Proposition \ref{prop:regroup}, we can bound (\ref{eq:regroup-sum}) by: 
\vspace{-5pt}
\begin{align} 
	& \crazysum \sum_{Q \in \mathcal{K}_{i} } \sum_{t \in S_Q} (1 + 2 \alpha ) M_d  \BoundMeasure{ M_d 2^{-di} / \eta } \nonumber \\
	\le& \crazysum \sum_{Q \in \mathcal{K}_{i} } | S_Q | (1 + 2\alpha) M_d  \BoundMeasure{ \dfrac{ M_d 2^{-di} }{ \eta } }  \nonumber \\
	\overset{\textcircled{\raisebox{-0.9pt}{1}}}{\le}& \crazysum \hspace{-1pt} \sum_{Q \in \mathcal{K}_{i} } \hspace{-4pt} \maxHitNumber{  2^{-di} / \eta  } \\
	&\cdot ( 1 \hspace{-1.5pt} +  \hspace{-1.5pt} 2\alpha) M_d  \log \hspace{-2pt} \left( \hspace{-1.5pt} \dfrac{ M_d 2^{-di} }{ \eta }  \hspace{-1.5pt} \right) \nonumber  \\
	\le& \crazysum N_\delta \left( (1 + 2\alpha) \log  \left( M_d 2^{-di} / \eta  \right) , 2^{-d i} \right) \nonumber \\
	& \cdot \maxHitNumber{ 2^{-di} / \eta  } \hspace{-1.5pt} \cdot \hspace{-1.5pt} (1 + 2\alpha) M_d  \BoundMeasure{ \hspace{-1.5pt} \frac{ M_d 2^{-di} }{ \eta} \hspace{-1.5pt} } \label{eq:use-bound-Kv} \\ 
	\le& \frac{2 (1 + 2 \alpha) M_d ( \Psi + D_{\mathpzc{E}})^2  }{ \alpha^2 } \widetilde{N}_{\delta, \eta, \alpha} \nonumber \\
	& \cdot \log (2T^2 / \epsilon )  \crazysum   \frac{ \log ( M_d 2^{-di} / \eta ) }{  \left[ \log ( 2^{-di} / \eta ) \right]^2 } \nonumber , 
\end{align} 


where \textcircled{\raisebox{-0.9pt}{1}} uses item 1 in Proposition \ref{prop:regroup}, (\ref{eq:use-bound-Kv}) uses (\ref{eq:bound-Kv}). 
Recall $\eta = 2^{-dI}$ for some positive integer $I$. We can use the above to prove Theorem \ref{thm:z}, by using $\eta = 2^{-dI}$ and
\begin{align}
   &\hspace*{-8pt}\crazysum   \frac{ \log ( M_d 2^{-di} / \eta ) }{  \left[ \log ( 2^{-di} / \eta ) \right]^2 } 
    \hspace*{-1pt}=\hspace*{-1pt} \crazysum   \frac{ \log M_d  }{  \left[ \log \dfrac{ 2^{-di} }{ \eta } \right]^2 }\hspace*{-1pt} +\hspace*{-1pt} \crazysum   \frac{ 1 }{  \log \dfrac{ 2^{-di} }{ \eta }  }  \nonumber \\
    =& \crazysum   \frac{ \log M_d  }{  d^2 \left( \log 2 \right)^2 (I - i)^2 } + \crazysum   \frac{ 1 }{ d ( \log 2 ) (I - i)  }  \label{eq:last-one} \\
    =& \mathcal{O} \left( 1 \right) +  \mathcal{O} \left( \log I \right), \nonumber \\
    =&  \mathcal{O} \left( \log  \log (1/\eta) \right), \nonumber
\end{align}
where the first term in (\ref{eq:last-one}) is $\mathcal{O} (1)$ since $\sum_{i=1}^\infty \frac{1}{i^2} = \mathcal{O} (1)$ and the second term in (\ref{eq:last-one}) is $\mathcal{O} \left( \log I \right)$ by the order of a harmonic sum. The above analysis gives Theorem \ref{thm:z}.



\begin{theorem} \label{thm:z}
	Choose positive integer $I$, and let $\eta = 2^{-Id} $. For $\epsilon > 0$ and $t \le T$, with probability  $\geq 1 - 2 \epsilon $, for any $\delta > |\mathcal{Q}_t| \eta$ such that $\delta$ is $f$-admissible, Algorithm \ref{alg:z} (with zooming rate $\alpha$) admits $t$-episode $\delta$-regret of: 
	\begin{align} 
		\mathcal{O} \left(  \frac{1 + 2\alpha}{\alpha^2} M_d \Psi^2  \widetilde{N}_{\delta, \eta, \alpha}  
		  \log \left( \frac{T}{\epsilon}  \right) 
		 \log \log (1/\eta)  \right) , \label{eq:thm-bound-z}
	\end{align}
	where 
	$\Psi = \mathcal{O} \left( \log (T/\epsilon) + \log (1/\eta) \right)$, $\widetilde{N}_{\delta, \eta, \alpha}$ is defined in (\ref{eq:def-N-tilde}), and $\mathcal{O}$ omits constants. 
	Since each episode plays $M_d$ arms, the average $\delta$-regret each arm incurs is independent of $M_d$. 
\end{theorem}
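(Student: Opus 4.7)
The plan is to execute the road map already laid out in the excerpt, filling in the two pieces that need separate treatment (Propositions \ref{prop:parent-partition} and \ref{prop:regroup}) and then carefully carrying out the zooming-style regrouping argument.

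First I would nail down the high-probability ``good event'' $\widetilde{\mathcal{E}}_T := \bigl(\bigcap_{t=1}^T \mathcal{E}_t(q_t^{\max})\bigr) \cap \bigl(\bigcap_{t=1}^T \mathcal{E}_t(Q_t)\bigr)$. Applying Lemma \ref{lem:bound-on-event} once to the sequence of selected parent cubes $\{Q_t\}$ and once to the sequence of best cubes $\{q_t^{\max}\}$ (which exist by Lemma \ref{lem:exist-good-cube-arm} since the parent cubes form a partition by Proposition \ref{prop:parent-partition}), a union bound shows $\widetilde{\mathcal{E}}_T$ holds with probability at least $1-2\epsilon$. On $\widetilde{\mathcal{E}}_T$ the argument giving inequality (\ref{eq:get-single-regret}) carries over verbatim—UCB maximality of $Q_t$ over parent cubes plus the concentration bound on $\langle f\rangle_{q_t^{\max}}$ yields $f^\delta - \langle f\rangle_{Q_t} \le 2 H_t(Q_t) + J(Q_t)$. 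Using the parent-cube consequence (\ref{eq:split-rule-zooming-parent}) of the zooming rule gives the per-episode bound (\ref{eq:bound-Delta}): $\sum_{i=1}^{M_d}\Delta_{t,i}^\delta \le M_d(1+2\alpha)\log(M_d\mu(Q_t)/\eta)$.

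Next comes the regrouping. Setting $\eta = 2^{-dI}$, dyadic parent cubes have measure $2^{-di}$ for some $i \in \{0,\dots,I-1\}$, so I would group the selected parent cubes into the sets $\mathcal{K}_i$ defined in the excerpt. Item (2) of Proposition \ref{prop:regroup} places each selected parent cube inside $\mathcal{X}_\delta\bigl((1+2\alpha)\log(M_d 2^{-di}/\eta)\bigr)$, whence $|\mathcal{K}_i| \le N_\delta\bigl((1+2\alpha)\log(M_d 2^{-di}/\eta),\,2^{-i}\bigr) \le \widetilde{N}_{\delta,\eta,\alpha}$. Item (1) bounds the number of episodes any particular parent cube of measure $2^{-di}$ is selected by $2(\Psi+D_{\mathpzc{E}})^2\log(2T^2/\epsilon)/\bigl(\alpha^2[\log(2^{-di}/\eta)]^2\bigr)$ (obtained by inverting the Hoeffding bound against the zooming threshold). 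Multiplying per-cube play counts by the per-episode regret and then by $|\mathcal{K}_i|$ collapses the total regret, as in the chain ending at (\ref{eq:use-bound-Kv}), to $\frac{2(1+2\alpha)M_d(\Psi+D_{\mathpzc{E}})^2}{\alpha^2}\widetilde{N}_{\delta,\eta,\alpha}\log(2T^2/\epsilon)\sum_{i=0}^{I-1}\frac{\log(M_d 2^{-di}/\eta)}{[\log(2^{-di}/\eta)]^2}$.

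Finally I would evaluate that last sum. With $\eta = 2^{-dI}$ one has $\log(2^{-di}/\eta) = d(\log 2)(I-i)$, so the sum splits into a $\sum 1/(I-i)^2 = \mathcal{O}(1)$ piece (from the $\log M_d$ numerator) and a harmonic $\sum 1/(I-i) = \mathcal{O}(\log I) = \mathcal{O}(\log\log(1/\eta))$ piece. Combining everything yields exactly (\ref{eq:thm-bound-z}).

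The main obstacle is Proposition \ref{prop:regroup}. Item (1) is delicate because ``parent'' is defined in terms of the current tree structure, so one must argue that once $Q$ has been played sufficiently often the Hoeffding bound drives $H_t(Q)$ below $\alpha\log(M_d\mu(Q)/\eta)$, triggering (\ref{eq:split-rule-zooming}) for its sub-cubes and stripping $Q$ of its parent status; the threshold count is obtained by solving this inequality for $\widetilde{n}_t(Q)$. Item (2) requires chaining the UCB inequality $U_t(Q_t) \ge U_t(q_t^{\max})$ with the concentration bounds on both $Q_t$ and $q_t^{\max}$, plus the parent-cube rule (\ref{eq:split-rule-zooming-parent}), to show $f^\delta - \langle f\rangle_{a} \le (1+2\alpha)\log(M_d\mu(Q_t)/\eta)$ for every $a \in Q_t$—this forces $Q_t \subset \mathcal{X}_\delta((1+2\alpha)\log(M_d\mu(Q_t)/\eta))$. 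The warm-up phase (\ref{eq:warm-up}) is needed precisely to guarantee that (\ref{eq:split-rule-zooming}) is achievable from the start, so the tree is well-defined throughout.
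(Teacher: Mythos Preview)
Your proposal is correct and follows essentially the same route as the paper: establish the good event $\widetilde{\mathcal{E}}_T$ via Lemma~\ref{lem:bound-on-event}, derive the per-episode bound (\ref{eq:bound-Delta}) from UCB optimality plus the parent-cube rule (\ref{eq:split-rule-zooming-parent}), regroup by dyadic scale using Proposition~\ref{prop:regroup}, and finish with the harmonic-sum estimate $\mathcal{O}(\log\log(1/\eta))$. Two minor imprecisions to fix when you write it up: for item~(1) of Proposition~\ref{prop:regroup}, the relevant Hoeffding term is $H_t(Q^{ter})$ for the \emph{direct sub-cubes} of the parent $Q$ (not $H_t(Q)$ itself), and solving (\ref{eq:split-rule-zooming}) for those sub-cubes gives the play count with $\log(\mu(Q)/\eta)$ in the denominator; for item~(2), you only need $f^\delta - \langle f\rangle_{Q_t} \le (1+2\alpha)\log(M_d\mu(Q_t)/\eta)$, which already makes $Q_t$ one of the cubes in the union defining $\mathcal{X}_\delta(\cdot)$---no pointwise argument over $a\in Q_t$ is required.
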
 


When proving Theorem \ref{thm:z}, the definition of $\widetilde{N}_{\delta, \eta, \alpha}$ is used in (\ref{eq:use-bound-Kv}). For a more refined bound, we can instead use
$$ \hspace{-0.25cm} \widetilde{ N }_{\delta, \eta, \alpha}'
	\hspace{-2pt}:= 
	\hspace{-2pt} 
	\sup_{ \lambda \in \left( l_{\min} , 1 \right] } \hspace{-2pt} 
	N_\delta \hspace{-2pt}\left( (1 + 2 \alpha) \log \left( M_d \lambda^d / \eta  \right) , \lambda \right) ,$$ 
where $ \l_{\min} $ is the minimal possible cube edge length during the algorithm run. This replacement will not affect the argument. Some details and an example regarding this refinement are in Appendix \ref{app:example}. 

In Remark \ref{remark:example}, we give an example of regret rate on $f (x) = 2 \log \frac{1}{x}$, $x \in (0,1]$ with specific input parameters. 

\begin{remark} \label{remark:example}
	Consider the (unbounded, BMO) function $ f(x)  = 2 \log \frac{1}{x} $, $x \hspace{-2pt} \in \hspace{-2pt} (0,1]$. Pick $T \ge 20$. For some $t \le T$, the $t$-step $\delta$-regret of Algorithm \ref{alg:z} is $\mathcal{O} \left( poly\text{-}log( t ) \right)$ while allowing $ \delta = \mathcal{O} ( 1/T ) $ 
	and $ \eta = \Theta \left( 1/T^4 \right) $. 
    Intuitively, Algorithm \ref{alg:z} gets  close to $f^\delta$ even if $f^\delta$ is very large. 
	Details of this example can be found in Appendix \ref{app:example}. 
\end{remark}

\section{Experiments}
\label{sec:exp}
We deploy Algorithms \ref{alg} and \ref{alg:z} on the Himmelblau's function and the Styblinski-Tang function (arm space normalized to $[0,1)^2$, function range rescaled to $[0,10]$). The results are in Figure \ref{fig:exp}. 
We measure performance using traditional regret and $\delta$-regret. Traditional regret can be measured because both functions are continuous, in addition to being BMO.

\begin{figure} 
	\centering
	\hspace{-8pt}\includegraphics[scale = 0.25]{./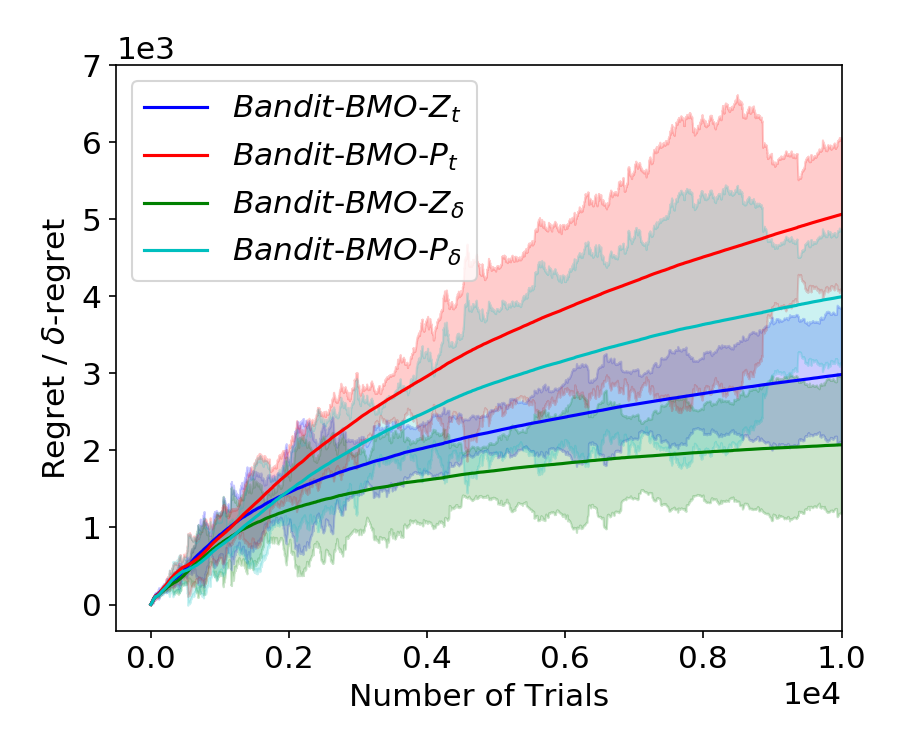} 
    \includegraphics[scale = 0.25]{./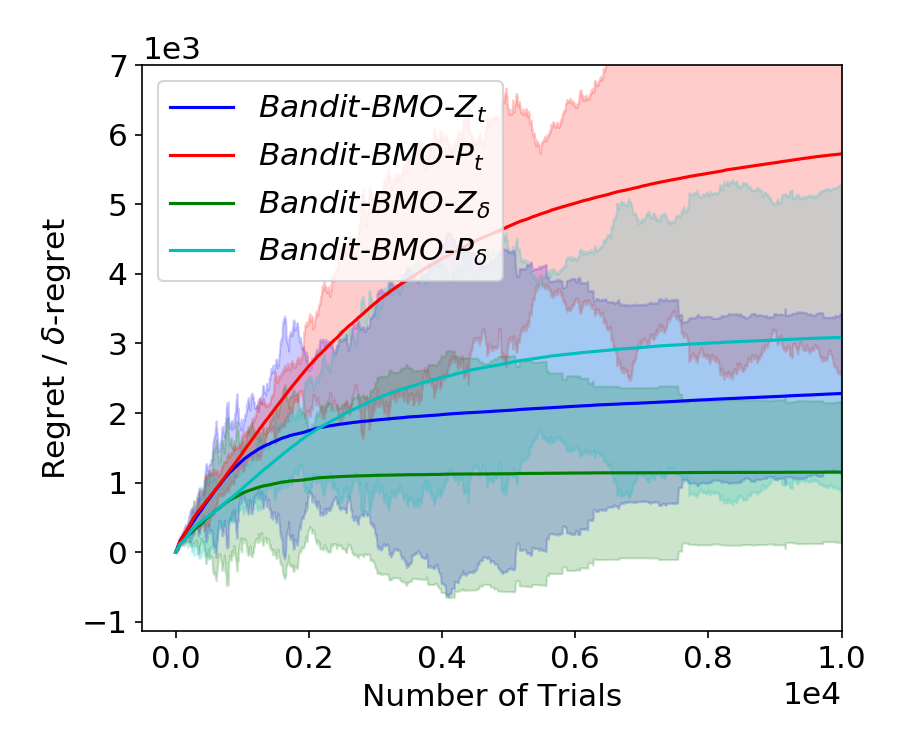}
	\caption{Algorithms \ref{alg} and \ref{alg:z} on Himmelblau's function (left) and Styblinski–Tang function (right). Each line is averaged over 10 runs. The shaded area represents one variance above and below the average regret. For the \textit{Bandit-BMO-Z} algorithm, all arms played incur regret, and each episode has 4 arm trials in it. In the figures, \textit{Bandit-BMO-Z${}_\delta$} (resp. \textit{Bandit-BMO-P${}_\delta$}) plots the $\delta$-regret ($\delta = 0.01$) for \textit{Bandit-BMO-Z} (resp. \textit{Bandit-BMO-P}). \textit{Bandit-BMO-Z${}_t$} (resp. \textit{Bandit-BMO-P${}_t$}) plots the traditional regret for \textit{Bandit-BMO-Z} (resp. \textit{Bandit-BMO-P}). For \textit{Bandit-BMO-P} algorithm, we use $\epsilon = 0.01$, $\eta = 0.001$, total number of trials $T = 10000$. For \textit{Bandit-BMO-Z} algorithm, we use $\alpha = 1$, $\epsilon = 0.01$, $\eta = 0.001$, number of episodes $T = 2500$, with four arm trials in each episode. Note that we have plotted trials (arm pulls) rather than episodes. The landscape of the test functions are in Appendix \ref{app:exp}. \label{fig:exp} }  
\end{figure}

\section{Discussion on Future Directions}
\subsection{Lower Bound}
A classic trick to derive minimax lower bounds for (stochastic) bandit problems is the ``needle-in-a-haystack.'' In this argument \citep{auer2002using}, we construct a hard problem instance, where one arm is only slightly better than the rest of the arms, making it hard to distinguish the best arm from the rest of the arms. This argument is also used in metric spaces \citep[e.g.,][]{kleinberg2008multi, lu2019optimal}. 
This argument, however, is forbidden by the definition of $\delta$-regret, since here, the set of good arms can have small measure, and will be ignored by definition. Hence, we need new insights to derive minimax lower bounds of bandit problems measured by $\delta$-regret. 

\subsection{Singularities of Analytical Forms}
In this paper, we investigate the bandit problem where the reward can have singularities in the arm space. A natural problem along this line is when the reward function has specific forms of singularities. For example, when the average reward can be written as $ f (x) = \sum_{i=1}^k \frac{1}{ (x - s_i)^{\alpha_i}} $ where $ s_i $ are the singularities and $\alpha_i$ are the ``degree'' of singularities. 
To continue leveraging the advantages of BMO function and John-Nirenberg inequalities, one might consider switching away from the Lebesgue measure and use decomposition results from classical analysis \citep[e.g.,][]{rochberg1986decomposition}. 


\section{Conclusion}
We study the bandit problem when the (expected) reward is a BMO function. We develop tools for BMO bandits, and  provide an algorithm that achieves poly-log $\delta$-regret with high probability. Our result suggests that BMO functions can be optimized (with respect to $\delta$-regret) even though they can be discontinuous and unbounded. 
\section*{Acknowledgement}

The authors thank Weicheng Ye and Jingwei Zhang for insightful discussions. The authors thank anonymous reviewers for valuable feedback.


\bibliographystyle{apa} 
\bibliography{biblio} 

\begin{thebibliography}{}

\bibitem[\protect\astroncite{Abbasi-Yadkori et~al.}{2011}]{abbasi2011improved}
Abbasi-Yadkori, Y., P{\'a}l, D., and Szepesv{\'a}ri, C. (2011).
\newblock Improved algorithms for linear stochastic bandits.
\newblock In {\em Advances in Neural Information Processing Systems}, pages
  2312--2320.

\bibitem[\protect\astroncite{Agrawal}{1995}]{agrawal1995continuum}
Agrawal, R. (1995).
\newblock The continuum-armed bandit problem.
\newblock {\em SIAM Journal on Control and Optimization}, 33(6):1926--1951.

\bibitem[\protect\astroncite{Agrawal and Goyal}{2012}]{agrawal2012analysis}
Agrawal, S. and Goyal, N. (2012).
\newblock Analysis of {T}hompson sampling for the multi-armed bandit problem.
\newblock In {\em Conference on Learning Theory}, pages 39--1.

\bibitem[\protect\astroncite{Alon et~al.}{1999}]{alon1999space}
Alon, N., Matias, Y., and Szegedy, M. (1999).
\newblock The space complexity of approximating the frequency moments.
\newblock {\em Journal of Computer and System Sciences}, 58(1):137--147.

\bibitem[\protect\astroncite{Assouad}{1983}]{assouad1983plongements}
Assouad, P. (1983).
\newblock Plongements {L}ipschitziens dans $\mathbb{R}^n$.
\newblock {\em Bulletin de la Soci{\'e}t{\'e} Math{\'e}matique de France},
  111:429--448.

\bibitem[\protect\astroncite{Audibert et~al.}{2009}]{audibert2009exploration}
Audibert, J.-Y., Munos, R., and Szepesv{\'a}ri, C. (2009).
\newblock Exploration--exploitation tradeoff using variance estimates in
  multi-armed bandits.
\newblock {\em Theoretical Computer Science}, 410(19):1876--1902.

\bibitem[\protect\astroncite{Auer}{2002}]{auer2002using}
Auer, P. (2002).
\newblock Using confidence bounds for exploitation-exploration trade-offs.
\newblock {\em Journal of Machine Learning Research}, 3(Nov):397--422.

\bibitem[\protect\astroncite{Auer and Ortner}{2010}]{auer2010ucb}
Auer, P. and Ortner, R. (2010).
\newblock {UCB} revisited: Improved regret bounds for the stochastic
  multi-armed bandit problem.
\newblock {\em Periodica Mathematica Hungarica}, 61(1-2):55--65.

\bibitem[\protect\astroncite{Auer et~al.}{2007}]{auer2007improved}
Auer, P., Ortner, R., and Szepesv{\'a}ri, C. (2007).
\newblock Improved rates for the stochastic continuum-armed bandit problem.
\newblock In {\em Conference on Computational Learning Theory}, pages 454--468.
  Springer.

\bibitem[\protect\astroncite{Bickel et~al.}{1965}]{bickel1965some}
Bickel, P.~J. et~al. (1965).
\newblock On some robust estimates of location.
\newblock {\em The Annals of Mathematical Statistics}, 36(3):847--858.

\bibitem[\protect\astroncite{Bubeck et~al.}{2013}]{bubeck2013bandits}
Bubeck, S., Cesa-Bianchi, N., and Lugosi, G. (2013).
\newblock Bandits with heavy tail.
\newblock {\em IEEE Transactions on Information Theory}, 59(11):7711--7717.

\bibitem[\protect\astroncite{Bubeck et~al.}{2011a}]{bubeck2011x}
Bubeck, S., Munos, R., Stoltz, G., and Szepesv{\'a}ri, C. (2011a).
\newblock X-armed bandits.
\newblock {\em Journal of Machine Learning Research}, 12(May):1655--1695.

\bibitem[\protect\astroncite{Bubeck and Slivkins}{2012}]{bubeck2012best}
Bubeck, S. and Slivkins, A. (2012).
\newblock The best of both worlds: stochastic and adversarial bandits.
\newblock In {\em Conference on Learning Theory}, pages 42--1.

\bibitem[\protect\astroncite{Bubeck et~al.}{2011b}]{bubeck2011lipschitz}
Bubeck, S., Stoltz, G., and Yu, J.~Y. (2011b).
\newblock Lipschitz bandits without the {L}ipschitz constant.
\newblock In {\em International Conference on Algorithmic Learning Theory},
  pages 144--158. Springer.

\bibitem[\protect\astroncite{Cope}{2009}]{cope2009regret}
Cope, E.~W. (2009).
\newblock Regret and convergence bounds for a class of continuum-armed bandit
  problems.
\newblock {\em IEEE Transactions on Automatic Control}, 54(6):1243--1253.

\bibitem[\protect\astroncite{Cwikel et~al.}{2012}]{cwikel2012new}
Cwikel, M., Sagher, Y., and Shvartsman, P. (2012).
\newblock A new look at the {J}ohn--{N}irenberg and {J}ohn--{S}tr{\"o}mberg
  theorems for {BMO}.
\newblock {\em Journal of Functional Analysis}, 263(1):129--166.

\bibitem[\protect\astroncite{Fefferman}{1979}]{fefferman1979bounded}
Fefferman, R. (1979).
\newblock Bounded mean oscillation on the polydisk.
\newblock {\em Annals of Mathematics}, 110(3):395--406.

\bibitem[\protect\astroncite{Garivier and Capp{\'e}}{2011}]{garivier2011kl}
Garivier, A. and Capp{\'e}, O. (2011).
\newblock The {KL}--{UCB} algorithm for bounded stochastic bandits and beyond.
\newblock In {\em Conference on Learning Theory}, pages 359--376.

\bibitem[\protect\astroncite{John}{1961}]{john1961rotation}
John, F. (1961).
\newblock Rotation and strain.
\newblock {\em Communications on Pure and Applied Mathematics}, 14(3):391--413.

\bibitem[\protect\astroncite{Kleinberg et~al.}{2008}]{kleinberg2008multi}
Kleinberg, R., Slivkins, A., and Upfal, E. (2008).
\newblock Multi-armed bandits in metric spaces.
\newblock In {\em ACM Symposium on Theory of Computing}, pages 681--690. ACM.

\bibitem[\protect\astroncite{Kleinberg}{2005}]{kleinberg2005nearly}
Kleinberg, R.~D. (2005).
\newblock Nearly tight bounds for the continuum-armed bandit problem.
\newblock In {\em Advances in Neural Information Processing Systems}, pages
  697--704.

\bibitem[\protect\astroncite{Krause and Ong}{2011}]{krause2011contextual}
Krause, A. and Ong, C.~S. (2011).
\newblock Contextual {G}aussian process bandit optimization.
\newblock In {\em Advances in Neural Information Processing Systems}, pages
  2447--2455.

\bibitem[\protect\astroncite{Krishnamurthy
  et~al.}{2019}]{pmlr-v99-krishnamurthy19a}
Krishnamurthy, A., Langford, J., Slivkins, A., and Zhang, C. (2019).
\newblock Contextual bandits with continuous actions: Smoothing, zooming, and
  adapting.
\newblock In {\em Conference on Learning Theory}, pages 2025--2027. PMLR.

\bibitem[\protect\astroncite{Lai and Robbins}{1985}]{lai1985asymptotically}
Lai, T.~L. and Robbins, H. (1985).
\newblock Asymptotically efficient adaptive allocation rules.
\newblock {\em Advances in Applied Mathematics}, 6(1):4--22.

\bibitem[\protect\astroncite{Lerner}{2013}]{lerner2013john}
Lerner, A.~K. (2013).
\newblock The {J}ohn--{N}irenberg inequality with sharp constants.
\newblock {\em Comptes Rendus Mathematique}, 351(11-12):463--466.

\bibitem[\protect\astroncite{Li et~al.}{2010}]{li2010contextual}
Li, L., Chu, W., Langford, J., and Schapire, R.~E. (2010).
\newblock A contextual-bandit approach to personalized news article
  recommendation.
\newblock In {\em International Conference on World Wide Web}, pages 661--670.
  ACM.

\bibitem[\protect\astroncite{Lu et~al.}{2019}]{lu2019optimal}
Lu, S., Wang, G., Hu, Y., and Zhang, L. (2019).
\newblock Optimal algorithms for {L}ipschitz bandits with heavy-tailed rewards.
\newblock In {\em International Conference on Machine Learning}, pages
  4154--4163.

\bibitem[\protect\astroncite{Maillard et~al.}{2011}]{maillard2011finite}
Maillard, O.-A., Munos, R., and Stoltz, G. (2011).
\newblock A finite-time analysis of multi-armed bandits problems with
  {K}ullback-{L}eibler divergences.
\newblock In {\em Conference On Learning Theory}, pages 497--514.

\bibitem[\protect\astroncite{Martell}{}]{jnproof}
Martell, J.
\newblock An easy proof of the {J}ohn-{N}irenberg inequality -- math blog of
  {H}yunwoo {W}ill {K}won.
\newblock \url{http://willkwon.dothome.co.kr/index.php/archives/618}, last
  accessed on 20/06/2020.

\bibitem[\protect\astroncite{Medina and Yang}{2016}]{medina2016no}
Medina, A.~M. and Yang, S. (2016).
\newblock No-regret algorithms for heavy-tailed linear bandits.
\newblock In {\em International Conference on Machine Learning}, pages
  1642--1650.

\bibitem[\protect\astroncite{Robbins}{1952}]{robbins1952some}
Robbins, H. (1952).
\newblock Some aspects of the sequential design of experiments.
\newblock {\em Bulletin of the American Mathematical Society}, 58(5):527--535.

\bibitem[\protect\astroncite{Rochberg and
  Semmes}{1986}]{rochberg1986decomposition}
Rochberg, R. and Semmes, S. (1986).
\newblock A decomposition theorem for {BMO} and applications.
\newblock {\em Journal of functional analysis}, 67(2):228--263.

\bibitem[\protect\astroncite{Seldin and Slivkins}{2014}]{seldin2014one}
Seldin, Y. and Slivkins, A. (2014).
\newblock One practical algorithm for both stochastic and adversarial bandits.
\newblock In {\em International Conference on Machine Learning}, pages
  1287--1295.

\bibitem[\protect\astroncite{Shamir}{2011}]{shamir2011variant}
Shamir, O. (2011).
\newblock A variant of {A}zuma's inequality for martingales with sub-{G}aussian
  tails.
\newblock {\em arXiv preprint arXiv:1110.2392}.

\bibitem[\protect\astroncite{Shao et~al.}{2018}]{shao2018almost}
Shao, H., Yu, X., King, I., and Lyu, M.~R. (2018).
\newblock Almost optimal algorithms for linear stochastic bandits with
  heavy-tailed payoffs.
\newblock In {\em Advances in Neural Information Processing Systems}, pages
  8420--8429.

\bibitem[\protect\astroncite{Slavin and Vasyunin}{2017}]{slavin2017john}
Slavin, L. and Vasyunin, V. (2017).
\newblock The {J}ohn--{N}irenberg constant of ${BMO}^p$, $1 \le p \le 2$.
\newblock {\em St. Petersburg Mathematical Journal}, 28(2):181--196.

\bibitem[\protect\astroncite{Slivkins}{2014}]{slivkins2014contextual}
Slivkins, A. (2014).
\newblock Contextual bandits with similarity information.
\newblock {\em The Journal of Machine Learning Research}, 15(1):2533--2568.

\bibitem[\protect\astroncite{Srinivas et~al.}{2010}]{srinivas2009gaussian}
Srinivas, N., Krause, A., Kakade, S., and Seeger, M. (2010).
\newblock Gaussian process optimization in the bandit setting: No regret and
  experimental design.
\newblock In {\em International Conference on Machine Learning}.

\bibitem[\protect\astroncite{Stein and Murphy}{1993}]{stein1993harmonic}
Stein, E.~M. and Murphy, T.~S. (1993).
\newblock {\em Harmonic analysis: real-variable methods, orthogonality, and
  oscillatory integrals}, volume~3.
\newblock Princeton University Press.

\bibitem[\protect\astroncite{Tao and Vu}{2015}]{tao2015random}
Tao, T. and Vu, V. (2015).
\newblock Random matrices: universality of local spectral statistics of
  non-hermitian matrices.
\newblock {\em The Annals of Probability}, 43(2):782--874.

\bibitem[\protect\astroncite{Thompson}{1933}]{thompson1933likelihood}
Thompson, W.~R. (1933).
\newblock On the likelihood that one unknown probability exceeds another in
  view of the evidence of two samples.
\newblock {\em Biometrika}, 25(3/4):285--294.

\bibitem[\protect\astroncite{Vu}{2002}]{vu2002concentration}
Vu, V.~H. (2002).
\newblock Concentration of non-{L}ipschitz functions and applications.
\newblock {\em Random Structures \& Algorithms}, 20(3):262--316.

\bibitem[\protect\astroncite{Wang et~al.}{2019}]{tianyu2019}
Wang, T., Ye, W., Geng, D., and Rudin, C. (2019).
\newblock Towards practical {L}ipschitz stochastic bandits.
\newblock {\em arXiv preprint arXiv:1901.09277}.

\bibitem[\protect\astroncite{Wanigasekara and
  Yu}{2019}]{christina2019nonparametric}
Wanigasekara, N. and Yu, C. (2019).
\newblock Nonparametric contextual bandits in an unknown metric space.
\newblock In {\em Advances in Neural Information Processing Systems}.

\end{thebibliography}


\appendix
\newpage 

\newpage
\onecolumn

\section{Main Proofs}  \label{lem:main-proof}
For readability, we reiterate the lemma statements before presenting the proofs. 

\subsection{Proof of Lemma \ref{lem:delta-regret-set-measure}}

\begin{lemman} [ \ref{lem:delta-regret-set-measure} ]
    Let $f$ be the reward function. For any $f$-admissible $\delta \ge 0$, let $ S^\delta := \left\{ a \in [0,1)^d : f(a) > f^\delta \right\}$. Then we have $S^\delta$ measurable and $\mu(S^\delta) = \delta$. 
\end{lemman}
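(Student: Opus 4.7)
My plan is to establish the two conclusions separately: measurability first, then compute the measure by a right-continuity argument applied to the layer-cake function $g(z) := \mu(\{a \in [0,1)^d : f(a) > z\})$.

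For measurability, I would note that $f \in BMO([0,1)^d, \mu) \subset L^1_{loc}([0,1)^d, \mu)$, so $f$ is measurable. Hence $S^\delta = f^{-1}((f^\delta, \infty))$ is the preimage of an open set under a measurable function and is measurable.

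For the measure identity, the key step is to show that $g$ is right-continuous. This follows from continuity of $\mu$ from below: since $\{f > z + 1/n\} \uparrow \{f > z\}$ as $n \to \infty$, we have $g(z + 1/n) \to g(z)$, and combined with monotonicity of $g$ this gives right-continuity at every $z \in \mathbb{R}$. Also, $g$ is non-increasing. Now $\delta$ being $f$-admissible means $F^\delta \neq \emptyset$, and by the definition of infimum I can pick $z_n \in F^\delta$ with $z_n \downarrow f^\delta$. Since $g(z_n) = \delta$ for each $n$, right-continuity yields
\begin{equation*}
    \mu(S^\delta) \;=\; g(f^\delta) \;=\; \lim_{n \to \infty} g(z_n) \;=\; \delta.
\end{equation*}

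I expect the main (very mild) obstacle to be articulating why $f^\delta$ itself lies in $F^\delta$, i.e., why the infimum is actually attained as a value of $g$. The right-continuity observation is precisely what resolves this: even if $F^\delta$ is a half-open interval $(f^\delta, \cdot]$ a priori, the limit from the right forces $f^\delta \in F^\delta$. No further properties of $f$ beyond measurability (and finiteness of $\mu$ on $[0,1)^d$, needed implicitly to guarantee $g$ takes finite values) are required, so the BMO hypothesis is used only to the extent that it guarantees $f$ is a measurable function on the unit cube.
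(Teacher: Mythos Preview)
Your proposal is correct and follows essentially the same route as the paper. The paper splits into the cases $f^\delta \in F^\delta$ (trivial) and $f^\delta \notin F^\delta$, and in the latter case picks $z_i \in F^\delta$ with $z_i \to f^\delta$, sets $S_i := \{f > z_i\}$, and uses continuity of measure from below to obtain $\mu(S^\delta) = \mu(\cup_i S_i) = \lim_i \mu(S_i) = \delta$; your right-continuity of $g(z) = \mu(\{f>z\})$ is exactly this argument repackaged, so the two proofs coincide in substance.
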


\begin{proof}[Proof]
    Recall $$F^\delta :=  \left\{ z \in \mathbb{R} :  \mu ( \{a \in [0,1)^d: f(a) > z \} ) = \delta \right\}, \qquad f = \inf F^\delta . $$ 
    We consider the following two cases.
    
    \textbf{Case 1:} $f^\delta \in F^\delta$, then by definition (of $F^\delta$), $\mu (S^\delta) = \delta$. 
    
    \textbf{Case 2:} $f^\delta \notin F^\delta $ ($F^\delta$ is left open). Then by definition of the infimum operation, for any $i = 1,2,3,\cdots$, there exists $z_i \in F^\delta $, such that $f^\delta < z_i \le f^\delta + \frac{1}{i} $. 
    Thus $\lim_{i \rightarrow \infty} z_i = f^\delta$. 
    We know that $f$ is Lebesgue measurable, since
    \begin{align*}
        f \in BMO(\mathbb{R}^d, \mu) 
        &\Rightarrow f \text{ is Lebesgue measurable. }
    \end{align*}
     Let us define $S_i := \left\{ a \in [0,1)^d: f (a) > z_i \right\}$. By this definition, $S_1\subseteq S_2 \subseteq S_3 \cdots$. Also $S_i$ is Lebesgue measurable, since it is the pre-image of the open set $(z_i, \infty)$ under the Lebesgue measurable function $f$. 
    By the above construction of $S_i$, we have $\mu(S_i) = \delta $ for all $i = 1,2,3, \cdots$.
    By continuity of measure from below, 
    \begin{align}
        \mu \left( \cup_{i=1}^\infty S_i \right) = \lim_{i \rightarrow \infty } \mu \left( S_i \right). 
    \end{align}
    We also have $S^\delta = \cup_{i=1}^\infty S_i$. This is because 
    \begin{itemize}
    	\item[(1)] $S^\delta \supseteq  \cup_{i=1}^\infty S_i$, since by definition, $S_i \subseteq S^\delta$ for all $i = 1,2,3,\cdots.$;
    	\item[(2)] $S^\delta \subseteq  \cup_{i=1}^\infty S_i$, since $\lim_{i \rightarrow \infty} z_i = f^\delta$ and therefore every element in $S^\delta$ is an element in $\cup_{i=1}^\infty S_i$. 
    \end{itemize}
    Hence, 
    \begin{align}
        \mu( S^\delta ) = \mu \left( \cup_{i=1}^\infty S_i \right) = \lim_{i \rightarrow \infty } \mu \left( S_i \right) = \delta, 
    \end{align}
    where the last equality uses $\mu \left( S_i \right) = \delta$ for all $i = 1,2,3,\cdots$. 
\end{proof}

\subsection{Lemma \ref{lem:extended-azuma} and Proof of Lemma \ref{lem:extended-azuma}}
\label{app:pf-extended-azuma}

This lemma is Proposition 34 by \citet{tao2015random}, and can be derived using Lemma 3.1 by \citet{vu2002concentration}. We prove a proof below for completeness. We will use this lemma to prove Lemma \ref{lem:bound-on-event}. 

\begin{lemma}[Proposition 34 by \citet{tao2015random}] \label{lem:extended-azuma}
	Consider a martingale sequence $ X_1, X_2,  \cdots$ adapted to filtration $\mathcal{F}_1, \mathcal{F}_2 \cdots$. For constants $c_1, c_2, \cdots < \infty$, we have 
	\begin{align}
		&\mathbb{P} \left( \left| X_n - X_0 \right| > \lambda \sqrt{ \sum_{i=1}^n c_i^2 } \right) \le 2 \exp \left( - \frac{\lambda^2}{ 2  } \right) + \sum_{i=1}^n \mathbb{P} \left( \left| X_i - X_{i-1} \right| > c_i \right).   
	\end{align}
\end{lemma}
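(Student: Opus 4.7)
The plan is to reduce the statement to the classical bounded-difference Azuma--Hoeffding inequality by peeling off the contribution of the large jumps via a union bound, and bounding the remainder via a truncation / stopping argument.

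First, I would introduce the martingale differences $D_i := X_i - X_{i-1}$ and the ``bad event''
\begin{equation*}
	B := \bigcup_{i=1}^{n} \{|D_i| > c_i\},
\end{equation*}
which by the union bound satisfies
$
	\mathbb{P}(B) \le \sum_{i=1}^{n} \mathbb{P}(|D_i| > c_i).
$
This immediately accounts for the second term on the right-hand side of the claimed inequality. The remainder of the argument is devoted to bounding $\mathbb{P}\bigl(|X_n - X_0| > \lambda \sqrt{\sum_i c_i^2}\bigr)$ restricted to $B^c$.

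Next, I would construct an auxiliary martingale $\widetilde{X}$ that coincides with $X$ on $B^c$ and has a.s.\ bounded increments $|\widetilde{X}_i - \widetilde{X}_{i-1}| \le c_i$. The natural device is to stop at the first violation time $\tau := \inf\{i : |D_i| > c_i\}$ (with $\tau = \infty$ on $B^c$) and to replace the (potentially unbounded) increment at step $\tau$ by its conditional mean so as to restore the martingale property while keeping the a.s.\ bound $c_i$; equivalently, one can work with the centered truncations $\widetilde{D}_i := D_i \mathbf{1}[|D_i| \le c_i] - \mathbb{E}[D_i \mathbf{1}[|D_i| \le c_i]\mid \mathcal{F}_{i-1}]$ restricted up to $\tau$. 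By construction, $\widetilde{X}_n = X_n$ on $B^c$, and $\widetilde{X}$ satisfies the hypotheses of standard Azuma--Hoeffding, yielding
\begin{equation*}
	\mathbb{P}\!\left(|\widetilde{X}_n - X_0| > \lambda \sqrt{\textstyle \sum_{i=1}^{n} c_i^2}\right) \le 2 \exp\!\left(-\frac{\lambda^2}{2}\right).
\end{equation*}

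Finally, combining the two pieces via
\begin{equation*}
	\mathbb{P}(|X_n - X_0| > t) \le \mathbb{P}(|X_n - X_0| > t,\, B^c) + \mathbb{P}(B) \le \mathbb{P}(|\widetilde{X}_n - X_0| > t) + \mathbb{P}(B),
\end{equation*}
with $t = \lambda \sqrt{\sum_i c_i^2}$, delivers the stated bound. I expect the main obstacle to be Step~2: packaging the stopping+recentering so that the resulting auxiliary process is simultaneously (i) a genuine martingale, (ii) has a.s.\ increments bounded by $c_i$, and (iii) agrees with $X$ on $B^c$, all at once. Everything else (the union bound on $B$ and the invocation of classical Azuma) is routine.
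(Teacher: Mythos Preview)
Your proposal is correct and follows essentially the same route as the paper: split off the bad event $B=\overline{\mathcal{G}}_n$ by a union bound, build an auxiliary bounded-increment martingale that agrees with $X$ on $B^c$, and apply classical Azuma--Hoeffding to the auxiliary process. The only difference is cosmetic: the paper simply sets $X_i' := X_i\,\mathbb{I}[|X_i - X_{i-1}|\le c_i]$ and asserts it is a martingale, whereas you are (rightly) more careful, proposing a stopping-plus-recentering construction to guarantee the martingale property; your version is the technically cleaner implementation of the same idea.
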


\begin{proof}
	Define the ``good event'' $\mathcal{G}_n := 
	\left\{  
		|X_i - X_{i-1}| \le c_i, \text{ for all } i \le n
	\right\}$.  Rewrite the above probability as
	\begin{align}
		&\mathbb{P} \left( \left| X_n - X_0 \right| > \lambda \sqrt{ \sum_{i=1}^n c_i^2 } \right) \nonumber \\
		=&
		\mathbb{P} \left( \left. \left| X_n - X_0 \right| > \lambda \sqrt{ \sum_{i=1}^n c_i^2 } \right| \mathcal{G}_n \right) \mathbb{P} (\mathcal{G}_n) + 
		 \mathbb{P} \left( \left. \left| X_n - X_0 \right| > \lambda \sqrt{ \sum_{i=1}^n c_i^2 } \right| \overline{\mathcal{G}}_n \right) \left( 1-  \mathbb{P} (\mathcal{G}_n) \right) \nonumber \\
		 \le& \mathbb{P} \left( \left. \left| X_n - X_0 \right| > \lambda \sqrt{ \sum_{i=1}^n c_i^2 } \right| \mathcal{G}_n \right)  + \left( 1-  \mathbb{P} (\mathcal{G}_n) \right). \label{eq:split-event}
	\end{align}
	In (\ref{eq:split-event}), the first term can be bounded by applying Azuma's inequality for martingales of bounded difference, and the second term is the probability of there existing at least one difference being large. 
	For the first term, we define $X_i' := X_i \mathbb{I} [ |X_i - X_{i-1}| \le c_i ]$. It is clear that $\left\{ X_i' \right\}_{i} $ is also martingale sequence adapted to $\mathcal{F}_1, \mathcal{F}_2, \cdots$. Using this new sequence, we have 
	\begin{align*}
		&\mathbb{P} \left( \left. \left| X_n - X_0 \right| > \lambda \sqrt{ \sum_{i=1}^n c_i^2 } \right| {\mathcal{G}}_n \right) 
		= 
		\mathbb{P} \left(  \left| X_n' - X_0 ' \right| > \lambda \sqrt{ \sum_{i=1}^n c_i^2 }  \right) 
		\le 
		2 \exp \left( - \frac{\lambda^2}{ 2  } \right) 
		,
	\end{align*}
	where the last inequality is a direct consequence of Azuma's inequality. 
	
	Finally, we take a union bound and a complement to get $\mathbb{P} \left( \overline{\mathcal{G}}_t \right) \le \sum_{i=1}^n \mathbb{P} \left( |X_i - X_{i-1}| > c_i \right)$. This finishes the proof.
\end{proof}


\subsection{Proof of Lemma \ref{lem:bound-on-event}} 
\label{app:pf-bound-on-event} 

In order to prove Lemma \ref{lem:bound-on-event}, we need a variation of Azuma's inequality (Lemma \ref{lem:extended-azuma} in Appendix \ref{app:pf-extended-azuma}, Proposition 34 by \citet{tao2015random}). 

%

\begin{lemman}[\ref{lem:bound-on-event}] 
	Pick $T \ge 1$ and $\epsilon \in (0,1)$. With probability at least $ 1 -  \frac{\epsilon}{T}  $, the event $ \mathcal{E}_t (q) $ holds for any $q \in \mathcal{Q}_t$ at any time $t$, where 
	\begin{align*}
		\mathcal{E}_t (q) &:= 
		\begin{Bmatrix*}[l] 
		\left| \left< f \right >_{ q } - m_{t} ( q ) \right| 
		\le H_t ( q ) 
		\end{Bmatrix*} , \\
		 H_t (q) &= \BoundHoeffding{t}{q}. 
	\end{align*}
\end{lemman}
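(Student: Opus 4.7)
Fix any $t\le T$ and any dyadic cube $q\in\mathcal{Q}_t$. The plan is to rewrite $n_t(q)(m_t(q)-\langle f\rangle_q)$ as a martingale-difference sum and then invoke Lemma~\ref{lem:extended-azuma} with a John--Nirenberg-based truncation for the unbounded increments.

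The first step is a partition-structural observation that makes the increments centred. Since $q$ has not been split by time $t$ and the dyadic cubes are pairwise either nested or disjoint, for every $i<t$ the selected cube $Q_i\in\mathcal{Q}_{i-1}$ must satisfy exactly one of (i) $Q_i\supseteq q$ or (ii) $Q_i\cap q=\varnothing$; the case of $Q_i$ being a strict subcube of $q$ is ruled out because $q$ is still present in $\mathcal{Q}_t$. Defining
\[D_i:=(Y_i-\langle f\rangle_q)\,\mathbb{I}[A_i\in q],\]
using that $A_i$ is uniform on $Q_i$ and that $\mathbb{E}[\mathpzc{E}_a\mid a]=0$, case (i) gives $\mathbb{E}[D_i\mid\mathcal{F}_i]=\mu(Q_i)^{-1}\int_q(f-\langle f\rangle_q)\,d\mu=0$ and case (ii) gives $D_i\equiv 0$. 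Hence $\{D_i\}$ is a martingale-difference sequence adapted to $\{\mathcal{F}_{i+1}\}$, and $\sum_{i<t}D_i=n_t(q)\bigl(m_t(q)-\langle f\rangle_q\bigr)$ whenever $n_t(q)\ge 1$.

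Next I would apply Lemma~\ref{lem:extended-azuma} after an optional-sampling time change. Let $\tau_1<\tau_2<\cdots$ be the successive hit-times of $q$ and set $M_k:=\sum_{j\le k}D_{\tau_j}$, which is again a martingale. Each increment obeys $|D_{\tau_k}|\le|f(A_{\tau_k})-\langle f\rangle_q|+D_{\mathpzc{E}}$, and since conditionally on $A_{\tau_k}\in q$ the arm $A_{\tau_k}$ is uniform on $q$, Theorem~\ref{thm:john-nirenberg} yields $\mathbb{P}(|f(A_{\tau_k})-\langle f\rangle_q|>\Psi)\le e^{-\Psi}$. Choosing $c_k=\Psi+D_{\mathpzc{E}}$ and $\lambda=\sqrt{2\log(2T^2/\epsilon)}$ in Lemma~\ref{lem:extended-azuma} gives, for each fixed $n\ge 1$,
\[\mathbb{P}\!\left(|M_n|>(\Psi+D_{\mathpzc{E}})\sqrt{2n\log(2T^2/\epsilon)}\right)\le\tfrac{\epsilon}{T^2}+n\,e^{-\Psi}.\]
Since $\Psi\ge\log(T^2/\epsilon)$ by (\ref{eq:def-Psi}), summing over $n\in\{1,\dots,T\}$ keeps the total failure probability at order $\epsilon/T$; dividing the inner inequality by $n=n_t(q)$ is exactly the event $|m_t(q)-\langle f\rangle_q|\le H_t(q)$ to be shown.

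Finally I would handle the degenerate case $n_t(q)=0$ deterministically: here $m_t(q)=0$ and $\widetilde n_t(q)=1$, so the claim reduces to $|\langle f\rangle_q|\le H_t(q)$. Using Assumption~\ref{assumption:mean-zero} that $\langle f\rangle_{[0,1)^d}=0$ and chaining the BMO inequality of Definition~\ref{def:rbmo} along the dyadic ancestors of $q$ yields $|\langle f\rangle_q|\lesssim\log_2(1/\mu(q))\lesssim\log_2(1/\eta)$, since the partition-refinement rule keeps $\mu(q)\ge\eta$; this is bounded by $\Psi\le H_t(q)$, with no probability spent. I expect the main technical hurdle to be reconciling the self-normalized scaling $1/\sqrt{\widetilde n_t(q)}$ (rather than the naive $1/\sqrt t$) with the unboundedness of a generic BMO function: the optional-sampling time change together with the union bound over $n$ addresses the scaling, while truncation at the John--Nirenberg level $\Psi$ absorbs the heavy tail of $f(A_{\tau_k})-\langle f\rangle_q$.
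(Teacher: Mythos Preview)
Your approach is essentially the same as the paper's: both set up $(Y_i-\langle f\rangle_q)\mathbb{I}[A_i\in q]$ as a (skipped) martingale difference, invoke the John--Nirenberg inequality to control $\mathbb{P}(|f(A_i)-\langle f\rangle_q|>\Psi)\le \epsilon/T^2$, feed this into the extended Azuma inequality (Lemma~\ref{lem:extended-azuma}), and handle the empty-cube case deterministically via Assumption~\ref{assumption:mean-zero} and a dyadic chaining of averages (what the paper records as Proposition~\ref{prop:bound-on-consecutive-mean} together with the minimal-cube-size bound $\mu(q)\ge\eta$ of Proposition~\ref{prop:cube-number}).

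The one place you differ is the self-normalization: the paper applies Lemma~\ref{lem:extended-azuma} directly to the skipped martingale and simply writes $\sqrt{n_t(q)}$ on the deviation side, whereas you make this rigorous via the optional-sampling time change and a union bound over the possible values of $n$. That is a genuine clarification, since Lemma~\ref{lem:extended-azuma} as stated takes deterministic $c_i$. One small accounting point: summing $\epsilon/T^2+n e^{-\Psi}$ naively over $n\le T$ yields $O(\epsilon)$, not $\epsilon/T$. To recover the stated $\epsilon/T$ you should avoid double-counting the truncation failures---the events $\{|D_{\tau_k}|>\Psi+D_{\mathpzc{E}}\}$ are the same across all $n\ge k$, so their total mass is at most $T\cdot\epsilon/T^2=\epsilon/T$, and the Azuma term contributes another $T\cdot\epsilon/T^2=\epsilon/T$ after the union over $n$. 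With that bookkeeping your argument matches the lemma as stated.
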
 

\begin{proof}
\textbf{Case I:} We first take care of the case when $q$ contains at least one observation. 
Define $$\mathcal{F}_i' := \sigma ( Q_1, A_1, Y_1, \cdots, Q_{i-1}, A_{i-1}, Y_{i-1}, Q_i, A_i ) .$$


By our partition refinement rule, we have that for any $t, t'$ such that $t \ge t'$ and $q \in \mathcal{Q}_t$, there exists $q' \in \mathcal{Q}_{t'}$ such that $q \subseteq q'$. Thus for any $i \le t$, and any $q \in \mathcal{Q}_t$, we have either $Q_i \supseteq q $ or $Q_i \cap q = \emptyset$ ($Q_i$ is the cube played at time $i\le t$). Thus, we have 
\begin{align}
	\mathbb{E} \left[ Y_i \mathbb{I}_{ [A_i \in q] } | \mathcal{F}_{i}' \right] &= 
	\begin{cases}
		 \left< f \right>_q \mathbb{I}_{ [A_i \in q] }, \quad \text{ if } Q_i \supseteq q, \quad \text{i.e., $Q_i$ contains $q$ and the value depends on whether $q$ contains $A_i$},  \\
		 0, \quad \text{ if $Q_i \cap q = \emptyset$,} \quad \text{i.e., in this case $ \mathbb{I}_{ [A_i \in q] } = 0 $ since $A_i \in Q_i$ and $Q_i \cap q = \emptyset$, }
	\end{cases} 
	\label{eq:two-cases} \\
	&= \left< f \right>_q \mathbb{I}_{ [A_i \in q] }, \nonumber
\end{align}
where $ \mathbb{I}_{ [A_i \in q] } $ is $\mathcal{F}_i'$-measurable. In (\ref{eq:two-cases}), the two cases are exhaustive as discussed above.  

Therefore the sequence $\left\{ \left( Y_i  - \left< f \right>_q \right)\mathbb{I}_{ [A_i \in q] } \right\}_i$ is a (skipped) martingale difference sequence adapted to $\F_i'$, with the skipping event $ \mathbb{I}_{ [A_i \in q] } $ being $\mathcal{F}_i'$-measurable. 

Let $ A' $ be a uniform random variable drawn from the cube $q$. We have 
\begin{align}
	\mathbb{P} \left( \left| \left( f(A_i)  - \left< f \right>_q \right) \mathbb{I}_{ [A_i \in q ] } \right| > \Psi  \right) &= 
	\begin{cases}
		\mathbb{P} \left( \left| f(A')  - \left< f \right>_q  \right| > \Psi \right), \quad \text{ if } A_i \in q, \\
		0, \quad \text{ otherwise }. 
	\end{cases} \nonumber \\
	&\le \mathbb{P} \left( \left| f(A')  - \left< f \right>_q  \right| > \Psi \right) \nonumber \\
	&= \frac{ \mu \left( a \in q : \left| \left( f(a)  - \left< f \right>_q \right) \right| > \Psi \right) }{ \mu (q) } \nonumber \\
	&\le \frac{ \mu(q)  \exp \left( - {\Psi } \right)}{ \mu (q) } \le \frac{ \epsilon }{T^2}, \label{eq:use-JN} 
\end{align} 
where (\ref{eq:use-JN}) is from the John-Nirenberg inequality. 

Next, since 
\begin{align*}
    \left| \left( Y_i  - \left< f \right>_q \right) \mathbb{I}_{ [A_i \in q ] } \right| \le \left| \left( f(A_i)  - \left< f \right>_q \right) \mathbb{I}_{ [A_i \in q ] } \right| + \left| \left( Y_i  - f(A_i) \right) \mathbb{I}_{ [A_i \in q ] } \right|, 
\end{align*}
we have 
\begin{align}
    \mathbb{P} \left( \left| \left( Y_i  - \left< f \right>_q \right) \mathbb{I}_{ [A_i \in q ] } \right| \le \Psi + D_{\mathpzc{E}} \right) 
    &\ge \mathbb{P} \left(  \left| \left( f(A_i)  - \left< f \right>_q \right) \mathbb{I}_{ [A_i \in q ] } \right| + \left| \left( Y_i  - f(A_i) \right) \mathbb{I}_{ [A_i \in q ] } \right| \le \Psi + D_{\mathpzc{E}} \right) \nonumber \\ 
    &\ge \mathbb{P} \left(  \left| \left( f(A_i)  - \left< f \right>_q \right) \mathbb{I}_{ [A_i \in q ] } \right| \le \Psi \;\; \text{ and } \;\; \left| \left( Y_i  - f(A_i) \right) \mathbb{I}_{ [A_i \in q ] } \right| \le  D_{\mathpzc{E}} \right) \nonumber \\
    &= 1 - \mathbb{P} \left(  \left| \left( f(A_i)  - \left< f \right>_q \right) \mathbb{I}_{ [A_i \in q ] } \right| > \Psi \;\; \text{ or } \;\; \left| \left( Y_i  - f(A_i) \right) \mathbb{I}_{ [A_i \in q ] } \right| >  D_{\mathpzc{E}} \right) \nonumber \\
    &\ge 1 - \mathbb{P} \left(  \left| \left( f(A_i)  - \left< f \right>_q \right) \mathbb{I}_{ [A_i \in q ] } \right| > \Psi \right) - \mathbb{P} \left( \left| \left( Y_i  - f(A_i) \right) \mathbb{I}_{ [A_i \in q ] } \right| >  D_{\mathpzc{E}} \right), \label{eq:use-union}
\end{align}
where (\ref{eq:use-union}) uses a union bound. 

By a union bound and the John-Nirenberg inequality, for any $i \le t$, and $q \in \mathcal{Q}_t$, we have
	\begin{align}
	    \mathbb{P} \left( \left| \left( Y_i  - \left< f \right>_q \right) \mathbb{I}_{ [A_i \in q ] } \right| > \Psi + D_{\mathpzc{E}} \right) 
		&= 1 - 
		\mathbb{P} \left( \left| \left( Y_i  - \left< f \right>_q \right) \mathbb{I}_{ [A_i \in q ] } \right| \le \Psi + D_{\mathpzc{E}} \right) \nonumber \\
		&\le
		\mathbb{P} \left( \left| \left( f(A_i)  - \left< f \right>_q \right) \mathbb{I}_{ [A_i \in q ] } \right| > \Psi  \right) + \mathbb{P} \left( \left| \left( f(A_i)  - Y_i \right) \mathbb{I}_{ [A_i \in q ] } \right| >  D_{\mathpzc{E}} \right) \label{eq:use-use-union}
		\\
		&= \mathbb{P} \left( \left| \left( f(A_i)  - \left< f \right>_q \right) \mathbb{I}_{ [A_i \in q ] } \right| > \Psi  \right) \label{eq:use-bounded-noise} \\
		&\le \frac{ \epsilon }{T^2}, \label{eq:use-use-JN}
	\end{align} 
	where (\ref{eq:use-use-union}) uses (\ref{eq:use-union}), (\ref{eq:use-bounded-noise}) uses the boundedness of noise (\textbf{N1}), and (\ref{eq:use-use-JN}) uses (\ref{eq:use-JN}).  
	
%
	To put it all together, we can apply Lemma \ref{lem:extended-azuma} to the (skipped) martingale $\left\{ \sum_{j=1}^i \left( Y_j  - \left< f \right>_q \right)\mathbb{I}_{ [A_j \in q] } \right\}_{i=1,2,\cdots}$ (with $c_i = \Psi + D_{\mathpzc{E}}$, $\lambda = \sqrt{2 \log (2T^2/\epsilon)}$, and $X_i = \left( Y_i  - \left< f \right>_q \right)\mathbb{I}_{ [A_i \in q] } $)
	to get for $T \ge 2$ and a cube $q \in \mathcal{Q}_t$ such that $n_t (q) > 0$, 
	\begin{align}
		&\mathbb{P} \left(  \left| 
			\sum_{i=1}^{t-1} \left( Y_i - \left< f \right>_q \right)\mathbb{I}_{ [A_i \in q] } 
		\right|  > (\Psi + D_{\mathpzc{E}})  \sqrt{ n_t (q) }  \sqrt{ \HoeffTSq }
		\right) \label{eq:for-skipping} \\
		\le & 
		2 \exp \left( - \frac{ \HoeffTSq }{2} \right) + 
		\sum_{i=1}^{t-1} \mathbb{P} \left( \left|
		\left( Y_i  - 
		\left< f \right>_q \right) 
				\mathbb{I}_{ [A_i \in q] }
			\right| > \Psi + D_{\mathpzc{E}} \right) \label{eq:use-bound-52} \\
		\le& \frac{ \epsilon }{ T^2 } + (t-1) \frac{ \epsilon }{T^2} \le \frac{\epsilon }{T}, \nonumber
	\end{align}
where (\ref{eq:for-skipping}) uses Lemma \ref{lem:extended-azuma}, (\ref{eq:use-bound-52}) uses (\ref{eq:use-use-JN}) for the summation term. 

Since $n_t(q) > 0$, we use 
\begin{align*} 
	m_t (q)  = \frac{1}{n_t (q)} \sum_{i=1}^{t-1} Y_i \mathbb{I}_{ [A_i \in q]}, 
\end{align*} 
to rewrite (\ref{eq:for-skipping}) by dividing both sides by $ n_t (q) $ to get
\begin{align*} 
	&\mathbb{P} \left(  \left| 
			m_t (q) - \left< f \right>_q
		\right|  > \frac{ (\Psi + D_{\mathpzc{E}}) }{\sqrt{n_t (q)}} \sqrt{ \HoeffTSq } 
		\right) \le \frac{\epsilon}{T} . 
\end{align*} 

\textbf{Case II:} Next, we consider the case where $q$ contains no observations. 

In order to do this, we need Propositions \ref{prop:cube-number} and \ref{prop:bound-on-consecutive-mean}, which are proved in \ref{app:pf-cube-number} and \ref{app:pf-prop-consecutive-mean}. 

\begin{proposition} \label{prop:cube-number}	Following (\ref{eq:partition-rule}), the minimal cube measure is at least ${\eta}$. Thus the maximal number of cubes produced by Algorithm \ref{alg} is $\frac{1}{\eta}$, since the arm space is of measure 1. 
\end{proposition} 


\begin{proposition} \label{prop:bound-on-consecutive-mean}
	For a function $f \in BMO(\mathbb{R}^d, \mu)$, and rectangles $ q_{{}_0}, q_{{}_1}, \cdots, q_{{}_k} $ such that $ q_{{}_0} \subseteq q_{{}_1} \subseteq q_{{}_2} \subseteq \cdots \subseteq q_{{}_k}, $ and constant $K \ge 1$ such that $ K \mu ( q_{{{}_i}} )  \ge \mu ( q_{{}_{i+1}} ) $ for all $i \in  [0,k-1]$, we have 
	\begin{align*}
		\left| \left< f \right>_{q_{{}_0}} - \left< f \right>_{q_{{}_k} } \right| \le K k \left\| f \right\|. 
	\end{align*}
\end{proposition}

Let's continue with the proof of Lemma \ref{lem:bound-on-event}.
By the lower bound on cube measure (Proposition \ref{prop:cube-number}), we know that $ \mu (q) \ge { \eta} $ for any $q$ generated by the algorithm. 
Let us construct a sequence of hyper-rectangles $ q = q_{{}_0}, q_{{}_1}, \cdots, q_{{}_k} \subseteq [0,1)^d $, 
such that $ q_{{}_i} \subseteq q_{{}_{i+1}} $ for $ i = 0,1,\cdots,k $, $ \mu(q_{{}_{i+1}}  ) = 2 \mu(q_{{}_i} ) $, and $ q_{{}_{k}} = [0,1)^d $. Since $ q $ is generated by the algorithm, we know $ \mu (q) \ge \eta $ (Proposition \ref{prop:cube-number}). For this sequence of hyper-rectangles, $k \le \log_2 ( 1 / \eta ) $. 
	
Then by Proposition \ref{prop:bound-on-consecutive-mean}, 
	\begin{align}
		\left| \left< f \right>_{q} -  \left< f \right>_{[0,1)^d} \right| 
		\le 2 \log_2 (1/\eta)  \left\| f \right\|. \label{eq:lem-con-no-hit-dummy}
	\end{align}
	Thus by definition of the functions $m_t$, $n_t$ for cubes with no observations, for a cube $q$ such that $n_{t}(q) = 0$, 
	\begin{align*}
		\left| \left< f \right>_q - m_{t}(q) \right| &\overset{ \text{\textcircled{1}} }{=} \left| \left< f \right>_q \right| 
		\overset{ \text{\textcircled{2}} }{=} \left| \left< f \right>_q - \left< f \right>_{[0,1)^d} \right| \overset{ \text{\textcircled{3}} }{\le} 2 \log_2 (1/\eta) \left\| f \right\| \overset{ \text{\textcircled{4}} }{\le } \frac{\Psi }{\sqrt{ \max (1, n_{t} (q) ) } } \overset{ \text{\textcircled{5}} }{\le } \frac{ (\Psi + D_{\mathpzc{E}} ) \sqrt{ 2\log (2T^2 / \epsilon) } }{\sqrt{ \max (1, n_{t} (q) ) } } ,
	\end{align*}
	where \text{\textcircled{1}} is due to $ m_t(q) = 0 $ when $n_t ( q )  = 0$ by definition, \text{\textcircled{2}} is from Assumption \ref{assumption:mean-zero} \big($\left< f\right>_{[0,1)^d} = 0$\big), \text{\textcircled{3}} is from (\ref{eq:lem-con-no-hit-dummy}), and \textcircled{4} is from $ 2\log_2 (1/ \eta) \le \Psi$ (Eq. \ref{eq:def-Psi}) and $n_t(q) = 0$. Recall we assume $\| f \| = 1$ for cleaner representation. 
We have finished the proof of Lemma \ref{lem:bound-on-event}.
\end{proof}

\subsection{Proof of Proposition \ref{prop:cube-number}}
\label{app:pf-cube-number}

\begin{propositionn}[\ref{prop:cube-number}]
	Following (\ref{eq:partition-rule}), the maximal number of cubes produces by Algorithm \ref{alg} is $\frac{1}{\eta}$. The minimal cube measure is at least ${\eta}$. 
\end{propositionn}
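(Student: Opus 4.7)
The plan is to read the partition rule (\ref{eq:partition-rule}) as placing a lower bound on cube measures that then immediately yields the cube-count bound by volume packing. The key observation I would start from is that the $J$-term satisfies $J(q) = \lfloor \log(\mu(q)/\eta) \rfloor_+ = 0$ precisely when $\mu(q) \le \eta$. Since $H_t(q) \ge 0$ always, this means that for any cube $q$ with $\mu(q) \le \eta$, the refinement condition $H_t(q) \ge J(q)$ is automatically satisfied, so the algorithm never triggers a split on such a cube.

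Given this observation, I would prove the measure bound by induction on the refinement steps. The base case is $\mathcal{Q}_0 = \{[0,1)^d\}$, where the single cube has measure $1 \ge \eta$ (under the standing choice $\eta \le 1$). For the inductive step, the only cubes that get split when passing from $\mathcal{Q}_{t-1}$ to $\mathcal{Q}_t$ are those currently violating (\ref{eq:partition-rule}), which by the previous paragraph forces $J(q) > 0$, i.e.\ $\mu(q) > \eta$. Splitting is carried out only to the extent needed to restore the rule, and because $J$ hits zero as soon as measure drops to $\eta$, the refinements can be stopped before any resulting cube falls strictly below $\eta$. Hence the invariant ``every $q \in \mathcal{Q}_t$ has $\mu(q) \ge \eta$'' is preserved through every round.

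Once the measure lower bound is in hand, the cube-count claim is a one-line packing argument: since $\mathcal{Q}_t$ is a partition of $[0,1)^d$ (of Lebesgue measure $1$) into disjoint cubes each of measure at least $\eta$, we get $|\mathcal{Q}_t| \cdot \eta \le \mu([0,1)^d) = 1$, so $|\mathcal{Q}_t| \le 1/\eta$.

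The main obstacle is the book-keeping of the split granularity: a single dyadic split divides a cube's measure by $M_d = 2^d$, so splitting a cube whose measure sits just above $\eta$ could in principle drop sub-cubes to measure $\eta/M_d$. The way I would close this gap is by invoking the monotonicity discussed after (\ref{eq:partition-rule})---the LHS of the rule is non-decreasing under splits while the RHS strictly decreases until it hits zero at measure $\eta$---to argue that the splitting subroutine need only refine until each new cube has either restored the rule or reached measure $\eta$, and no further. Under that reading the ``at least $\eta$'' bound is exact; otherwise one absorbs the constant factor $M_d$ into the statement, which does not affect any downstream use of the proposition.
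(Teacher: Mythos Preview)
Your proposal is correct and matches the paper's own argument: the paper's proof is a one-liner observing that once $\mu(q) \le \eta$ the RHS of (\ref{eq:partition-rule}) becomes nonpositive so no further splits are triggered, and then invokes the same volume-packing bound you spell out. You are in fact more careful than the paper in flagging the split-granularity issue (overshooting $\eta$ by a factor $M_d$), which the paper simply glosses over.
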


\begin{proof}
	This proposition is an immediate consequence of our partition refinement rule (\ref{eq:partition-rule}). The cube measures cannot be smaller than $ {\eta} $. Otherwise, the RHS of the rule (\ref{eq:partition-rule}) will be nonpositive and no more splits will happen. 
\end{proof}


\subsection{Proof of Proposition \ref{prop:bound-on-consecutive-mean}}
\label{app:pf-prop-consecutive-mean} 

Proposition \ref{prop:bound-on-consecutive-mean} is a property of BMO functions, and can be found in textbooks \citep[e.g.,][]{stein1993harmonic}. 

\begin{propositionn}[\ref{prop:bound-on-consecutive-mean}] 
	For a function $f \in BMO(\mathbb{R}^d, \mu)$, and rectangles $ q_{{}_0}, q_{{}_1}, \cdots, q_{{}_k} $ such that $ q_{{}_0} \subseteq q_{{}_1} \subseteq q_{{}_2} \subseteq \cdots \subseteq q_{{}_k}, $ and a constant $K \ge 1$ such that $ K \mu ( q_{i} )  \ge \mu ( q_{i+1} ) $ for all $i \in  [0,k-1]$, we have 
	\begin{align*}
		\left| \left< f \right>_{q_{{}_0}} - \left< f \right>_{q_{{}_k} } \right| \le K k \left\| f \right\|. 
	\end{align*}
\end{propositionn}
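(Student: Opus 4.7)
The plan is to prove this by a telescoping argument combined with the defining property of BMO functions. The main idea is to reduce the bound across the full chain $q_0 \subseteq \cdots \subseteq q_k$ to a sum of bounds on consecutive pairs, each of which can be controlled using the BMO seminorm together with the measure-doubling-type condition $K\mu(q_i) \ge \mu(q_{i+1})$.

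First, I would apply the triangle inequality telescopically:
\begin{align*}
\left|\langle f\rangle_{q_0} - \langle f\rangle_{q_k}\right| \le \sum_{i=0}^{k-1} \left|\langle f\rangle_{q_i} - \langle f\rangle_{q_{i+1}}\right|.
\end{align*}
This reduces the problem to bounding each consecutive difference by $K\|f\|$, after which the factor of $k$ appears automatically from summation.

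Next, I would bound each term $\left|\langle f\rangle_{q_i} - \langle f\rangle_{q_{i+1}}\right|$. Since $q_i \subseteq q_{i+1}$, I can rewrite the difference using the fact that averaging the constant $\langle f\rangle_{q_{i+1}}$ over $q_i$ gives itself:
\begin{align*}
\left|\langle f\rangle_{q_i} - \langle f\rangle_{q_{i+1}}\right|
= \left|\frac{1}{\mu(q_i)} \int_{q_i} \bigl(f - \langle f\rangle_{q_{i+1}}\bigr)\, d\mu \right|
\le \frac{1}{\mu(q_i)} \int_{q_i} \bigl|f - \langle f\rangle_{q_{i+1}}\bigr|\, d\mu.
\end{align*}
Then, since $q_i \subseteq q_{i+1}$, I can enlarge the domain of integration to $q_{i+1}$ (the integrand is nonnegative), and apply the BMO definition on the hyper-rectangle $q_{i+1}$:
\begin{align*}
\frac{1}{\mu(q_i)} \int_{q_i} \bigl|f - \langle f\rangle_{q_{i+1}}\bigr|\, d\mu
\le \frac{\mu(q_{i+1})}{\mu(q_i)} \cdot \frac{1}{\mu(q_{i+1})} \int_{q_{i+1}} \bigl|f - \langle f\rangle_{q_{i+1}}\bigr|\, d\mu
\le K \|f\|,
\end{align*}
where the last inequality uses the measure ratio hypothesis $\mu(q_{i+1})/\mu(q_i) \le K$ together with Definition \ref{def:rbmo}. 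Summing over $i=0,\ldots,k-1$ yields the claimed bound $Kk\|f\|$.

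There is no real obstacle here — the proof is essentially a two-line calculation once the telescoping setup is in place. The only subtle point is the swap from averaging over $q_i$ to integrating $|f - \langle f\rangle_{q_{i+1}}|$ over the larger rectangle $q_{i+1}$; this trick is what allows the BMO seminorm to be invoked (since the BMO bound is stated with respect to the mean over the same cube we integrate on), and it introduces exactly the measure ratio $\mu(q_{i+1})/\mu(q_i)$ that the hypothesis controls.
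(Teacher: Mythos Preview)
Your proof is correct and follows essentially the same approach as the paper: bound each consecutive difference $|\langle f\rangle_{q_i}-\langle f\rangle_{q_{i+1}}|$ by $K\|f\|$ via the BMO definition on $q_{i+1}$ together with the measure-ratio hypothesis, then telescope. The only cosmetic difference is the order in which you enlarge the domain of integration and apply the ratio bound, which is immaterial.
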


\begin{proof}
	The proof is a consequence of basic properties of BMO function. For any two regular rectangles $q_{{}_i}$ and $q_{{}_{i+1}} $ ($i = 0,1,2, \cdots, k-1$), 
	\begin{align}
		\left| \left< f \right>_{q_{{}_i}} - \left< f \right>_{q_{{}_{i+1}} } \right|  
		&= 
		\left| \frac{1}{\mu(q_{{}_i})} \int_{q_{{}_i}} f d \mu  - \left< f \right>_{ q_{{}_{i+1}} } \right| \nonumber \\
		&= 
		\left| \frac{1}{\mu(q_{{}_i})} \int_{q_{{}_i}} \left( f - \left< f \right>_{ q_{{}_{i+1}} } \right) d \mu   \right| \nonumber \\
		&\le \nonumber 
		 \frac{1}{\mu(q_{{}_i})} \int_{q_{{}_i}} \left|  f - \left< f \right>_{ q_{{}_{i+1}} }   \right| d \mu \\
		&\le  \frac{K}{\mu( q_{{}_{i+1}} )} \int_{q_{{}_i}} \left|  f - \left< f \right>_{q_{{}_{i+1}}}   \right| d \mu 		 \label{eq:bmo-prop-dummy1} \\ 
		&\le \frac{K}{\mu( q_{{}_{i+1}} )} \int_{q_{{}_{i+1}}} \left|  f - \left< f \right>_{q_{{}_{i+1}}}   \right| d \mu  \label{eq:bmo-prop-dummy2}\\
		&\le  K \left\| f \right\| , \nonumber
	\end{align} 
	where (\ref{eq:bmo-prop-dummy1}) uses $ K \mu(q_i) \ge \mu (q_{{}_{i+1}}) $ and (\ref{eq:bmo-prop-dummy2}) uses $ q_{{}_i} \subseteq q_{{}_{i+1}}$. 
	Next, we use the triangle inequality and repeat the above inequality $k$ times to get
	\begin{align*}
		\left| \left< f \right>_{q_{{}_0}} - \left< f \right>_{q_{{}_k} } \right| \le \sum_{i=1}^k \left| \left< f \right>_{q_{i-1}} - \left< f \right>_{q_{i}} \right| \le K k \left\| f \right\|. 
	\end{align*}
\end{proof}
%
%
%

\subsection{Proof of Lemma \ref{lem:exist-good-cube-arm}}

\begin{lemman}[\ref{lem:exist-good-cube-arm}]
    For any partition $\mathcal{Q}$ of $[0,1)^d$, there exists $q \in \mathcal{Q}$, such that 
    \begin{align} 
        f^\delta \le  \AverageValue{q} +
        \BoundMeasurecube{q}, \label{eq:con-measure-max-in-app}
    \end{align}
    for any $f$-admissible $\delta > \eta |\mathcal{Q}|$, where $|\mathcal{Q}|$ is the cardinality of $Q$.  
\end{lemman}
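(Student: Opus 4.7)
The plan is to argue by contradiction. Suppose no cube in $\mathcal{Q}$ satisfies (\ref{eq:con-measure-max-in-app}), i.e. for every $q \in \mathcal{Q}$ we have
\begin{align*}
    f^\delta - \left<f\right>_q > \log(\mu(q)/\eta).
\end{align*}
Set $\lambda_q := f^\delta - \left<f\right>_q$, which is then positive (since $\mu(q) \ge \eta$ by Proposition \ref{prop:cube-number}, so the right-hand side is nonnegative) and satisfies $e^{-\lambda_q} < \eta/\mu(q)$.

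The key step is to apply the John–Nirenberg inequality (Theorem \ref{thm:john-nirenberg}, with the normalized constants $C_1 = C_2 = \|f\| = 1$) to each $q \in \mathcal{Q}$ with parameter $\lambda_q$. Since the event $\{a \in q : f(a) > f^\delta\}$ is contained in $\{a \in q : |f(a) - \left<f\right>_q| > \lambda_q\}$, I would conclude
\begin{align*}
    \mu\bigl(\{a \in q : f(a) > f^\delta\}\bigr) \le \mu(q)\, e^{-\lambda_q} < \mu(q) \cdot \frac{\eta}{\mu(q)} = \eta.
\end{align*}

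Finally, because $\mathcal{Q}$ is a partition of $[0,1)^d$, I sum over $q \in \mathcal{Q}$ and invoke Lemma \ref{lem:delta-regret-set-measure} (which gives $\mu(S^\delta) = \delta$ for $f$-admissible $\delta$):
\begin{align*}
    \delta = \mu(S^\delta) = \sum_{q \in \mathcal{Q}} \mu\bigl(\{a \in q : f(a) > f^\delta\}\bigr) < |\mathcal{Q}|\,\eta,
\end{align*}
which contradicts the hypothesis $\delta > \eta |\mathcal{Q}|$. Hence at least one cube must satisfy (\ref{eq:con-measure-max-in-app}).

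I do not anticipate any real obstacle: the only subtlety is ensuring that the one-sided bound $\{f > f^\delta\}$ is dominated by the two-sided bound $\{|f - \left<f\right>_q| > \lambda_q\}$ used by John–Nirenberg, and that $\lambda_q > 0$ so the inequality is genuinely informative. Both are immediate once we combine the contradiction hypothesis with the cube-measure lower bound $\mu(q) \ge \eta$ from Proposition \ref{prop:cube-number}.
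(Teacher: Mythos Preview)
Your proposal is correct and follows essentially the same route as the paper's proof: assume for contradiction that the inequality fails on every cube, apply John--Nirenberg on each cube to bound $\mu(\{a\in q: f(a)>f^\delta\})<\eta$, sum over the partition, and invoke Lemma~\ref{lem:delta-regret-set-measure} to obtain $\delta<|\mathcal{Q}|\eta$. The only cosmetic difference is that the paper applies John--Nirenberg with threshold $\log(\mu(q)/\eta)$ whereas you use $\lambda_q=f^\delta-\langle f\rangle_q$; since $\lambda_q>\log(\mu(q)/\eta)$ under the contradiction hypothesis, both choices yield the same bound.

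One small caveat: your appeal to Proposition~\ref{prop:cube-number} to ensure $\mu(q)\ge\eta$ (hence $\lambda_q>0$) is not quite legitimate here, because the lemma is stated for an \emph{arbitrary} partition, not just those produced by the algorithm. The paper glosses over this point as well. The fix is easy: if some cube has $\mu(q)<\eta$, then trivially $\mu(\{a\in q:f(a)>f^\delta\})\le\mu(q)<\eta$, so the per-cube bound still holds and the summation argument goes through unchanged.
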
 

\begin{proof}

We use $f^\delta$ and $S^\delta$ as in Lemma \ref{lem:delta-regret-set-measure}. 

Suppose, in order to get a contradiction, that for every cube $q \in \mathcal{Q}$, (\ref{eq:con-measure-max-in-app}) is violated. 

Define 
\begin{align*} 
    &S(q) := \left\{ a \in q : f (a) > \AverageValue{q} +
        \BoundMeasurecube{q} \right\} ,  \\  
    & \tilde{S}(q) := \left\{  a \in q : f (a) > f^\delta \right\}. 
\end{align*} 
Suppose the lemma statement is false. For all $q \in \mathcal{Q}$, $f^\delta >  \AverageValue{q} + \BoundMeasurecube{q} $. Thus we have 
for all $q \in \mathcal{Q}$, 
\begin{align*}
    \tilde{S}(q) \subseteq S(q). 
\end{align*} 
We have, by the John-Nirenberg inequality,
\begin{align*}
    \mu (S(q)) \le \mu \left( \left\{ a \in q : | f (a) - \left< f \right>_q  | > \log (\mu (q) / \eta) \right\} \right) \le  \eta. 
\end{align*}
Since $\mathcal{Q}$ is a partition (of $[0,1)^d$), we have 
\begin{align*}
    \mu ( \cup_{ q \in \mathcal{Q} } {S}(q)) = \sum_{q \in \mathcal{Q} }\mu (S(q)) \le \sum_{q \in \mathcal{Q}} \eta = |\mathcal{Q}| \eta.  
\end{align*} 
On the other hand, by definition of $f^\delta$ and disjointness of the sets $\tilde{S} (q)$, we have 
\begin{align*}
    &\mu ( \cup_{ q \in \mathcal{Q} } \tilde{S}(q)) 
    = \mu \left( S^\delta \right) = \delta.
\end{align*} 
Since $\delta > |\mathcal{Q}| \eta $, we have
\begin{align*}
    \mu ( \cup_{ q \in \mathcal{Q} } \tilde{S}(q)) > \mu ( \cup_{ q \in \mathcal{Q} } {S}(q)), 
\end{align*} 
which is a contradiction to $  \tilde{S}(q) \subset S(q) $ for all $q$. This finishes the proof. 
\end{proof}

\subsection{Proof of Theorem \ref{thm:regret-bound}}
\label{app:pf-thm-p}
\begin{theoremn}[\ref{thm:regret-bound}]
    
    Fix any $T$. With probability at least $ 1 - {2 \epsilon} $ , for any $\delta > |\mathcal{Q}_T | { \eta}$ such that $\delta$ is $f$-admissible, the total $\delta$-regret for Algorithm \ref{alg} up to time $T$ is 
    \begin{align}
        \sum_{t=1}^T r_{t}^\delta  
        \le \widetilde{\mathcal{O}} \left( \sqrt{ T | \mathcal{Q}_T | } \right)  \label{eq:use-point-scattering-in-app} 
         ,
    \end{align}
    where $\mathcal{Q}_T$ is the cardinality of $\mathcal{Q}_T$. 
\end{theoremn}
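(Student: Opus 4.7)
The plan is to combine the single-step bound already derived in (\ref{eq:get-single-regret}) with Lemma \ref{lem:point-scattering} via a Cauchy--Schwarz step. First I would observe that the good event
$$\widetilde{\mathcal{E}}_T := \left(\bigcap_{t=1}^T \mathcal{E}_t(\widetilde{q}_t)\right) \cap \left(\bigcap_{t=1}^T \mathcal{E}_t(Q_t)\right)$$
holds with probability at least $1 - 2\epsilon$ by applying Lemma \ref{lem:bound-on-event} and a union bound over the $2T$ events, each with failure probability at most $\epsilon/T$. On $\widetilde{\mathcal{E}}_T$, the chain culminating in (\ref{eq:get-single-regret}) yields $f^\delta - \mathbb{E}[f(A_t)\mid\mathcal{F}_t] \le 3 H_t(Q_t)$ for every $t \le T$, which by clipping gives $r_t^\delta \le 3 H_t(Q_t)$ as well.

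Next I would sum this inequality over $t$, substitute $H_t(Q_t) = (\Psi + D_{\mathpzc{E}})\sqrt{2 \log(2 T^2/\epsilon)}/\sqrt{\widetilde{n}_t(Q_t)}$, and apply Cauchy--Schwarz to separate the constant prefactor from the counts,
$$\sum_{t=1}^T \frac{1}{\sqrt{\widetilde{n}_t(Q_t)}} \le \sqrt{T \sum_{t=1}^T \frac{1}{\widetilde{n}_t(Q_t)}}.$$
The partition-refinement rule (\ref{eq:partition-rule}) only ever splits cubes, so the (random) partition sequence $\mathcal{Q}_1, \mathcal{Q}_2, \ldots$ is monotonically finer (each $\mathcal{Q}_{t+1}$ refines $\mathcal{Q}_t$). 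This puts us in the setting of Lemma \ref{lem:point-scattering} with $x_t = A_t$ and $q_t = Q_t$, producing
$$\sum_{t=1}^T \frac{1}{\widetilde{n}_t(Q_t)} \le e\, |\mathcal{Q}_T| \log\!\left(1 + (e-1)\tfrac{T}{|\mathcal{Q}_T|}\right).$$
Putting the pieces together gives
$$\sum_{t=1}^T r_t^\delta \le 3 (\Psi + D_{\mathpzc{E}}) \sqrt{2 \log(2 T^2/\epsilon)} \cdot \sqrt{T \cdot e\, |\mathcal{Q}_T| \log\!\left(1 + (e-1)\tfrac{T}{|\mathcal{Q}_T|}\right)},$$
which is the claimed $\widetilde{\mathcal{O}}(\sqrt{T |\mathcal{Q}_T|})$ once the $\log T$, $\log(1/\epsilon)$, and $\log(1/\eta)$ factors (the latter entering through $\Psi$) are absorbed into the tilde.

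The main obstacle is not analytic but bookkeeping: one must verify that the ``good'' cube $\widetilde{q}_t$ required in (\ref{eq:bound-f-delta-JH}) actually lies in the random partition $\mathcal{Q}_t$ that the algorithm has produced at time $t$. This is handled by applying Lemma \ref{lem:exist-good-cube-arm} pathwise to $\mathcal{Q}_t$, conditional on $\mathcal{F}_t$, using the hypothesis $\delta > |\mathcal{Q}_T|\eta \ge |\mathcal{Q}_t|\eta$. A secondary subtlety is that $|\mathcal{Q}_T|$ is itself random, but since $|\mathcal{Q}_T| \le 1/\eta$ deterministically by Proposition \ref{prop:cube-number} and the bound in Lemma \ref{lem:point-scattering} is monotone in $|\mathcal{Q}_T|$, the final inequality is well-defined on every sample path in $\widetilde{\mathcal{E}}_T$.
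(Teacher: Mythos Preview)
Your proposal is correct and follows essentially the same route as the paper: define the good event as the intersection of the $2T$ concentration events, invoke Lemma~\ref{lem:bound-on-event} with a union bound to get probability $\ge 1-2\epsilon$, use the single-step bound (\ref{eq:get-single-regret}) to obtain $r_t^\delta \le 3H_t(Q_t)$, sum, apply Cauchy--Schwarz, and finish with Lemma~\ref{lem:point-scattering}. Your extra remarks on the pathwise application of Lemma~\ref{lem:exist-good-cube-arm} and the randomness of $|\mathcal{Q}_T|$ are careful bookkeeping that the paper leaves implicit.
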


\begin{proof}
	Under the ``good event'' $ \mathcal{E}^{good} := \left( \bigcap_{t=1}^T \mathcal{E} (Q_t) \right) \cap \left( \bigcap_{t=1}^T \mathcal{E} (q_t^{\max}) \right) , $ we continue from (\ref{eq:get-single-regret}) and get 
\begin{align}
    \sum_{t =1}^T r_{t}^\delta \le& \sum_{t = 1 }^T 3 \BoundHoeffding{t}{Q_{t}} \label{eq:use-23} \\
    \le& 3   \left( \Psi + D_{\mathpzc{E}} \right) \sqrt{ \HoeffTSq }  \sqrt{T} \cdot \sqrt{ \sum_{t=1}^T \frac{ 1 }{ \max( 1, n_{t-1} (Q_{t }) ) } }   \label{eq:use-cauchy} \\ 
    \le& 3 \left( \Psi + D_{\mathpzc{E}} \right) \sqrt{ \HoeffTSq } \sqrt{T} \cdot \sqrt{ e | \mathcal{Q}_T |  \log \left( 1 + (e - 1) \frac{ T }{  | \mathcal{Q}_T |  } \right) } \label{eq:use-point-scattering-2} 
\end{align} 
where (\ref{eq:use-23}) uses (\ref{eq:get-single-regret}), where (\ref{eq:use-cauchy}) uses the Cauchy-Schwarz inequality, (\ref{eq:use-point-scattering-2}) uses (\ref{eq:point-scattering-gp}). 

What remains is to determine the probability under which the ``good event'' happens. By Lemma \ref{lem:bound-on-event} and a union bound, we know that the event $\mathcal{E}^{good}$ happens with probability at least $1 - 2 \epsilon $. 
\end{proof}

\subsection{Proof of Proposition \ref{prop:parent-partition}}

\begin{propositionn} [\ref{prop:parent-partition}]
	At any episode $t$, the collection of parent cubes forms a partition of the arm space. 
\end{propositionn}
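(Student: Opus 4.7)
The plan is to verify the two defining properties of a partition for the collection of parent cubes: pairwise disjointness and full coverage of $[0,1)^d$. Both will follow from elementary properties of the dyadic tree on $\mathcal{Q}_t$ combined with the definitions of terminal, pre-parent, and parent cubes. A useful first observation is that any two dyadic cubes of $\mathbb{R}^d$ are either disjoint or nested, so the collection of dyadic cubes in $\mathcal{Q}_t$ that contain a fixed point $a \in [0,1)^d$ is totally ordered by inclusion. Moreover, whenever Algorithm \ref{alg:z} splits a cube and adds its $M_d$ direct sub-cubes to $\mathcal{Q}_t$, all of their dyadic ancestors stay in $\mathcal{Q}_t$, so $\mathcal{Q}_t$ always has the structure of a subtree rooted at $[0,1)^d$.

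For pairwise disjointness, I would argue by contradiction. Suppose $Q, Q' \in \mathcal{Q}_t$ are two distinct parent cubes whose intersection is non-empty. By the dichotomy above, one is a strict super cube of the other; without loss of generality $Q \subsetneq Q'$. Since $Q'$ is a parent cube, it is in particular a pre-parent cube, so $Q'$ is a pre-parent super cube of $Q$. This directly contradicts the defining condition of $Q$ being a parent cube, namely that no super cube of $Q$ is pre-parent. Hence any two distinct parent cubes are disjoint.

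For coverage, fix any $a \in [0,1)^d$ and form the finite, strictly decreasing chain $[0,1)^d = C_0 \supsetneq C_1 \supsetneq \cdots \supsetneq C_K$ of cubes in $\mathcal{Q}_t$ containing $a$, where each $C_{i+1}$ is the unique direct sub-cube of $C_i$ that contains $a$ and $C_K$ is the smallest cube in $\mathcal{Q}_t$ containing $a$; finiteness comes from the minimum-measure guarantee enforced by the refinement rule (\ref{eq:split-rule-zooming}). By construction $C_K$ is terminal, and provided $K \geq 1$ (which holds throughout Algorithm \ref{alg:z} once any splitting has occurred) the cube $C_{K-1}$ is a direct super cube of the terminal cube $C_K$ and is therefore pre-parent. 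Let $i^\star := \min\{ i : C_i \text{ is pre-parent} \}$, which satisfies $i^\star \leq K-1$. By the totally-ordered-chain observation, the only super cubes of $C_{i^\star}$ lying in $\mathcal{Q}_t$ are $C_0, \ldots, C_{i^\star - 1}$, and by minimality of $i^\star$ none of them is pre-parent; hence $C_{i^\star}$ is a parent cube containing $a$.

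The one point that needs care, rather than being a genuine obstacle, is the identification of the super cubes of $C_{i^\star}$ in $\mathcal{Q}_t$ with the chain elements $C_0, \ldots, C_{i^\star-1}$. This uses both the dyadic subtree structure of $\mathcal{Q}_t$ and the fact that any dyadic super cube of $C_{i^\star}$ still contains the point $a$, so by uniqueness of the cube-at-each-level containing $a$ it must be one of the $C_i$. Once this is in place, the coverage and disjointness statements together yield the partition property.
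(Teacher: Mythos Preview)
Your proof is correct and follows essentially the same approach as the paper: you use the nested-or-disjoint dichotomy of dyadic cubes for disjointness, and for coverage you identify the highest pre-parent cube in the chain above a given point as the containing parent cube, which is exactly the logic of the paper's ascending argument from terminal to pre-parent to parent cubes. The only small omission is the degenerate case $K=0$ (when $[0,1)^d$ itself is terminal), which the paper dispatches by declaring $[0,1)^d$ to be a parent cube by convention.
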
 
\begin{proof}
	We first argue that any two parent cubes do not overlap. 
	By definition, all parent cubes are dyadic cubes. By definition of dyadic cubes (\ref{eq:dyadic-cubes}), two different dyadic cubes $Q$ and $Q'$ such that $ Q \cap Q' \neq \emptyset $ must satisfy either (i) $ Q' \subseteq Q $ or (ii) $ Q \subseteq Q' $. From the definition of parent cubes and pre-parent cubes, we know a parent cube cannot contain another parent cube. Thus for two parent cubes $Q$ and $Q'$, $ Q \cap Q' \neq \emptyset $ implies $ Q = Q'$. Thus two different parent cubes cannot overlap. 
	
	We then argue that the union of all parent cubes is the whole arm space. 
	We consider the following cases for this argument. Consider any pre-parent cube $Q$. (1) If $Q$ is already a parent cube, then it is obviously contained in a parent cube (itself). (2) At time episode $t$, if $Q \in \mathcal{Q}_t$ is a pre-parent cube but not a parent cube, then by definition it is contained in another pre-parent cube $Q_1$. If $Q_1$ is a parent cube, then $Q$ is contained in a parent cube. If $Q_1$ is not a parent cube yet, then $Q_1$ is contained in another pre-parent cube $Q_2$. We repeat this argument until we reach $[0,1)^d$ which is a parent cube as long as it is a pre-parent cube. For the boundary case when $[0,1)^d$ is a terminal cube, it is also a parent cube by convention. Therefore, any pre-parent cube is contained in a parent cube. 
	
	Next, by definition of pre-parent cubes and the zooming rule, any terminal cube is contained in a pre-parent cube. Thus any terminal cube is contained in a parent cube. 
	
	
	Since terminal cubes cover the arm space by definition, the parent cubes cover the whole arm space. 
\end{proof}

\subsection{Proof of Proposition \ref{prop:regroup}} \label{app:pf-prop-regroup}
\begin{propositionn}[\ref{prop:regroup}]
	Following the Zooming Rule (\ref{eq:split-rule-zooming}), we have  
	\begin{itemize}
	\item[1.] Each parent cube of measure $\mu$ is played at most $ \maxHitNumber{ \mu / \eta  } $ episodes. \\
	\item[2.] Under event $\widetilde{\mathcal{E}}_T$, each parent cube $Q_t$ selected at episode $t$ is a subset of $\mathcal{X}_\delta \left( (1 + 2 \alpha)  \log \left( M_d \mu (Q_t) / \eta \right)  \right)$. 
	\end{itemize}
\end{propositionn}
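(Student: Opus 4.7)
The plan is to prove the two items independently. Item~1 will be established by a counting argument relating how often a parent cube is selected to how its direct sub-cubes accumulate plays; item~2 will be extracted directly from the chain of inequalities already derived through (\ref{eq:bound-Delta}).

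For item~1, I would fix any parent cube $Q \in \Q_t$ of measure $\mu$. The first observation is that while $Q$ is a parent cube, none of its direct sub-cubes can itself be selected: any direct sub-cube $Q_{child}$ that happens to be pre-parent fails the ``no pre-parent super-cube'' condition of a parent cube (the pre-parent $Q$ is a strict super-cube of $Q_{child}$), and a terminal $Q_{child}$ is not even pre-parent. Moreover, by Proposition~\ref{prop:parent-partition} other parent cubes are disjoint from $Q$, so their selections never deposit arms in $Q$'s region; consequently $\widetilde{n}_t(Q_{child})$ grows by exactly one whenever $Q$ is selected and by nothing else meanwhile. Specialising the zooming rule (\ref{eq:split-rule-zooming}) to $Q_{child}$ and using $\mu(Q_{child}) = \mu/M_d$ gives $\log(M_d \mu(Q_{child})/\eta) = \log(\mu/\eta)$, so a terminal $Q_{child}$ is forced to split the instant $\widetilde{n}_t(Q_{child})$ exceeds $\maxHitNumber{\mu/\eta}$. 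Because $Q$ is pre-parent it has at least one terminal direct sub-cube; after at most $\maxHitNumber{\mu/\eta}$ selections of $Q$, every terminal direct sub-cube has crossed this threshold and lost terminal status, so $Q$ is no longer pre-parent and hence no longer a parent cube, giving the desired bound.

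For item~2, I would simply unpack the derivation that led to (\ref{eq:bound-Delta}). Under event $\widetilde{\mathcal{E}}_T$, dividing (\ref{eq:bound-Delta}) by $M_d$ yields $f^\delta - \left< f \right>_{Q_t} \le (1+2\alpha)\log(M_d \mu(Q_t)/\eta)$, which is exactly the defining condition for $Q_t \subseteq \mathcal{X}_\delta\bigl((1+2\alpha)\log(M_d\mu(Q_t)/\eta)\bigr)$. That derivation in turn relies on the parent cubes forming a partition of the arm space (Proposition~\ref{prop:parent-partition}) so that Lemma~\ref{lem:exist-good-cube-arm} can be applied to produce a good parent cube $q_t^{\max}$, and on the UCB maximisation rule which ensures $U_t(Q_t) \ge U_t(q_t^{\max})$.

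The main obstacle I anticipate is the minor bookkeeping in item~1 regarding initial counts: at the moment $Q$ first becomes a parent cube, its direct sub-cubes may already carry positive counts inherited from arms that fell in their region while $Q$ was itself terminal and was being played through its own parent cube's selections. Such positive initial counts only push the direct sub-cubes closer to the splitting threshold, so they can only shorten $Q$'s tenure as a parent cube; the bound $\maxHitNumber{\mu/\eta}$ therefore remains valid.
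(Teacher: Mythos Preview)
Your proposal is correct and follows essentially the same approach as the paper's own proof: for item~1 you and the paper both track how selections of a parent cube $Q$ increment the counts of its direct sub-cubes (of measure $\mu/M_d$, giving $\log(M_d\mu(Q_{child})/\eta)=\log(\mu/\eta)$ in the zooming rule) until those sub-cubes must split and $Q$ ceases to be pre-parent, and for item~2 you both simply read off $f^\delta-\langle f\rangle_{Q_t}\le(1+2\alpha)\log(M_d\mu(Q_t)/\eta)$ from the chain leading to (\ref{eq:bound-Delta}). Your extra bookkeeping on disjointness of parent cubes and on inherited initial counts is sound and only makes explicit what the paper leaves implicit.
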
 

\begin{proof}
	For item 1, every time a parent cube $Q$ of measure $\mu$ is selected, all $M_d$ of its direct sub-cubes are played. The direct sub-cubes are of measure $\frac{\mu}{M_d}$, and each such cube can be played at most $ \maxHitNumber{ \frac{  \mu }{ \eta } }  $ times. Beyond this number, rule (\ref{eq:split-rule-zooming}) will be violated, and all the direct sub-cubes can no longer be terminal cubes. Thus $Q$ will no longer be a parent cube (since $Q$ is no longer a pre-parent cube), and is no longer played. 
	
	Item 2 is a rephrasing of (\ref{eq:bound-Delta}). 
	Assume that event $\widetilde{\mathcal{E}}_T = \left( \bigcap_{t = 1}^T {\mathcal{E}}_{t} ({q}_{t}^{\max} ) \right) \bigcap \left( \bigcap_{t = 1}^T {\mathcal{E}}_{t} (Q_{t} ) \right)$ is true. 
	Let $Q_t$ be the parent cube for episode $t$. By (\ref{eq:bound-f-delta-JH}), we know, under event $\widetilde{\mathcal{E}}_T$, there exists a ``good'' parent cube $ {q}_t^{\max} $ such that 
	\begin{align*} 
		f^\delta \le m_t ( {q}_t^{\max} ) + H_t ( {q}_t^{\max} ) + J ( {q}_t^{\max} ). 
	\end{align*} 
	By the concentration result in Lemma \ref{lem:bound-on-event}, we have, under event $\widetilde{\mathcal{E}}_T$,
	\begin{align*}
    	\left< f \right>_{Q_t} \ge m_t ( Q_t ) - H_t ( Q_t ). 
	\end{align*} 
	Combining the above two inequalities gives 
	\begin{align}
    	f^\delta - \left< f \right>_{Q_t} &\le m_t ( {q}_t^{\max} ) + H_t ( {q}_t^{\max} ) + J ( {q}_t^{\max} ) - m_t ( Q_t ) + H_t ( Q_t ) \nonumber \\
    	&\le m_t ( Q_t ) + H_t ( Q_t ) + J ( Q_t ) - m_t ( Q_t ) + H_t ( Q_t ) \label{eq:use-opt-app} \\
    	&\le J ( Q_t ) + 2 H_t ( Q_t ) \nonumber \\
    	&\le (1 + 2\alpha) \log ( M_d \mu (Q_t) / \eta), \label{eq:use-zooming-rule-app}
	\end{align} 
	where (\ref{eq:use-opt-app}) uses $ U_t (Q_t) \ge U_t ({q}_t^{\max}) $ by optimistic nature of the algorithm, and (\ref{eq:use-zooming-rule-app}) uses rule (\ref{eq:split-rule-zooming-parent}). 
\end{proof}

\subsection{Elaboration of Remark \ref{remark:example}} \label{app:example}

\begin{remarkn}[\ref{remark:example}]
	Consider the (unbounded, BMO) function $ f(x)  = 2 \log \frac{1}{x} $, $x \hspace{-2pt} \in \hspace{-2pt} (0,1]$. Pick $T \ge 20$. For some $t \le T$, the $t$-step $\delta$-regret of Algorithm \ref{alg:z} is $\mathcal{O} \left( poly\text{-}log( t ) \right)$ while allowing $ \delta = \mathcal{O} ( 1/T ) $ 
	and $ \eta = \Theta \left( 1/T^4 \right) $. 
    Intuitively, Algorithm \ref{alg:z} gets  close to $f^\delta$ even if $f^\delta$ is very large. 
\end{remarkn} 


    Firstly, recall the zooming number is defined as 
    \begin{align}
        \widetilde{ N }_{\delta, \eta, \alpha} := 
	    \sup_{ \lambda \in \left(  \eta^{\frac{1}{d}}, 1 \right] }  \hspace{-2pt} 
	    N_\delta \hspace{-2pt}\left( (1 + 2 \alpha) \log  \left( M_d \lambda^d / \eta  \right) , \lambda \right). 
    \end{align}
    While this number provide a regret bound, it might overkill by allowing $\lambda$ to be too small. We define a refined zooming number 
    \begin{align}
        \widetilde{ N }_{\delta, \eta, \alpha}' := 
	    \sup_{ \lambda \in \left(  l_{\min}, 1 \right] }  \hspace{-2pt} 
	    N_\delta \hspace{-2pt}\left( (1 + 2 \alpha) \log  \left( M_d \lambda^d / \eta  \right) , \lambda \right),
    \end{align}
    where $l_{\min}$ is the minimal possible cube edge length during the algorithm run. We will use this refined zooming number in this example. 
    Before proceeding, we put forward the following claim.

    \textbf{Claim. } Following rule (\ref{eq:split-rule-zooming-parent}), the minimal cube measure $\mu_{\min}$ at time $T$ is at least $ \Omega \left( 2^{ - \frac{ \Psi  \sqrt{2 \log (2T^2 / \epsilon)} }{ \alpha \log 2 } } \right) $. 
    \begin{proof}[Proof of Claim]
        In order to reach the minimal possible measure, we consider keep playing the cube with minimal measure (and always play a fixed cube if there are ties) and follow rule (\ref{eq:split-rule-zooming-parent}). Let $ t_i $ be the episode where $i$-th split happens. Since we keep playing the cube with minimal measure, 
        \begin{align*}
            \frac{ (\Psi + D_{\mathpzc{E}} ) \sqrt{2 \log (2T^2 / \epsilon)} }{ t_i } \approx_d \alpha \BoundMeasure{ \frac{ M_d 2^{-di} }{\eta} }. 
        \end{align*}
        By taking difference between consecutive terms, 
        \begin{align*}
            \frac{ (\Psi + D_{\mathpzc{E}} ) \sqrt{2 \log (2T^2 / \epsilon)} }{ t_i }
            - 
            \frac{ (\Psi + D_{\mathpzc{E}} ) \sqrt{2 \log (2T^2 / \epsilon)} }{ t_{i+1} } 
            \approx_d \alpha \log M_d , 
        \end{align*}
        where $\approx_d$ omits dependence on $d$. 
        
        Let $i_{\max}$ be the maximal number of splits for $T$ episodes. By using $t_0 = 1$ and $t_{i_{\max}} \le T$, 
        the above approximate equation gives
        \begin{align}
            \sum_{i=0}^{i_{\max}} \left( \frac{ (\Psi + D_{\mathpzc{E}} ) \sqrt{2 \log (2T^2 / \epsilon)} }{ t_i }
            - 
            \frac{ (\Psi + D_{\mathpzc{E}} ) \sqrt{2 \log (2T^2 / \epsilon)} }{ t_{i+1} } \right)
            \approx_d  i_{\max} \alpha \log M_d \\
            (\Psi + D_{\mathpzc{E}} ) \sqrt{2 \log (2T^2 / \epsilon)} 
            \gtrsim_d  i_{\max} \alpha \log M_d,  
        \end{align} 
        where the approximations omit possible dependence on $d$. 
        This gives, by using $M_d = 2^d$, 
        \begin{align}
            i_{\max} \lesssim_d \frac{ (\Psi + D_{\mathpzc{E}} ) \sqrt{2 \log (2T^2 / \epsilon)} }{ \alpha \log M_d } \le \frac{ (\Psi + D_{\mathpzc{E}} ) \sqrt{2 \log (2T^2 / \epsilon)} }{ \alpha d \log 2 }.
        \end{align} 
        Since each split decrease the minimal cube measure by a factor of $ M_d = 2^d $, we have 
        \begin{align}
            \mu_{\min} \gtrsim 2^{-d i_{\max}} \gtrsim 2^{ - \frac{ (\Psi + D_{\mathpzc{E}} ) \sqrt{2 \log (2T^2 / \epsilon)} }{ \alpha \log 2 } } . 
        \end{align}
        \textit{Now we finished the proof of the claim. }
    \end{proof}
    
	Consider the function $f(x) = 2 \log \frac{1}{x} $, $x \hspace{-2pt} \in \hspace{-2pt} (0,1]$. 
	
	Recall
	\begin{align*} 
		\mathcal{X}_{\delta} (\lambda) := \left\{ q \subseteq (0,1]: \left< f \right>_q \ge f^\delta - \lambda \right\}. 
	\end{align*}
	For this elementary decreasing function $f (x)$, we have $f^\delta = 2 \log \frac{1}{\delta}$, and $\left< f \right>_{(0,x]} = 2 + 2 \log \frac{1}{x}$ for $x \in (0,1]$. Thus, 
	\begin{align*}
		\mathcal{X}_{\delta} (\lambda) = \left\{ x\in (0,1]: \log x \le 1 + \frac{\lambda}{2} + \log \delta \right\}. 
	\end{align*}
	By a substitution of $ \lambda \leftarrow (1 + 2 \alpha) \log ( M_d \lambda^d / \eta) $, and using $d = 1$ and $M_d = 2$, we have 
	\begin{align} 
		\mathcal{X}_{\delta} ( (1 + 2\alpha)  \log ( 2 \lambda / \eta)) = \left\{ x \in (0,1]: x \le  e \left( \frac{ 2 \lambda }{\eta} \right)^{\frac{1 + 2 \alpha}{2} } \delta  \right\}.   \label{eq:chi-delta}
	\end{align} 
	Consider the first $t$ ($t \le T$) step $\delta$-regret. 
	For simplicity, let $t = T^\beta$ for some $\beta < 1$, $|\mathcal{Q}_t| = t$, $\eta = \frac{1}{ T^{4} }$ and $ \delta = \frac{2 t}{T^4} = 2 T^{\beta - 4} $. We can do this since any $\delta >0$ is $f$-admissible. Next we will study the zooming number $\widetilde{N}_{\delta,\eta,\alpha}$ under this setting. 
	Back to (\ref{eq:chi-delta}) with the above numbers, 
	\begin{align*} 
		\mathcal{X}_{\delta} ( (1 + 2 \alpha) \log ( 2 \lambda / \eta)) = \left\{ x \in (0,1]: x \le 2^{ \frac{ 3 + 2 \alpha }{2} } e \cdot   \lambda^{\frac{1 + 2 \alpha}{2}} T^{4 \alpha + \beta - 2} \right\}. 
	\end{align*} 
	As an example, we take $\alpha = \frac{1}{4}$ and $\beta = \frac{1}{2}$, which gives $ \mathcal{X}_{\delta} ( (1 + 2 \alpha) \log ( 2 \lambda / \eta)) = \left\{ x \in (0,1]: x \le 2^{ 7/4 } e  \lambda^{3/4} T^{-1/2} \right\} $. By the choice of $ (\delta, \eta, \alpha) $ and the claim above, for $T $ large enough ($ T \ge 20 $ is sufficient), we have 
	\begin{align}
	    \mu_{\min} &\gtrsim 2^{ - \frac{ \Psi  \sqrt{2 \log (2T^2 / \epsilon)} }{ \alpha \log 2 } } \nonumber \\
	    &\gtrsim 2^{ - (\log T)^2 } \nonumber \\
	    &\gtrsim T^{-2}, \label{eq:min-cube-measure-app}
	\end{align}
	where the last step uses $ T^{-2} \le 2^{ - (\log T)^2 } $ for $ T \ge 20 $. To bound $\widetilde{N}_{\delta,\eta,\alpha}'$, we consider the following two cases.
	
	\textbf{Case I:} $ 2^{{7}/{4}} e \cdot   \lambda^{3/4} T^{-1/2} \le 1 $, i.e., $\lambda \lesssim T^{2/3}$. In this case, we need to use intervals of length $ \lambda $ to cover $\left( 0, 2^{{7}/{4}} e \cdot \lambda^{3/4} T^{-1/2} \right]$. We need $ \mathcal{O} \left(  \lambda^{-1/4} T^{-1/2} \right) $ intervals to cover it, which is at most $ \mathcal{O} \left( 1 \right) $, since $\lambda \gtrsim T^{-2} $ by (\ref{eq:min-cube-measure-app}). 
	
	\textbf{Case II:} $ 2^{{7}/{4}} e \cdot   \lambda^{\frac{3}{4}} T^{-\frac{1}{2}} > 1 $, i.e., $\lambda \gtrsim  T^{2/3} $. In this case, we need to use intervals of length $ \lambda $ to cover $\left( 0, 1 \right]$. We need $ \mathcal{O} \left( \lambda^{ - 1 } \right) $ intervals to cover it, which is at most $ \mathcal{O} \left( 1 \right) $, since $\lambda \gtrsim  T^{2/3} \ge 1 $. 
	
	
	In either case, we have $\widetilde{N}_{\delta,\eta,\alpha}' = \mathcal{O} (1)$. Plugging back into Theorem \ref{thm:z} gives, with high probability, for the first $ t = \sqrt{T} $ steps, the $\delta$-regret ($\delta = \mathcal{O} (1/T)$) is of order $ poly\text{-}log( T) $, which is $ poly\text{-}log( t) $ since $T = t^2$. 

\section{Additional Proof: Proof of Lemma \ref{lem:point-scattering}}
\label{app:point-scattering-gp} 
The proof is due to Lemma 1 by \citet{tianyu2019}. We present the proof for completeness. 
 \begin{lemman}[\ref{lem:point-scattering}] 
We say a partition $\mathcal{Q}$ is finer than a partition $\mathcal{Q}'$ if for any $q \in \mathcal{Q}$, there exists $q' \in \mathcal{Q}'$ such that $q \subset q' $.
For an arbitrary sequence of points $x_1, x_2, \cdots$ in a space $\mathcal{X}$, and a sequence of partitions $\mathcal{Q}_1, \mathcal{Q}_2, \cdots $ of the space $\mathcal{X}$ such that $\mathcal{Q}_{i+1}$ is finer than $\mathcal{Q}_{i}$ for all $i$, we have, for any $T$,
\begin{align}
    \sum_{t =  1 }^{T}  \frac{1}{ \widetilde{n}_{t } (Q_{t }  ) } &\le e | \mathcal{Q}_T |  \log \left( 1 + (e - 1) \frac{T }{  | \mathcal{Q}_T |  } \right) , \label{eq:point-scattering-gp-in-app} 
\end{align}
where $\widetilde{n}_{t}$ is defined in (\ref{eq:count}) (using points $x_1, x_2, \cdots$), and $|\mathcal{Q}_t|$ is the cardinality of partition $\mathcal{Q}_t$. 
\end{lemman}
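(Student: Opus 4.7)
The plan is to exploit the nested structure of the partitions together with the (implicit, algorithmic) property that each point $x_t$ lies in the chosen cube $q_t$; this is how the lemma is applied in the main text, where $A_t$ is drawn uniformly inside $Q_t$. Under this condition one can charge the left-hand side to the finest partition $\mathcal{Q}_T$ and reduce to a harmonic-type estimate in each terminal cube.

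First, I would group the time indices by the cube of $\mathcal{Q}_T$ that contains the point. For each $q^\star \in \mathcal{Q}_T$, set $S_{q^\star} = \{ t \le T : x_t \in q^\star \}$ and $T_{q^\star} = |S_{q^\star}|$, so $\sum_{q^\star} T_{q^\star} = T$. For any $t \in S_{q^\star}$, the refinement property together with $x_t \in q^\star \cap q_t$ forces $q^\star \subseteq q_t$ (two cubes in nested partitions that share a point must be comparable). Consequently $n_t(q_t) \ge n_t(q^\star)$ and hence $\widetilde{n}_t(q_t) \ge \widetilde{n}_t(q^\star)$, reducing the problem to bounding
\[
\sum_{q^\star \in \mathcal{Q}_T} \sum_{t \in S_{q^\star}} \frac{1}{\widetilde{n}_t(q^\star)} .
\]

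Second, I would evaluate the inner sum exactly. Enumerating $S_{q^\star}$ as $t_1 < t_2 < \cdots < t_{T_{q^\star}}$, observe that no $x_i$ with $i \notin S_{q^\star}$ lies in $q^\star$, so $\widetilde{n}_{t_j}(q^\star) = \max(1, j-1)$. The inner sum is therefore $1 + H_{T_{q^\star} - 1}$, which is at most $1 + \log T_{q^\star}$ for $T_{q^\star} \ge 1$ (and the cubes with $T_{q^\star} = 0$ do not contribute at all).

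Third, I would convert $\sum_{q^\star : T_{q^\star} \ge 1} (1 + \log T_{q^\star})$ into the target form. An elementary inequality of the type $1 + \log y \le e \log\bigl(1 + (e-1) y\bigr)$ for $y \ge 1$, verified by checking $y = 1$ (both sides equal $1$ and $e$ respectively) and comparing derivatives, gives the pointwise bound $e \log(1 + (e-1) T_{q^\star})$. Applying Jensen's inequality to the concave function $y \mapsto \log(1 + (e-1) y)$ then yields $e |\mathcal{Q}_T'| \log\bigl(1 + (e-1) T / |\mathcal{Q}_T'| \bigr)$, where $\mathcal{Q}_T' = \{ q^\star : T_{q^\star} \ge 1 \}$. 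A final monotonicity check, namely that $K \mapsto K \log(1 + c T / K)$ is nondecreasing in $K$ (its derivative equals $\log(1+u) - u/(1+u) \ge 0$ with $u = cT/K$), allows me to enlarge $|\mathcal{Q}_T'|$ up to $|\mathcal{Q}_T|$ and recover the claimed bound.

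The main obstacle is not the harmonic-style estimate itself but fitting it into the particular concave envelope $e |\mathcal{Q}_T| \log(1 + (e-1) T / |\mathcal{Q}_T|)$; the elementary inequality and the Jensen step, applied in this order, are what pin down the constants. A secondary technicality is confirming that $x_t \in q_t$ at the point of application, but this is automatic in the algorithm.
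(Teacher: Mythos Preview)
Your approach is sound and genuinely different from the paper's, but one small slip deserves flagging. In your second step you assert $1 + H_{T_{q^\star}-1} \le 1 + \log T_{q^\star}$, i.e.\ $H_{n-1} \le \log n$; this already fails at $n=2$ ($H_1 = 1 > \log 2$). The correct elementary bound is $H_{m} \le 1 + \log m$, so $1 + H_{n-1} \le 2 + \log n$ for all $n \ge 1$. This does not damage the argument: the envelope $2 + \log y \le e\log\bigl(1+(e-1)y\bigr)$ still holds for every $y \ge 1$ (at $y=1$ one has $2 \le e$, and your derivative comparison is unchanged), so the Jensen and monotonicity steps go through verbatim with the corrected constant.

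The paper takes a much more indirect route. For each $T$ it constructs a hypothetical noisy Gaussian process on $\mathcal{X}$ whose kernel is the indicator that two points lie in the same cell of $\mathcal{Q}_T$; the posterior variance at $x_t$ then equals $(1+s^{-2}n^0_{T,t-1}(x_t))^{-1}$. An entropy identity converts $\sum_t \log(1+s^{-2}\sigma^2_{T,t-1}(x_t))$ into $\log\det(s^{-2}K+I)$, the block-diagonal determinant is bounded via the AM--GM inequality over block sizes, and finally the noise level is optimised to $s^{-2}=e-1$, which is exactly where the constants $e$ and $e-1$ in the statement originate. Your argument reaches the same constants through the scalar inequality and Jensen, bypassing the GP machinery entirely; it is shorter and more transparent, while the paper's version has the virtue of tying the bound to an information-gain quantity. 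Both proofs rest on the hypothesis you correctly isolate, $x_t \in q_t$: the appendix uses it tacitly when it writes $\widetilde{n}_t(x_t)$ and invokes the nesting $n^0_{t,t-1}(x_t) \ge n^0_{T,t-1}(x_t)$, so your reading of the lemma is the intended one.
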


 We use a constructive trick to derive (\ref{eq:point-scattering-gp-in-app}). 
 For each $T$, we construct a hypothetical noisy degenerate Gaussian process. We are not assuming our payoffs are drawn from these Gaussian processes. We only use these Gaussian processes as a proof tool.
 To construct these noisy degenerate Gaussian processes, we define the kernel functions $k_T: \mathcal{A} \times \mathcal{A} \rightarrow \mathbb{R}$ with respect to the partition $\mathcal{Q}_T$, 
 \begin{align}
 k_T (x, x^\prime) = \begin{cases} 1, \quad \text{if } x \text{ and }  x' \text{ are contained in the same element of } \mathcal{Q}_T, \\ 0, \quad \text{otherwise.}  \label{eq:tree-kernel}  \end{cases} 
 \end{align}
 The kernel $k_T$ is positive semi-definite as shown in Proposition \ref{prop}.
 \begin{proposition} \label{prop}
 The kernel defined in (\ref{eq:tree-kernel}) is positive semi-definite for any $T \ge 1$.
 \end{proposition}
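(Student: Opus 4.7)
The plan is to exhibit the kernel as an explicit inner product in a finite-dimensional feature space, from which positive semi-definiteness is immediate. First I would index the cells of the partition $\mathcal{Q}_T$ by $q \in \mathcal{Q}_T$ and, for each cell, define the indicator feature map $\phi_q : \mathcal{X} \to \{0,1\}$ by $\phi_q(x) = \mathbb{I}[x \in q]$. Stacking these features gives a map $\Phi : \mathcal{X} \to \mathbb{R}^{|\mathcal{Q}_T|}$, namely $\Phi(x) = (\phi_q(x))_{q \in \mathcal{Q}_T}$.

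Next, I would verify the identity
\begin{equation*}
k_T(x,x') \;=\; \sum_{q \in \mathcal{Q}_T} \phi_q(x)\, \phi_q(x') \;=\; \langle \Phi(x), \Phi(x') \rangle.
\end{equation*}
This holds because $\mathcal{Q}_T$ is a partition: every point lies in exactly one cell. Hence if $x$ and $x'$ belong to the same cell $q^\star$, then $\phi_{q^\star}(x)\phi_{q^\star}(x') = 1$ and all other summands vanish, yielding $1$; if they belong to different cells, then for every $q$ at least one of $\phi_q(x),\phi_q(x')$ is $0$, yielding $0$. This matches the definition (\ref{eq:tree-kernel}) exactly.

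Finally, positive semi-definiteness follows in one line from the feature-map representation: for any finite collection of points $z_1,\ldots,z_n \in \mathcal{X}$ and real coefficients $c_1,\ldots,c_n$,
\begin{equation*}
\sum_{i,j=1}^n c_i c_j\, k_T(z_i, z_j) \;=\; \sum_{q \in \mathcal{Q}_T} \Bigl( \sum_{i=1}^n c_i\, \phi_q(z_i) \Bigr)^{\!2} \;\ge\; 0.
\end{equation*}
There is no real obstacle here; the only subtlety worth flagging is the use of the partition property (disjointness plus covering) to collapse the double sum into the simple rule in (\ref{eq:tree-kernel}). The argument also makes no use of $T$ beyond the fact that $\mathcal{Q}_T$ is a partition, so it applies uniformly to every $T \ge 1$.
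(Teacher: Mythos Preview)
Your proof is correct and is essentially the same argument as the paper's: both reduce the quadratic form $\sum_{i,j} c_i c_j k_T(z_i,z_j)$ to a sum of squares indexed by the cells of the partition, $\sum_{q}\bigl(\sum_{i: z_i\in q} c_i\bigr)^2 \ge 0$. The only cosmetic difference is framing---you package the computation via an explicit feature map $\Phi(x)=(\mathbb{I}[x\in q])_{q\in\mathcal{Q}_T}$, while the paper phrases it as the block-diagonal structure of the Gram matrix after permuting points by cell membership.
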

 \begin{proof} 
 For any $x_1, \dots, x_n$ in where the kernel $k_T (\cdot, \cdot)$ is defined, the Gram matrix $ K = \begin{bmatrix}  k_T (x_i, x_j) \end{bmatrix}_{n \times n} $ can be written into block diagonal form where diagonal blocks are all-one matrices and off-diagonal blocks are all zeros with proper permutations of rows and columns. Thus without loss of generality, for any vector $\bm{v} = [v_1, v_2, \dots, v_n] \in \mathbb{R}^n$, $\bm{v}^\top K \bm{v} = \sum_{b = 1}^B \left( \sum_{j:i_j \text{ in block }b } v_{i_j} \right)^2 \ge 0$ where the first summation is taken over all diagonal blocks and $B$ is the total number of diagonal blocks in the Gram matrix.
 \end{proof}
 
 Pick any $t$ and $T$. For a sequence of points $x_1, x_2, \cdots, x_t$ up to time $t$, and the partition $\mathcal{Q}_T$, define 
 \begin{align}
 	 n_{T,t}^0 (x) := \sum_{t'=1}^t \mathbb{I}_{ \left[ x_{t'} \text{ and } x \text{ are in the same element of } \mathcal{Q}_T \right] }. 
 \end{align}
 In particular, going back to the definition in (\ref{eq:count}), we have 
 \begin{align}
 	n_{t,t-1}^0 (x) = n_{t} (x) \label{eq:connect-n0n}
 \end{align}
 for any $x$ and $t$. 
 
 
 Now, at any time $T$, let us consider the model $\tilde{y}(x) = g(x) + \mathfrak{e} $ where $g$ is drawn from a Gaussian process $g \sim \mathcal{GP} \left( 0 , k_T (\cdot, \cdot) \right)$ and $\mathfrak{e} \sim \mathcal{N} (0, s_T^2)$ is the noise. Suppose that the arms and hypothetical payoffs $(x_1, \tilde{y}_1, x_2, \tilde{y}_2, \dots, x_t, \tilde{y}_t)$  are observed from this Gaussian process. The posterior variance for this Gaussian process after the observations at $x_1, x_2, \dots, x_t$ is 
 \begin{align*} 
 \sigma^2_{T, t} (x) = k_T (x,x) - \bm{k}^T (K + s^2_T I)^{-1} \bm{k} 
 \end{align*} 
 where $\bm{k} = [ k_T (x, x_1), \dots, k_T (x, x_t) ]^\top$, $K = [ k_T (x_i, x_j) ]_{t \times t}$ and $I$ is the identity matrix. In other words,  $\sigma^2_{T,t} (x)$ is the posterior variance using points up to time $t$ with the kernel defined by the partition at time $T$.

 After some matrix manipulation, we know that 
 \begin{align*} 
 \sigma^2_{T,t} (x) = 1 - \bm{1} [ \bm{1} \bm{1}^\top + s^2_T I ]^{-1} \bm{1},
 \end{align*} 
 where $\bm{1} = [1,\cdots,1]_{1 \times {n^0_{T,t} (x)} }^\top$.
 By the Sherman-Morrison formula, $[ \bm{1} \bm{1}^\top + s^2_T I ]^{-1} = s^{-2}_T I - \frac{ s^{-4}_T \bm{1} \bm{1}^\top }{ 1 + s^{-2}_T n^0_{T,t} (x) }$. Thus the posterior variance is 
 \begin{align}
 	\sigma^2_{T,t} (x) = \frac{1}{1 + s^{-2}_T n^0_{T,t} (x)}.  \label{eq:posterior}
 \end{align}

 Following the arguments in \cite{srinivas2009gaussian}, we derive the following results. Since $\bm{x}_t$ is deterministic, $H (\tilde{\bm{y}}_t , \bm{x}_t ) = H ( \tilde{\bm{y}}_t )  $. Since, by definition of a Gaussian process, $\tilde{\bm{y}}_t$ follows a multivariate Gaussian distribution, 
 \begin{align} 
 H (\tilde{\bm{y}}_t ) = \frac{1}{2} \log \left[ (2 \pi e )^t \det \left( K + s_T^2 I \right) \right]  \label{eq:entropy-LHS} 
 \end{align} 
 where $K = \begin{bmatrix}  k_T (x_i, x_j) \end{bmatrix}_{t \times t}$. On the other hand, we can recursively compute $H ( \tilde{\bm{y}}_t)$ by
 \begin{align} 
 H ( \tilde{\bm{y}}_t) &= H( \tilde{y}_t | \tilde{ \bm{y} }_{t-1} ) + H( \tilde{ \bm{y} }_{t-1} ) \nonumber \\ 
 &= H( \tilde{y}_t | x_t, \tilde{ \bm{y} }_{t-1}, \bm{x}_{t-1} ) + H( \tilde{ \bm{y} }_{t-1} ) \nonumber \\ 
 &=  \frac{1}{2} \log \left( 2 \pi e \left( s_T^2 + \sigma_{T,t-1}^2 (x_t) \right) \right) + H( \tilde{ \bm{y} }_{t-1} ) \nonumber \\ 
 &= \frac{1}{2} \sum_{\tau =1}^t \log \left( 2 \pi e \left( s_T^2 + \sigma_{T, \tau -1}^2 (x_\tau ) \right) \right) \label{eq:entropy-RHS} 
 \end{align} 
 By (\ref{eq:entropy-LHS}) and (\ref{eq:entropy-RHS}), 
 \begin{align} 
 \sum_{\tau=1}^t \log \left( 1 + s^{-2} \sigma_{ T, \tau -1}^2 (x_\tau ) \right)  = \log \left[ \det \left( s^{-2} K + I \right) \right].  \label{eq:sum-to-log-det}
 \end{align}  
 For the block diagonal matrix $K$ of size $t \times t$, let $h_i$ denote the size of block $i$ and $B^\prime$ be the total number of diagonal blocks. Then we have
 \begin{align}
 \det \left( s^{-2} K + I \right) &= \prod_{i = 1}^{B^\prime } \det \left( s^{-2} \bm{1}_{h_i} \bm{1}_{h_i}^\top + I_{h_i \times h_i} \right) \nonumber \\
 &= \prod_{i = 1}^{B^\prime }  \left( 1 + s^{-2} h_i \right) \label{eq:deter-lemma} \\
 &\le \left( 1 + \frac{s^{-2} t}{| \mathcal{P}_T |} \right)^{| \mathcal{P}_T |} \label{eq:bound-on-det}
 \end{align}
 where (\ref{eq:deter-lemma}) is due to the matrix determinant lemma and the last inequality is that the geometric mean is no larger than the arithmetic mean and that $|\mathcal{P}_t| \ge B^\prime$. 
 Therefore, 
 \begin{align*} 
 \sum_{\tau=1}^t \log   \left( 1 + s^{-2} \sigma_{ T, \tau -1}^2 (x_\tau ) \right)  \le | \mathcal{P}_T | \log \left( 1 + \frac{s^{-2} t }{ | \mathcal{P}_T | } \right).
 \end{align*}

 Since the function $h(\lambda) = \frac{\lambda}{\log (1 + \lambda)}$ is increasing for non-negative $\lambda$, 
 $$\lambda \le \frac{s^{-2}_T }{ \log ( 1 + s^{-2}_T ) } \log (1 + \lambda)$$ 
 for $\lambda \in [ 0, s^{-2}_T ]$. 
 Since $\sigma_{T,t} (x) \in [0, 1]$ for all $x$, 
 \begin{align}
 	 \sigma_{T, t}^2 (x) \le \frac{1}{ \log (1 + s^{-2}_T ) } \log \left( 1 + s^{-2}_T \sigma_{T, t}^2 (x) \right) \label{eq:sigma-to-logsigma}
 \end{align}
 for $t, T = 0, 1, 2, \cdots$. 
 Since the partitions grow finer, for $T_1 \le T_2$, we have
 \begin{align}
 	n_{T_1, t} (x) \ge n_{T_2, t} (x), \quad \forall x. \label{eq:from-finer-partition}
 \end{align}
 
 This gives $\sigma_{T_1, t}^2 (x) \le \sigma_{T_2, t}^2 (x)$. 
 Suppose we query at points $x_{ 1 }, \cdots, x_T$ in the Gaussian process $\mathcal{GP} (0, k_T ( \cdot, \cdot ) )$. Then, 
 \begin{align} 
 \sum_{t =  1 }^{T}  \frac{1}{\widetilde{n}_{t} (x_t)}  &\le \sum_{t =  1 }^{T} \frac{1 + s^{-2}_T }{ 1 + s^{-2}_T {n}_{t} (x_t)}  \nonumber \\
 &= \sum_{t =  1 }^{T}  \frac{1 + s^{-2}_T }{ 1 + s^{-2}_T n^0_{t, t-1} (x_t)}  \label{eq:use-connection} \\
 &\le \sum_{t =  1 }^{T}  \frac{1 + s^{-2}_T }{ 1 + s^{-2}_T n^0_{T, t-1} (x_t)}  \label{eq:use-T2t} \\
 &\le  \left( 1 + s^{-2}_T \right) \sum_{t =  1 }^{T} \sigma^2_{T, t-1} (x_t)  \label{eq:use-posterior} \\
 &\le  \frac{ 1 + s^{-2}_T }{ \log ( 1 + s^{-2}_T ) } \sum_{t = 1 }^{T} \log \left( 1 + s^{-2}_T \sigma_{T, t-1}^2 (x_t) \right)  \label{eq:use-sigma-to-logsigma} \\ 
 &\le  \frac{ 1 + s^{-2}_T }{ \log ( 1 + s^{-2}_T ) } | \mathcal{Q}_T |  \log \left( 1 + s^{-2}_T \frac{T }{  | \mathcal{Q}_T |  } \right)  \label{eq:lemma4},
\end{align} 
where (\ref{eq:use-connection}) uses (\ref{eq:connect-n0n}), (\ref{eq:use-T2t}) uses (\ref{eq:from-finer-partition}), (\ref{eq:use-posterior}) uses (\ref{eq:posterior}), (\ref{eq:use-sigma-to-logsigma}) uses (\ref{eq:sigma-to-logsigma}), and (\ref{eq:lemma4}) uses (\ref{eq:sum-to-log-det}) and (\ref{eq:bound-on-det}). 
 
Finally, we optimize over $s_T$. Since $s_T^{-2} = e - 1$ minimizes $\frac{ 1 + s^{-2}_T }{ \log ( 1 + s^{-2}_T ) }$, (\ref{eq:lemma4}) gives
\begin{align*}
    \sum_{t =  1 }^{T} \frac{1}{ \widetilde{n}_{t} (x_t)} &\le e | \mathcal{P}_T | \log \left( 1 + ( e - 1 ) \frac{ T }{ | \mathcal{P}_T | } \right) . 
\end{align*}

\section{Additional Proof: Proof of Theorem \ref{thm:john-nirenberg}}
\label{app:jn}

In this part, we provide a proof to the John-Nirenberg inequality (Theorem \ref{thm:john-nirenberg}). Proofs to the John-Nirenberg inequality can be found in many textbooks on BMO functions or harmonic analysis \citep[e.g.,][]{stein1993harmonic}. Here, we present a proof by \citet{jnproof} for completeness. 


\begin{theoremn}[\ref{thm:john-nirenberg}](John-Nirenberg inequality) 
    Let $\mu$ be the Lebesgue measure.
    Let $f\in BMO \left(\mathbb{R}^{d}, \mu \right)$. Then there exists constants $C_1$ and $C_2$, such that, for any hypercube $Q \subset \mathbb{R}^{d}$ and any $\lambda > 0$, 
\begin{align*}
    &\mu \left( \left\{ x\in Q:\left|f\left(x\right)- \left<f \right>_{Q}\right|  \lambda\right\} \right) \le C_1 { \mu ( Q ) } \exp \left\{-\frac{ \lambda}{ C_2 \|f\| } \right\}. 
\end{align*}
\end{theoremn}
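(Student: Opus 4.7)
The plan is to prove the inequality by a Calderón–Zygmund stopping-time argument applied iteratively. After normalizing so that $\|f\|=1$ (by scaling $f$ and $\lambda$), define the dimensionless quantity
\begin{equation*}
F(\lambda) \;:=\; \sup_{Q}\, \frac{\mu\bigl(\{x\in Q : |f(x)-\langle f\rangle_Q|>\lambda\}\bigr)}{\mu(Q)},
\end{equation*}
where the supremum runs over all hypercubes $Q\subset\mathbb{R}^d$. By Chebyshev's inequality and the BMO property, $F(\lambda)\le 1/\lambda$, so $F$ is finite and it suffices to show that $F(\lambda)\le C_1 \exp(-\lambda/C_2)$ for some absolute constants, since the claim on $Q$ then follows by multiplying through by $\mu(Q)$.

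First I would fix a cube $Q$ and apply a Calderón–Zygmund decomposition to $g:=f-\langle f\rangle_Q$ at a height $\alpha>1$ to be chosen later. Starting from $Q$ and bisecting dyadically, select the maximal sub-cubes $\{Q_j\}$ for which
\begin{equation*}
\frac{1}{\mu(Q_j)}\int_{Q_j}|g|\,d\mu \;>\;\alpha.
\end{equation*}
The standard properties of this decomposition give: (i) the $Q_j$ are pairwise disjoint; (ii) $|g(x)|\le\alpha$ for a.e.\ $x\in Q\setminus\bigcup_j Q_j$; (iii) for each $j$, the \emph{parent} $\widetilde Q_j$ failed the selection criterion, hence $\frac{1}{\mu(Q_j)}\int_{Q_j}|g|\,d\mu \le 2^d\alpha$; and (iv) summing the defining inequality yields $\sum_j \mu(Q_j) \le \alpha^{-1}\int_Q |g|\,d\mu\le \mu(Q)/\alpha$, where the last step uses $\|f\|=1$.

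Next I would set up the key recursion. Property (iii) implies $|\langle g\rangle_{Q_j}|\le 2^d\alpha$, and property (ii) shows $\{|g|>\lambda\}\subset\bigcup_j Q_j$ whenever $\lambda>\alpha$. Therefore, for $\lambda > 2^d\alpha$,
\begin{equation*}
\mu\bigl(\{x\in Q : |g(x)|>\lambda\}\bigr) \;\le\; \sum_j \mu\bigl(\{x\in Q_j : |g(x)-\langle g\rangle_{Q_j}|>\lambda - 2^d\alpha\}\bigr) \;\le\; F(\lambda - 2^d\alpha)\,\sum_j \mu(Q_j).
\end{equation*}
Dividing by $\mu(Q)$ and taking the supremum over $Q$, property (iv) gives
\begin{equation*}
F(\lambda) \;\le\; \frac{1}{\alpha}\,F(\lambda - 2^d\alpha).
\end{equation*}
Choosing $\alpha=e$ and writing $c:=2^d e$, I would iterate this bound: for any $\lambda\ge 0$, write $\lambda = kc+r$ with $k=\lfloor\lambda/c\rfloor$ and $0\le r<c$, obtaining $F(\lambda)\le e^{-k}F(r)\le e^{-k}\le e\cdot e^{-\lambda/c}$. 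Undoing the normalization $\|f\|=1$ yields the stated inequality with $C_1=e$ and $C_2=2^d e$.

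I expect the main obstacle to be the careful verification of the Calderón–Zygmund selection step, in particular property (iii) which yields the crucial doubling-by-$2^d$ bound on $\langle|g|\rangle_{Q_j}$; this is where the dimension dependence enters $C_2$. The rest of the argument (bounding the good set, summing the measures of the selected cubes, and iterating the recursion) is essentially bookkeeping once the decomposition is in hand.
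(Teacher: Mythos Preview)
Your proposal is correct and follows essentially the same approach as the paper's proof: both normalize to $\|f\|=1$, define the same supremum quantity (your $F$ is the paper's $\varphi$), run a dyadic stopping-time/Calder\'on--Zygmund decomposition to obtain the recursion $F(\lambda)\le \alpha^{-1}F(\lambda-2^d\alpha)$, choose $\alpha=e$, and iterate to reach $C_1=e$, $C_2=2^d e$. The only cosmetic difference is that the paper phrases property~(ii) via a Hardy--Littlewood-type dyadic maximal function, whereas you invoke the Lebesgue differentiation theorem implicitly; the content is identical.
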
 

\begin{proof}
The proof uses dyadic decomposition. 
By scaling, without loss of generality, we assume $\|f\| = 1$. Recall that $\mu$ is the Lebesgue measure. For a cube $Q \subset \mathbb{R}^d$, and $\omega > 0$, define
\begin{align} E\left(Q,\omega\right) & =
\left\{ 
x\in Q:
| f\left(x\right)-\left<f\right>_{Q} | > \omega
\right\} ,\\ 
\varphi\left( \omega \right) 
& = \sup_{Q}\frac{ \mu \left( E\left(Q,\omega \right) \right) }{ \mu ( Q ) } \label{eq:jn-def-phi}
\end{align}

We want to show that $\varphi\left(\omega\right) \lesssim e^{-\frac{\omega}{c}}$. First take $\omega>e>1$. Then
\begin{align*}
    \frac{1}{\mu (Q)} \int_{Q} \left|f- \left< f \right>_{Q}\right| d\mu \le \| f \|= 1 \le \omega
\end{align*}
for any Q. 
Subdivide Q dyadically and stop when 
\begin{align}
    \frac{1}{\mu(Q^\prime)} \int_{Q^{\prime}} \left|f- \left<f \right>_{Q^\prime}\right|>\omega.  \label{eq:jn-def-Qj}
\end{align}
Collect all such cubes ($Q^\prime$) to form a set $\mathcal{Q}=\left\{ Q_{j}\right\} _{j}. $ Note that the cubes in $\mathcal{Q}$ are disjoint. 
It could be $\mathcal{Q}=\varnothing$. 
Note that $\mathcal{Q}\subset\mathbb{D}_{Q}\setminus\left\{ Q\right\},$ 
where $\mathbb{D}_{Q}$ denotes the family of all dyadic cubes of $Q$.

Now we introduce the following Hardy–Littlewood type maximum $M_{Q}$, such that for a BMO function $g$, 
\begin{align}
    M_{Q}g (x) 
    :=\sup_{Q^{\prime}\in\mathbb{D}_{Q}, Q^\prime \ni x } \frac{1}{\mu (Q^\prime)} \int_{Q^\prime} g d\mu.
\end{align} 
Take $g=\left|f- \left<f \right>_{Q}\right|.$ 
Then by definition of $Q_{j}$, we have
\begin{align}
    \left\{ x \in Q:M_{Q} g (x)
    >\omega \right\} =\bigcup_{Q_{j}\in\mathcal{Q}}Q_{j}. 
\end{align}
For almost every $x\in E \left( Q,\omega \right)$, we have
\begin{align}
    \omega <\left|f-\left<f\right>_{Q}\right|=g\left(x\right)\le M_{Q}g \left(x\right). 
\end{align} 
So
\begin{align} 
E\left(Q,\omega\right) \subset \bigcup_{Q_{j}\in\mathcal{Q}}Q_{j}\quad\text{ almost everywhere}. \label{eq:jn-ae}
\end{align}

Let $\tilde{Q}_{j}$ be a parent cube of $Q_{j}$. Then since $ \mu (\tilde{Q}_j) = 2^d \mu (Q_j) $, \begin{align} \omega &< \int_{Q_{j}}
\left|f- \left< f \right>_{Q}\right| d\mu \\
&\le \frac{ \mu( \tilde{Q}_{j} ) }{ \mu( Q_{j} ) }\int_{\tilde{Q}_{j}}\left|f- \left<f \right>_{Q}\right| d\mu
\le 2^d \omega. \label{eq:jn-parent-cube}
\end{align}

Thus, for $Q_j \in \mathcal{Q}$, 
\begin{align} 
    \left|f\left(x\right)- \left< f \right>_{Q}\right| & \le \left|f\left(x\right)- \left< f \right>_{Q_{j}}\right|+\left| \left< f \right>_{Q_{j}}- \left< f \right>_{Q}\right|\\ 
    & \le \left|f\left(x\right)- \left< f \right> _{Q_{j}}\right|+\int_{Q_{j}}\left|f- \left< f \right>_{Q}\right| d\mu \\ 
    & \le\left|f\left(x\right)- \left< f \right>_{Q_{j}}\right| + 2^{d} \omega. 
\end{align}

Now, pick $ \zeta > 2^{d}\omega $. For $x\in E\left(Q, \zeta \right)$, we have, for $Q_j \in \mathcal{Q}$
\begin{align}
    \zeta < \left|f\left(x\right)- \left< f \right>_{Q}\right| \le
    \left|f\left(x\right)- \left< f \right>_{Q_{j}}\right| +2^{d}\omega . 
\end{align}

Hence for $x \in E(Q, \zeta)$, $\left|f\left(x\right) - \left<f \right>_{Q_{j}}\right|>\zeta -2^{d}\omega$ is necessary when $\left|f\left(x\right) - \left<f \right>_{Q}\right|> \omega.$

Since $\zeta >\omega$, by (\ref{eq:jn-ae}) we have
\begin{align} 
    \mu( E\left(Q,\zeta\right) ) 
    & =\mu (E\left(Q,\zeta \right) \cap E\left(Q, \omega \right) )
    \\ & \le\sum_{j} \mu ( E\left(Q, \zeta \right)\cap Q_{j} ) \label{eq:jn-use} \\ 
    & \le\sum_{j}\frac{ \mu \left( \left\{ x\in Q_{j}:\left|f\left(x\right)- \left< f\ \right>_{Q_{j}}\right|> \zeta -2^{d} \omega \right\} \right) }{ \mu ( Q_{j} ) } \mu( Q_{j} ) \label{eq:jn-use-necessary}\\ 
    & \le\varphi\left(\zeta-2^{d}\lambda\right)\sum_{j}\le \mu( Q_{j} ), \label{eq:jn-link-Ezeta-Qj}
\end{align}
where (\ref{eq:jn-use}) is due to disjointness of $Q_j$ and (\ref{eq:jn-ae}), and (\ref{eq:jn-use-necessary}) uses that, for $x \in E(Q, \zeta)$, 
$$
\left|f\left(x\right) - \left<f \right>_{Q}\right|> \omega
 \quad \Rightarrow \quad
\left|f\left(x\right) - \left<f \right>_{Q_{j}}\right|>\zeta -2^{d}\omega,
$$
as discussed above. 

Then we have
\begin{align} 
    \mu \left( E\left(Q,\zeta \right) \right)  & \le\varphi\left( \zeta -2^{d} \omega\right)\frac{1}{\omega }\sum_{j} \int_{Q_{j}}\left|f- \left< f \right>_{Q} \right| d\mu \label{eq:jn-use-def-Qj} \\ 
    & \le\frac{1}{\omega}  \varphi\left(\zeta-2^{d}\omega\right) \mu( Q ),  \label{eq:use-def-bmo-norm-1}
\end{align}
where we use (\ref{eq:jn-link-Ezeta-Qj}) and (\ref{eq:jn-def-Qj}) for (\ref{eq:jn-use-def-Qj}), and use the definition of BMO functions and $\|f\| = 1$ for (\ref{eq:use-def-bmo-norm-1}). 

Hence for $\zeta>2^{d}\omega$, we obtain
\begin{align}
    \frac{ \mu( E \left( Q, \zeta \right) ) }{\mu ( Q )} \le 
    \frac{1}{\omega}\varphi\left(\zeta-2^{d}\omega\right). 
\end{align}
By taking supremum over $Q$ on the left-hand-on of the above equation, we have
\begin{align}
    \varphi \left(\zeta \right)
    \le
    \frac{\varphi \left( \zeta -2^{d}\omega\right)}{\omega}. \label{eq:jn-de}
\end{align}

Put $\omega=e$. Note that $\varphi\left( \zeta \right) \le 1$ for all $\zeta > 0$ by Definition in (\ref{eq:jn-def-phi}). Then for $0< \zeta \le e\cdot2^{d}$, we have
\begin{align} 
    \varphi\left( \zeta \right) \le e\cdot e^{-\frac{\zeta}{2^{d}e}}.
    \label{eq: jn-lemma}
\end{align} 
The above statement is true by the proof of contradiction. Assume 
\begin{align}
   \varphi\left( \zeta \right) > e\cdot e^{-\frac{\zeta}{2^{d}e}} = e^{1 - \frac{\zeta}{2^{d}e}}
   \label{eq: contradiction-assumption}
\end{align}
Since for all $\zeta > 0$, $\varphi \left( \zeta \right) \le 1$, we have $1 - \frac{\zeta}{2^{d}e} < 0$ always true. This implies 
\begin{align}
    \zeta > e\cdot 2^{d}
\end{align}
This is to say if $\zeta > 0$, then $\zeta > e\cdot 2^d$. Hence (\ref{eq: contradiction-assumption}) implies the domain $\zeta \in (-\infty, 0]\cup (e\cdot2^d, +\infty)$. This shows (\ref{eq: jn-lemma}).


Next, note that 
\begin{align}
    \left(0,\infty\right)=(0,e\cdot2^{d}]\cup\left[\bigcup_{k=1}^{\infty}\left(e\cdot2^{d+k-1},e\cdot2^{d+k}\right]\right].
\end{align}
So for $e\cdot2^{d}< \zeta \le e\cdot2^{d+1}$, $\varphi\left(\zeta \right)\le e\cdot e^{-\frac{\zeta }{2^{d}e}}.$ Since we have, 
\begin{align}
    \varphi\left( \zeta \right)\le\frac{1}{e}\varphi\left(\zeta- e\cdot2^{d }\right),\quad e \cdot2^{d+1} < \zeta \le e \cdot2^{d+1}. 
\end{align}
We see that $\varphi\left(\zeta -e\cdot2^{d}\right)\le e\cdot e^{-\frac{\left(\zeta-2^{d}e\right)}{2^{d}e}}$ for $\zeta >e\cdot2^d.$ Hence, for $e\cdot2^{d}< \zeta <e\cdot2^{d+1},$ we have
  \[ \varphi\left(\zeta \right)\le e\cdot e^{-\frac{\zeta }{2^{d}e}}. \]
Iterate this procedure, and we obtain the desired claim, which is, $\forall \zeta > 0$, 
\begin{align}
    \frac{ \mu ( E\left(Q,\zeta \right) ) }{ \mu ( Q ) }
    \le 
    e\cdot e^{-\frac{\zeta}{2^{d}e}}\quad \text{for every cube }Q. 
\end{align}



\end{proof}

\section{Landscape of Test Functions in Section \ref{sec:exp}} \label{app:exp}

\begin{figure}[h!]
    \centering
    \includegraphics[scale = 0.5]{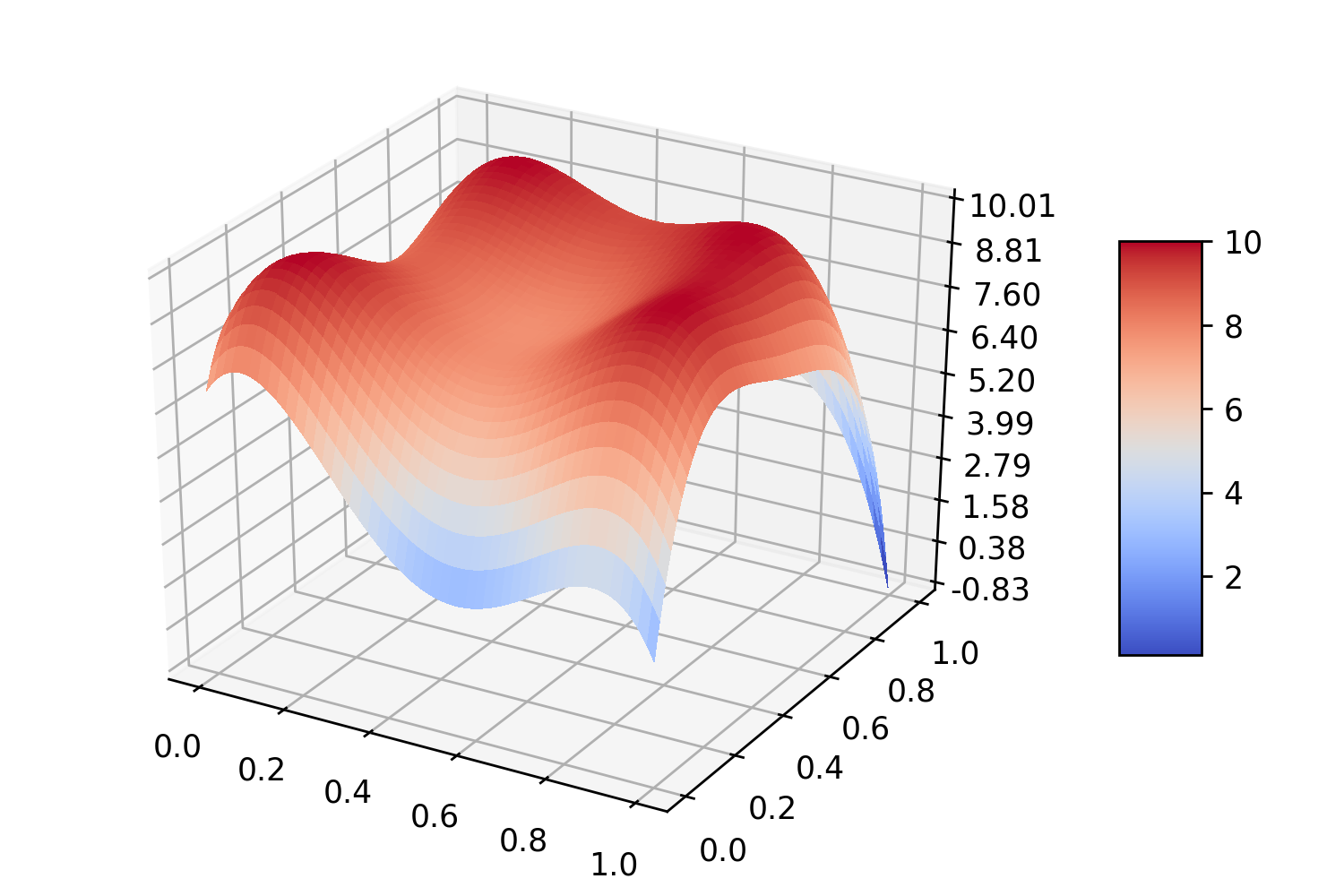} \includegraphics[scale = 0.5]{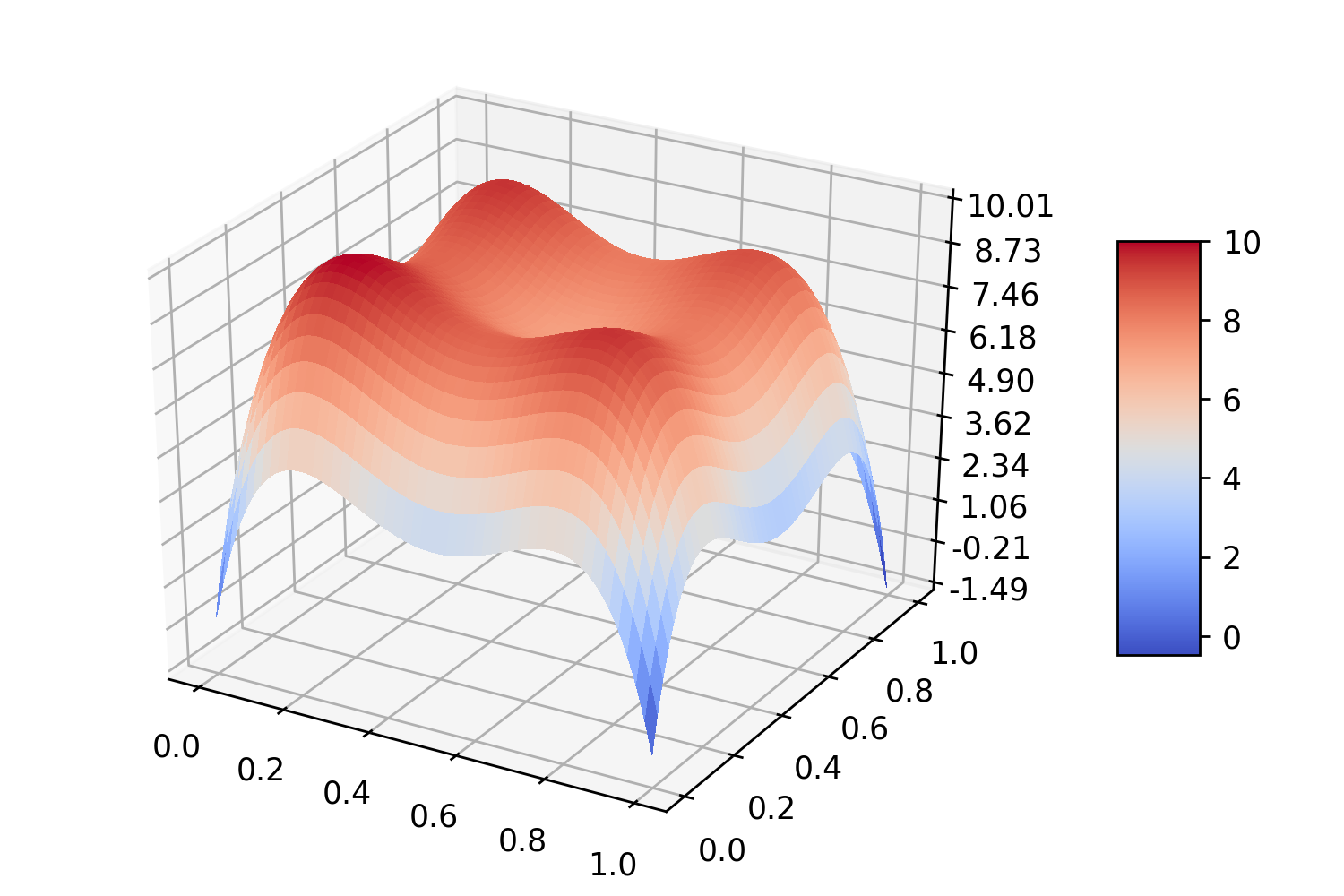}
    \caption{Landscapes of test functions used in Section \ref{sec:exp}. \textit{Left:} (Rescaled)  Himmelblau's function. \textit{Right:} (Rescaled) Styblinski-Tang function. \label{fig:landscape}}
\end{figure}

\end{document}